\numberwithin{equation}{section}
\def\ve{\varepsilon}
\def\tilde{\widetilde}
\def\emp{\emptyset}
\def\ox{\overline{x}}
\def\Hat{\widehat}
\def\Bar{\overline}
\def\ve{\varepsilon}
\def\epsilon{\varepsilon}
\def\ox{\bar{x}}
\def\dn{\downarrow}
\def\O{\Omega}
\def\emp{\emptyset}
\def\lm{\lambda}
\tikzstyle{startstop} = [rectangle, rounded corners, 
\tikzstyle{io} = [trapezium, 
\tikzstyle{process} = [rectangle, 
\tikzstyle{decision} = [diamond, 
\tikzstyle{arrow} = [thick,->,>=stealth]
\theoremstyle{thmstyleone}%
\newtheorem{theorem}{Theorem}[section]
\newtheorem{lemma}[theorem]{Lemma}
\newtheorem{proposition}[theorem]{Proposition}
\theoremstyle{thmstyletwo}%
\theoremstyle{thmstylethree}%
\begin{document}

\title[Article Title]{New Location Science Models with Applications to UAV-Based Disaster Relief}

\author[a]{\fnm{Sina} \sur{Kazemdehbashi}}

\author[a]{\fnm{Yanchao} \sur{Liu}}
	
\author[b]{\fnm{Boris}\sur{S. Mordukhovich}}\affil[a]
{\orgdiv{Department of Industrial and Systems Engineering}, \orgname{Wayne State University}, \city{Detroit}, \postcode{48201}, \state{Michigan}, \country{USA}}

\affil[b]
{\orgdiv{Department of Mathematics and the Institute for AI and Data Science}, \orgname{Wayne State University}, \city{Detroit}, \postcode{48201}, \state{Michigan}, \country{USA}}

\abstract{

\begin{abstract}  
	
Natural and human-made disasters can cause severe devastation and claim thousands of lives worldwide. Therefore, developing efficient methods for disaster response and management is a critical task for relief teams. One of the most essential components of effective response is the rapid collection of information about affected areas, damages, and victims. More data translates into better coordination, faster rescue operations, and ultimately, more lives saved. However, in some disasters, such as earthquakes, the communication infrastructure is often partially or completely destroyed, making it extremely difficult for victims to send distress signals and for rescue teams to locate and assist them in time.
	
{\em Unmanned Aerial Vehicles} (UAVs) have emerged as valuable tools in such scenarios. In particular, a fleet of UAVs can be dispatched from a mobile station to the affected area to facilitate data collection and establish temporary communication networks. Nevertheless, real-world deployment of UAVs faces several challenges, with adverse weather conditions--especially wind--being among the most significant. To address such challenges, we develop a novel mathematical framework to determine the optimal location of a mobile UAV station while explicitly accounting for the heterogeneity of the UAVs and the effect of wind. Our approach extends well-known single-facility location models by incorporating heterogeneous dynamic sets that represent different UAV operating speeds. In particular, we extend the classical Fermat-Torricelli and Sylvester problems to introduce the generalized  \emph{Sylvester-Fermat-Torricelli (SFT) model} and its remarkable specifications that capture complex factors such as wind influence, UAV heterogeneity, and back-and-forth motion within a unified framework. The proposed framework enhances the practicality of UAV-based disaster response planning by accounting for major real-world factors including wind and UAV heterogeneity. Experimental results demonstrate that our developments can reduce wasted operational time by up to 84\% while making post-disaster missions significantly more efficient and effective in practical settings.
\end{abstract}}

\keywords{location science, Facility location problems, Unmanned aerial vehicle (UAV). Generalized Sylvester-Fermat-Torricelli models, Convex analysis, Nonsmooth optimization}

\maketitle
\vspace*{-0.25in}
\section{Introduction}\label{sec:intro}
\subsection{State-of-the-Art and Motivations}

Natural and human-made disasters worldwide can have devastating consequences leading to significant loss of life and extensive destruction, often resulting in substantial economic and social costs. For instance, in 2023, the Emergency Events Database (EM-DAT) recorded 399 disasters worldwide causing 86,473 fatalities and affecting approximately 93.1 million people \parencite{CRED2024}. Given the significant costs associated with disasters, disaster management has become a critical field focused on minimizing these costs. Effective disaster management strategies can be categorized into {\em pre-disaster} activities (prevention and preparedness) and {\em post-disaster} (response and recovery) ones \parencite{erdelj2017wireless}.

In post-disaster actions, one of the most critical tasks is {\em data collection}. This data includes the locations of victims trapped under debris, the extent of road destruction and accessibility, and the damage levels across different parts of the region of interest. In this context, {\em UAVs} offer significant advantages, including rapid area surveying, ease of deployment, low-cost maintenance, and the ability to reach inaccessible locations \parencite{lyu2023unmanned}, making them highly effective for data gathering. Moreover, since natural disasters can damage or render network infrastructure inoperable \parencite{erdelj2017wireless}, UAVs can also serve as a vital tool for establishing temporary communication networks. Therefore, several studies have explored the use of UAVs to provide wireless or Delay-Tolerant Networks (DTNs) in such scenarios \parencite{shakhatreh2019uavs, matracia2021topological, 8490824, erdelj2017wireless,  khan2022emerging,erdelj2017help, wang2021disaster, 8641424}.

While UAVs can be valuable for providing network coverage, their {\em main drawback} is limited energy capacity requiring them to return periodically for recharging \parencite{shakhatreh2019uavs}. Also, unlike other vehicles, UAVs are susceptible to wind due to their lighter weight, smaller size, lower flight altitude, and
lower speed \parencite{Gianfelice2022}. These limitations create significant challenges in deploying UAVs for post-disaster network provisioning. \textcite{erdelj2017wireless} proposed a mobile first-response UAV station with a fleet of heterogeneous UAVs that has a long-distance communication antenna, an electricity generator, and a system for automatic UAV battery recharging. Motivated by this idea, we aim here to develop a {\em reliable mathematical model} for determining the {\em optimal location} of a mobile UAV station, considering UAV {\em heterogeneity} and the impact of {\em wind}, which are common factors in real-world scenarios.

Such a model can be regarded as a {\em single-facility location problem} previously studied in the context of drone-assisted delivery, where drones launch from a truck, deliver parcels, and subsequently return to the truck. There are studies \parencite{mourelo2016optimization,chang2018optimal,salama2020joint, betti2023dispatching} in the literature that examine the location of launch points (location of truck) for drones in a delivery network, where a truck and multiple drones operate in tandem. \textcite{scott2017drone} considered a tandem delivery strategy like \textcite{mourelo2016optimization} to use in the healthcare application. However,  to the best of our knowledge, little attention has been given to addressing the {\em impact of wind on drone performance}. This factor cannot be ignored as it causes delays in delivery time and increases energy consumption.  We also tried to consider the effect of wind in other applications that are search and rescue \parencite{kazemdehbashi2025algorithm}, and package delivery \parencite{liu2023routing}.

The paper by \textcite{dukkanci2024facility} provides a comprehensive review of drone-assisted delivery and highlights future research directions. It emphasizes that weather conditions--particularly wind--should be accurately accounted for in future studies. Moreover, as noted in \textcite{dukkanci2024facility_EJOR}, existing drone-based research has not fully leveraged advancements from non-drone-related studies including the use of heterogeneous fleets. These observations underscore a broader challenge: modern facility location problems in UAV applications demand frameworks that can simultaneously address {\em environmental uncertainties} and fleet {\em heterogeneity}. To bridge this gap, our method explicitly considers a heterogeneous fleet of UAVs and the effect of wind while making it more aligned with real-world applications. This forms yet another aspect of our motivation for this work.

\subsection{Contributions}

To address the challenges discussed in the previous section, namely the effects of wind and UAV heterogeneity, we focus on {\em single-facility models} of {\em location science}. Problems of this type have strong theoretical foundations and have been studied using various techniques. To provide a clear foundation for our approach, we first present a brief overview of the classical problems in this area. The French mathematician Pierre de Fermat (1601–1665) proposed an optimization problem that seeks to identify a point in the plane that minimizes the total distance to three other plane points. This problem was solved by the Italian physicist Augusto Torricelli (1608–1647) and is now known as the \emph{Fermat-Torricelli problem}. In the 19th century, the English mathematician  James Joseph Sylvester (1814–1897) introduced the \emph{smallest enclosing circle problem}, which seeks to determine the smallest circle capable of enclosing a given set of points in a plane \parencite{sylvester1857question}. This problem is formulated as a minimax location problem and has been extended in several ways in the literature. Over the centuries, these  classical problems have continued to attract attention of researchers and practitioners for their mathematical elegance and significant practical applications. Various generalizations of Fermat-Torricelli, Sylvester and related problems (Steinitz, Weber, etc.) have been introduced and studied in location science and practical areas by using different approaches and methods; see, e.g., \parencite{borwein1990,boltyanski1999,kuhn1976,plastria2009asymmetric,wesolowsky1993,weiszfeld37,martini2002fermat}, and the references therein.

More recent years have witnessed the development of {\em novel approaches} to location science models based on {\em variational} ideas with the usage of {\em generalized differential} tools of {\em convex and variational analysis}. To the best of our knowledge, such an approach was first suggested in \parencite{mordukhovich2011applications} for the newly introduced generalized Fermat-Torricelli problem and then was largely developed in \parencite{reich2022,kupitz2013fermat,mordukhovich2019fermat,mordukhovich2013smallest,nam2014constructions,nam2013generalized,jahn2015minsum,nam2014nonsmooth,nam2017minimizing,mordukhovich2023easy} and other publications for various models of location science with algorithmic applications.\vspace*{0.05in}

The major contributions of our work, including theoretical advances and practical applications, are as follows.

\begin{itemize}

\item We introduce and investigate the generalized \emph{Sylvester-Fermat-Torricelli (SFT)} problem, a novel model of location science not previously explored in the literature. The generalized SFT problem belongs to the area of {\em minimax optimization} that incorporates {\em multiple Minkowski gauge} functions rather than a single one. This generalization provides sufficient flexibility to account for {\em UAV heterogeneity} and the {\em influence of wind} as a source of weather uncertainty. Furthermore, the model extends from individual target points to convex sets, which is particularly useful when UAVs are required to reach a target area rather than a specific point. Finally, in this new minimax framework, the maximum of functions--each potentially representing the sum of several Minkowski gauge functions--is minimized. This approach enables modeling of {\em UAV round-trip movements}, a feature that is highly essential for realistic operational scenarios.

\item We further introduce and discuss an {\em extended version} of the generalized SFT model, as well as extended counterparts of the  generalized Fermat–Torricelli and Sylvester problems, by incorporating two types of Minkowski gauge functions: the {\em set-based Minkowski gauge} (\ref{eq:set-based-minkowski-definition}) and the {\em maximal set-based Minkowski gauge} (\ref{eq:MSMG_function}). These extensions enable us modeling scenarios in which UAVs enter a target area from the point {\em closest to a facility} and exit from the point {\em farthest from it}. In Section~\ref{sec:application}, the practical applicability of the extended formulations in real-world operations is demonstrated with providing extensive numerical experiments.

\item We develop a {\em single facility location model} for disaster relief by using UAVs that explicitly accounts for wind effects and UAV fleet heterogeneity, which therefore enhance its applicability to real-world scenarios.
\end{itemize}

The reminder of the paper is organized as follows. In Section~\ref{sec:problem}, we first review some background material from convex analysis needed below and then formulate the novel generalize Sylvester-Fermat-Torricelli model with its major special cases. Section~\ref{sec:properties_of_SMG} is devoted to the introduction and a detailed study of the set-based Minkowski gauge function, which plays a crucial role in the paper. Section~\ref{sec:existence-uniqueness} verifies well-posedness issues (existence and uniqueness of optimal solutions) for the generalized SFT. In Section~\ref{weight}, we discuss some extended and weight versions of the generalized SFT. Section~\ref{sec:application} contains the major applications of our theoretical developments to disaster relief operations by using UAVs with presenting the results of numerical experiments. In Section~\ref{sec:conc}, we summarize the main achievements of the paper and discuss some topics of our future research. The Appendix (Section~\ref{appendix}) presents some additional material of its own interest, which is used in the proofs of the obtained results.

\section{Basic Definitions and Problem Formulation}\label{sec:problem}

This section is started with reviewing key concepts and tools of convex analysis, which serve as a foundational framework for addressing the major challenges posed by the {\em nonsmooth} nature of our model introduced in (\ref{eq:SFT}). These tools will be instrumental in revealing and analyzing variational properties of the model, as well as in developing effective solution approaches. Then essential definitions and the problem formulation are presented and discussed. We also emphasize the key features introduced in our model, which extend the capabilities beyond those of existing models in location science.
These extensions allow us to address more complex scenarios that are closer to real-world applications.

\subsection{Tools of Convex Analysis}

In this subsection, we present key definitions and results of convex analysis that are essential for the main results of the paper; see, e.g., the books \parencite{rockafellar2015convex,mordukhovich2022} for more details, additional material, and references. Throughout the paper, we use the standard notation from the aforementioned books. Recall that the symbol $\Omega_1\subset\Omega_2$  means that the set $\Omega_1$ is smaller than or equal to the set $\Omega_2$. 

A nonempty subset $\Omega$ of $\mathbb{R}^q$ is {\em convex} if $\lambda x+(1-\lambda)y \in \Omega$ for all $x,y \in \Omega$ and $\lambda \in (0,1)$. An {\em extended-real-valued function} \( f: \Omega \to \bar{\mathbb{R}} := (-\infty, \infty] \) is {\em convex} on a convex set \( \Omega \) if
\begin{equation}\label{eq:convexity}
f\big(\lambda x+(1-\lambda)y\big)\le\lambda f(x)+(1-\lambda)f(y)\;\mbox{ for all
}\;x, y\in \Omega\;\mbox{ and }\;\lambda\in (0,1).
\end{equation}
$f$ is {\em strictly convex} if the inequality in (\ref{eq:convexity}) becomes strict for $x \neq y$.

Given a nonempty convex set $\Omega \subset \mathbb{R}^q$ and a point $\bar{x}\in \Omega$, the \emph{normal cone} to $\Omega$ at $\bar{x}$ is 
\begin{equation}\label{eq:normal-cone}
N(\bar{x};\Omega):=\{v \in \mathbb{R}^q \  |\ \langle v,x-\bar{x} \rangle \le0 \ \textnormal{  for all }\  x\in \Omega\}.
\end{equation}
If $\bar{x} \notin \Omega$, we have $N(\bar{x};\Omega):=\emptyset$ by convention.\vspace*{0.05in}

The convexity of functions is preserved under important operations.\vspace*{0.05in}

\begin{proposition} \label{prop:convexity_of_sum_funct_and_max_funct}
Let $f,f_i : \mathbb{R}^q \to \overline{\mathbb{R}}$ be convex functions for all $i = 1, \ldots, m$. Then the following functions are also convex:
\begin{enumerate}
\item The scaled function $\alpha f$ for any $\alpha \ge 0$.
\item The sum function $\sum_{i=1}^m f_i$.
\item The maximum function $\max_{1 \le i \le m} f_i$.
\end{enumerate}
\end{proposition}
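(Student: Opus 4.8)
The plan is to prove each of the three closure properties directly from the defining inequality \eqref{eq:convexity}, since the statement is a finite-dimensional analogue of a standard result whose proof is purely algebraic and requires no machinery beyond the definition of convexity. I would fix arbitrary points $x, y \in \mathbb{R}^q$ and a scalar $\lambda \in (0,1)$, and in each case estimate the value of the composite function at the convex combination $\lambda x + (1-\lambda) y$.

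For part (1), with $g := \alpha f$ and $\alpha \ge 0$, applying \eqref{eq:convexity} to $f$ and multiplying both sides by the nonnegative number $\alpha$ preserves the inequality, giving $g(\lambda x + (1-\lambda) y) \le \lambda g(x) + (1-\lambda) g(y)$; the nonnegativity of $\alpha$ is exactly what is needed so the multiplication does not reverse the inequality, and one should note that the arithmetic in $\overline{\mathbb{R}}$ behaves correctly when a value is $+\infty$. For part (2), with $g := \sum_{i=1}^m f_i$, I would write \eqref{eq:convexity} for each $f_i$ separately and then sum these $m$ inequalities; since a finite sum of valid inequalities (all with the same coefficients $\lambda$ and $1-\lambda$) is again valid, and sums in $\overline{\mathbb{R}}$ are well defined here (no $\infty - \infty$ ambiguity because all $f_i$ map into $(-\infty,\infty]$), the result follows. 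For part (3), with $g := \max_{1 \le i \le m} f_i$, the key observation is that for each index $i$ we have $f_i(\lambda x + (1-\lambda) y) \le \lambda f_i(x) + (1-\lambda) f_i(y) \le \lambda g(x) + (1-\lambda) g(y)$, where the first inequality is convexity of $f_i$ and the second uses $f_i(x) \le g(x)$ and $f_i(y) \le g(y)$ together with $\lambda, 1-\lambda \ge 0$; taking the maximum over $i$ on the left-hand side yields $g(\lambda x + (1-\lambda) y) \le \lambda g(x) + (1-\lambda) g(y)$.

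There is no substantive obstacle here; the only points requiring a word of care are the bookkeeping for extended-real-valued arithmetic (ensuring that the inequalities remain meaningful when some function value equals $+\infty$, which is unproblematic since we never subtract infinities and never multiply by a negative scalar) and, in part (3), the order of operations — one must bound each $f_i$ at the convex combination by the \emph{same} quantity $\lambda g(x) + (1-\lambda) g(y)$ before taking the maximum, rather than trying to interchange the maximum with the convex-combination argument directly. Since the excerpt already cites Proposition 1.44 of \parencite{mordukhovich2023easy} for full details, I would present the argument in this condensed three-part form and refer the reader there for the remaining verifications.
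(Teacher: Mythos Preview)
Your proposal is correct and is the standard direct verification from the defining inequality~\eqref{eq:convexity}. The paper itself does not give a proof of this proposition at all; it simply states the result and refers the reader to Proposition~1.44 of \parencite{mordukhovich2023easy}, so your sketch actually goes beyond what the paper provides while remaining fully consistent with the cited source.
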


The next result shows that the maximum of strictly convex functions is strictly convex.\vspace*{0.05in}

\begin{proposition} \label{lemma:max_strictly_convex}
Let \( f_i : \mathbb{R}^q \to \overline{\mathbb{R}} \) be strictly convex function for all \( i = 1, \dots, m \). Then the maximum function \( f(x): =\max_{1\le i\le m}f_i(x),\;x\in\mathbb{R}^q \), is strictly convex on $\mathbb{R}^q$.
\end{proposition}
\begin{proof} Consider first the case where $m=2$, i.e., \( f(x)=\max\{f_1(x),f_2)(x)\}\). Fix \( x, y \in \mathbb{R}^q \), \( x \neq y \), and \( \lambda \in (0,1) \) and suppose that $f(\lambda x + (1 - \lambda) y) = f_1(\lambda x + (1 - \lambda) y)$. It follows from the strict convexity of \( f_1 \) that
\[
f_1(\lambda x + (1 - \lambda) y) < \lambda f_1(x) + (1 - \lambda) f_1(y) \le  \lambda f(x) + (1 - \lambda) f(y),
\]
which tells us therefore that
\[
f(\lambda x + (1 - \lambda) y) < \lambda f(x) + (1 - \lambda) f(y).
\]
The remaining case where
$f(\lambda x + (1 - \lambda) y) = f_2(\lambda x + (1 - \lambda) y)$ is treated similarly. For an arbitrary $m\in\mathbb{N}$, we proceed by induction.
\end{proof}

One of the central concepts in convex analysis is the {\em subdifferential} of convex functions, which extends the classical derivative/gradient notion to nondifferentiable (nonsmooth) functions. Given an extended-real-valued convex function \( f : \mathbb{R}^q \to \overline{\mathbb{R}} \) finite at $\bar x$ (i.e., with $\bar x\in{\rm dom}\,f$), a {\em subgradient} \( v \in \mathbb{R}^q \) of $f$ at $\bar x$ is defined by 
\begin{equation} \label{convex subdifferential}
\langle v, x - \bar{x} \rangle \leq f(x) - f(\bar{x}) \quad \text{for all }\;x \in \mathbb{R}^q.
\end{equation}
The collection of all subgradients $v$ of \( f \) at \( \bar{x} \) is called the \emph{subdifferential} of \( f \) at this point and is denoted by \( \partial f(\bar{x})\). It is well known that the subdifferential of any convex function is nonempty at every point where the function is continuous.\vspace*{0.03in} 

The subdifferential of convex functions enjoys comprehensive {\em calculus} with respect to operations that keep the convexity. For the reader's convenience, we present the two fundamental calculus results used in what follows. The first result is the {\em subdifferential maximum rule}.\vspace*{0.05in}

\begin{theorem}\label{theorem:max_rule}
Let $f_i:\mathbb{R}^q \to \overline{\mathbb{R}}$, $i=1,\ldots,m$, be convex functions with $\bar{x} \in \bigcap_{i=1}^m \textnormal{dom}\ f_i$. Assume that all $f_i$ are continuous at $\bar x$ and consider the maximum function
\begin{equation*}
f(x)=\ \max\big\{f_i(x)\;\big|\; i=1,\dots,m\big\}, \quad x\in\mathbb{R}^q. 
\end{equation*}
Then the subdifferential of $f$ at $\bar x$ is calculated by
\[
\partial f(\bar{x})=\ \textnormal{co }\Big\{\bigcup_{i \in I(\bar{x})}  \partial f_i(\bar{x})\Big\},
\]
where $I(\bar{x}):=\{i=1,\dots,m \ |\ f(\bar{x})=f_i(\bar{x})\}$ is the collection of active indexes at $\bar{x}$, and where the symbol ``{\rm co}" signifies the convex hull if the set in question.
\end{theorem}\vspace*{0.05in}

Theorem~\ref{theorem:max_rule} tells us that the subdifferential of $f$ at $\bar{x}$ is the convex hull of the subgradients of the functions that achieve the maximum at $\bar{x}$. Recall that the convex hull of a set $\Omega\subset\mathbb{R}^n$ is 
\begin{equation*}
\operatorname{co} \Omega := \left\{ \sum_{i=1}^{k} \lambda_i x_i \,\middle|\, x_i \in \Omega,\ \lambda_i \geq 0,\ \sum_{i=1}^{k} \lambda_i = 1,\ k \in \mathbb{N} \right\},
\end{equation*}
where $k$ can be bounded by $n+1$ due to the classical Carath\'eodory theorem.\vspace*{0.05in}

The following calculus result used below is the fundamental {\em subdifferential sum rule} known also as the {\em Moreau-Rockafellar theorem}. \vspace*{0.05in}

\begin{theorem}\label{theorem:sum_rule_subdifferential}
Let $f_i:\mathbb{R}^q \rightarrow \mathbb{\overline{R}}$ for $i=1,\dots,m$ be convex functions such that all but one of $f_i$ are continuous at $\bar{x} \in \bigcap_{i=1}^m \textnormal{dom}\ f_i$. Then  we have
\[
\partial\Big(\sum_{i=1}^{m}f_i\Big)(\bar{x})=\sum_{i=1}^{m}\partial f_i(\bar{x}).
\]
\end{theorem}

\subsection{Problem Formulation}

Let $F$ be a nonempty, closed, and convex subset of the space $\mathbb{R} ^q$. The \emph{Minkowski gauge function} associated with the set $F$, is defined by
\begin{equation}\label{eq:minkowski_definition}
\rho_F(x):=\inf\big\{t\ge0 \ \big| \ x \in tF\big\}, \ \  x\in\mathbb{R}^q.
\end{equation}
The Minkowski gauge \eqref{eq:minkowski_definition}, which is common in the literature, plays a key role in convex analysis and its applications; see, e.g., \parencite{mordukhovich2023easy}. However, for our purposes in this paper, we require a more flexible function as defined below.

Given a nonempty set $\Omega \subset \mathbb{R}^q$ in addition to $F$, we define the \emph{set-based Minkowski gauge function} associated with the sets $F$ and $\Omega$ by
\begin{equation}\label{eq:set-based-minkowski-definition}
\rho_F^{\Omega}(x):= \inf\big\{t \ge 0 \ \big| \ x\in \Omega+tF\big\},
\end{equation}
where $\Omega+tF:=\{\omega+tf \ |\ \textnormal{for all }\omega\in \Omega \textnormal{ and } f\in F \}$. In \eqref{eq:set-based-minkowski-definition}, the sets $F$ and $\Omega$ are referred to as the (constant) \emph{dynamic set} and the \emph{reference set}, respectively. In simple terms, if we view \( F \) as a dynamic set representing possible speeds or directions of motion, then \( \rho_F^{\Omega}(x) \) represents the {\em minimal time} required to reach \( x \) starting from at least one point \( \omega \in \Omega \). Moreover, \( \rho_F \) can be interpreted in this context by noting that \( \rho_F(x) = \rho_F^{\{0\}}(x) \), {\color{black}see Fig. \ref{fig:SMG_function_example} for a graphical illustration}. 

\begin{figure}
    \centering
    \includegraphics[width=1\linewidth]{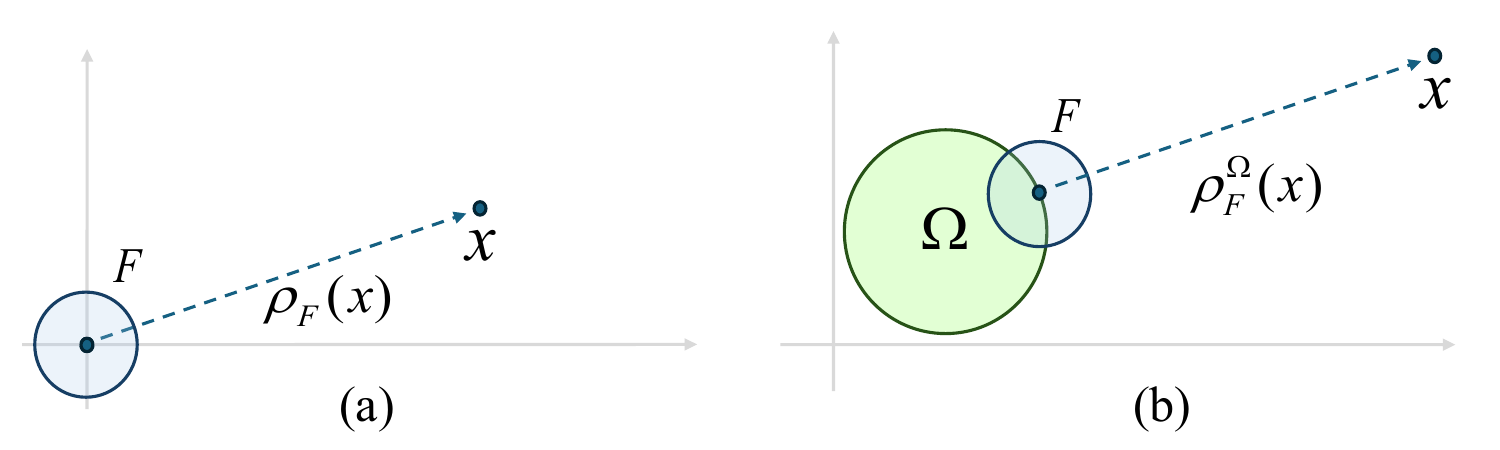}
    \caption{\textcolor{black}{(a) illustrates an example of the Minkowski gauge function (\ref{eq:minkowski_definition}) with a circle as its dynamic set. (b) shows an example of the set-based Minkowski gauge function (\ref{eq:set-based-minkowski-definition}), where both the reference set and the dynamic set are circles.}}
    \label{fig:SMG_function_example}
\end{figure}

With these preliminaries in place, we now proceed to formulate the proposed model. Let the index sets \( I_k \neq \emptyset \) for \( k = 1, \dots, n \) form a partition of the set \( I := \{1, \dots, m\} \), such that \( \bigcup_{k=1}^{n} I_k = I \) and \( I_k \cap I_l = \emptyset \) for all \( k \neq l \) with \( k, l \in \{1, \dots, n\} \). For $i \in I_k$ with $k=1,\dots n$, let \( F_i \subset \mathbb{R}^q \) be a compact and convex set containing the origin in its interior point (i.e., \( 0 \in \operatorname{int} F_i \)). Suppose also that \( \Omega_i \) and the constraint set \( \Omega_0 \) are nonempty, closed, and  convex in \( \mathbb{R}^q \). Then the proposed \emph{generalized Sylvester-Fermat-Torricelli (SFT)} problem is formulated as follows:
\begin{equation}\label{eq:SFT}
\min S(x)\;\mbox{ subject to }\;x\in \Omega_0,
\end{equation}
where the cost function in \eqref{eq:SFT} is defined by
\begin{equation}\label{eq:SFT-definition}
S(x):=\max\big\{\sum_{i \in I_k} \rho_{F_i}^{\Omega_i}(x) \  \big| \ k=1,,...,n \big\}.
\end{equation}
In the case where \( I_k = \{k\} \) and $n=m$, the novel generalized SFT problem is written as
\begin{equation}\label{eq:generalized-Syvester-Model}
\min Y(x)\;\mbox{ such that }\;x\in \Omega_0
\end{equation}
with the cost function given in the simple maximum form
\begin{equation}\label{eq:generalized-sylvester-equation}
Y(x):= \max\big\{\rho_{F_i}^{\Omega_i}(x)\ \big| \ i=1,\dots,n\big\}. 
\end{equation}
This extends the \emph{generalized Sylvester} problem studied in \parencite{mordukhovich2023easy}, where only one dynamic set $F$ is under consideration, and where $\rho^{\Omega_i}_F$ is replaced by the ``minimal time function" $T^F_{\Omega_i}$ that agrees with $\rho^{\Omega_i}_{-F}$; see Lemma~\ref{lemma:relationship_SMG_MinimalTime} in the Appendix. Furthermore, if \( k = 1 \) and $|I_1|=m$ for the cardinality of $I_1$, 
then  the proposed generalized SFT problem reduces to the \emph{generalized Fermat–Torricelli} one stated as follows:
\begin{equation}\label{eq:eq:Generalized-Fermat-Model}
\min T(x)\;\mbox{ subject to }\;x\in \Omega_0
\end{equation}
with the cost function given in the summation form
\begin{equation}\label{eq:Generalized-Fermat-equation}
T(x):= \sum_{i=1}^{m} \rho_{F_i}^{\Omega_i}(x).
\end{equation}
The latter extends the previous version explored in \parencite{mordukhovich2023easy}, where  only one dynamic set $F$ is present in \eqref{eq:Generalized-Fermat-equation}, and where $\rho_F^{\Omega_i}$ is replaced by $T^F_{\Omega_i}$.\vspace*{0.03in}

This paper focuses on the generalized SFT problem \eqref{eq:SFT}, which encompasses both generalized Sylvester and Fermat–Torricelli problems as special cases. Let us emphasize that, unlike the previously considered version of the generalized Sylvester problem with a single identical dynamic set, our new setting in \eqref{eq:generalized-Syvester-Model}, \eqref{eq:generalized-sylvester-equation} accommodates multiple dynamic sets important in practical applications. Moreover, another key generalization is using the sum $\sum_{i \in I_k} \rho_{F_i}^{\Omega_i}(x)$ instead of a single term $\rho_{F}^{\Omega}(x)$, which enables us to model more complex movement patterns such as, e.g., round-trip UAV flights under wind conditions between two points.

\section{Properties of the Set-Based Minkowski Gauge Function}\label{sec:properties_of_SMG}

In this section, we establish some key properties of the set-based Minkowski gauge function that include the following issues:

\begin{enumerate}
\item[$\bullet$]  The relationship between the functions $\rho_F^\Omega$ and $\rho_F$, which serves as a foundational tool for deriving further properties and facilitating the computation of $\rho_F^\Omega$.

\item[$\bullet$] The convexity of $\rho_F^\Omega$, which enables us to utilize tools of convex analysis.

\item[$\bullet$] The Lipschitz continuity of $\rho_F^\Omega$, which implies that its subdifferential is bounded.

\item[$\bullet$] Characterizing the subdifferential of $\rho_F^\Omega$ in both settings of in-set points $x \in \Omega$ and out-of-set points $x \notin \Omega$.

\item[$\bullet$] Deriving an efficient upper estimate for the subdifferential of $\rho_F^\Omega$.
\end{enumerate}
In what follows, we use the {\em support function}
\begin{equation}\label{eq:support_function}
\sigma_F(v):= \sup\big\{\langle v,f \rangle \  \big|\  f \in F\big\},\  v\in \mathbb{R}^q,
\end{equation}
associated with  $F\subset\mathbb{R}^q$, and the {\em \( r \)-enlargement} of  \( \Omega \subset \mathbb{R}^q \) defined by 
\begin{equation}\label{eq:r_enlarge_set_based_minkowski}
\Omega_r :=\big\{ x \in \mathbb{R}^q\;\big|\;\rho_F^\Omega(x) \leq r \big\}\; \text{ for any  }\  r > 0.
\end{equation}

The first proposition reveals the precise relationship between the set-based Minkowsi gauge function \eqref{eq:set-based-minkowski-definition} and the classical Minkowski gauge  \eqref{eq:minkowski_definition}.\vspace*{0.05in}

\begin{proposition}\label{prop:relationship_between_set_based_and_Minkowski} Let $F \subset \mathbb{R}^q$ be a nonempty, closed, and convex set. Given an nonempty reference set $\Omega \subset \mathbb{R}^q$, we have the relationship
\begin{equation}\label{eq:set-based-mikowski-function-definition2}
\rho_F^{\Omega}(x)=\inf_{\omega \in \Omega} \rho_F(x-\omega).
\end{equation}
\end{proposition}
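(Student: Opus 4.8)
The plan is to derive (\ref{eq:set-based-mikowski-function-definition2}) directly from the two definitions (\ref{eq:minkowski_definition}) and (\ref{eq:set-based-minkowski-definition}) by rewriting the set of admissible scalars defining $\rho_F^{\Omega}(x)$ as a union over $\omega\in\Omega$ of the sets of admissible scalars defining $\rho_F(x-\omega)$, and then interchanging the infimum with that union. In fact, closedness and convexity of $F$ play no role in this particular statement; they will only become relevant for the later properties of $\rho_F^{\Omega}$.

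The first step is the pointwise observation that, for a fixed $t\ge 0$, one has $x\in\Omega+tF$ if and only if there exists $\omega\in\Omega$ with $x-\omega\in tF$: indeed $x\in\Omega+tF$ means $x=\omega+tf$ for some $\omega\in\Omega$ and $f\in F$, i.e.\ $x-\omega=tf\in tF$, and the degenerate value $t=0$ causes no difficulty since $0\cdot F=\{0\}$ for any nonempty $F$. Hence
\[
\{\, t\ge 0 \mid x\in\Omega+tF \,\} \;=\; \bigcup_{\omega\in\Omega}\{\, t\ge 0 \mid x-\omega\in tF \,\}.
\]
Taking infima of both sides and using the elementary identity $\inf\big(\bigcup_{\alpha}S_\alpha\big)=\inf_{\alpha}\inf S_\alpha$ (with the convention $\inf\emptyset=+\infty$, consistent with $\rho_F$ taking the value $+\infty$ when no admissible scalar exists), the left-hand side becomes $\rho_F^{\Omega}(x)$ by (\ref{eq:set-based-minkowski-definition}), while the right-hand side becomes $\inf_{\omega\in\Omega}\inf\{t\ge 0\mid x-\omega\in tF\}=\inf_{\omega\in\Omega}\rho_F(x-\omega)$ by (\ref{eq:minkowski_definition}). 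This yields (\ref{eq:set-based-mikowski-function-definition2}).

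If one prefers to avoid the set-union argument, the identity follows equally well from two one-sided inequalities. For ``$\le$'': given any $\omega\in\Omega$ with $\rho_F(x-\omega)<\infty$ and any $\varepsilon>0$, there is an admissible $t\le\rho_F(x-\omega)+\varepsilon$, hence a decomposition $x=\omega+tf\in\Omega+tF$, so $\rho_F^{\Omega}(x)\le t\le\rho_F(x-\omega)+\varepsilon$; letting $\varepsilon\downarrow 0$ and infimizing over $\omega$ gives $\rho_F^{\Omega}(x)\le\inf_{\omega\in\Omega}\rho_F(x-\omega)$. For ``$\ge$'': any $t$ admissible for $\rho_F^{\Omega}(x)$ furnishes some $\omega\in\Omega$ with $x-\omega\in tF$, whence $\inf_{\omega'\in\Omega}\rho_F(x-\omega')\le\rho_F(x-\omega)\le t$, and infimizing over such $t$ gives the reverse inequality. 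The only delicate point in either route is careful bookkeeping of the value $+\infty$ and of the $t=0$ scaling; beyond that I do not expect a substantive obstacle, since the statement is essentially an interchange-of-infima fact.
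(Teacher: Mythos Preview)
Your proposal is correct, and your two-inequality argument is essentially the paper's own proof: the paper also fixes an $\epsilon$, picks an admissible $t$ (respectively $d$) close to each infimum, and passes through the equivalence $x\in\Omega+tF\Leftrightarrow\exists\,\omega\in\Omega$ with $x-\omega\in tF$, treating the $+\infty$ case separately at the end. Your first route via $\{t\ge 0\mid x\in\Omega+tF\}=\bigcup_{\omega\in\Omega}\{t\ge 0\mid x-\omega\in tF\}$ and $\inf\big(\bigcup_\alpha S_\alpha\big)=\inf_\alpha\inf S_\alpha$ is not a different idea but a cleaner packaging of the same equivalence that dispenses with the $\epsilon$-bookkeeping and handles the $+\infty$ case automatically; your remark that closedness and convexity of $F$ are unused here is also correct.
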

\begin{proof}
Starting with the case where \(\rho_F^{\Omega}(x) < \infty\), pick \(\epsilon > 0\) and denote  \(p:=\rho_F^{\Omega}(x)\). Choose further \(t\) such that \(p \le t < p + \epsilon\) and \(x \in \Omega + tF\). Then there exists \(\omega \in \Omega\) with \(x - \omega \in tF\), which yields \(\rho_F(x - \omega) \le t < p + \epsilon\). Since \(\rho_F^{\Omega}(x) = p\), it follows that
\[
\inf_{\omega \in \Omega} \rho_F(x - \omega) \le \rho_F(x - \omega) < \rho_F^{\Omega}(x) + \epsilon.
\]
Passing to the limit as \(\epsilon\downarrow 0\) gives us the estimate
\[
\inf_{\omega \in \Omega} \rho_F(x - \omega) \le \rho_F^{\Omega}(x).
\]
To verify  the reverse inequality, denote \(q:=\inf_{\omega \in \Omega} \rho_F(x - \omega)\), pick \(\alpha > 0\), and choose $d$ and $\omega$ such that \(q\le d < q + \alpha\) and that $x-\omega \in dF$. This leads us to \(x \in \omega + dF \subset \Omega + dF\), and hence \(\rho_F^{\Omega}(x) \le d < q + \alpha\). Passing to the limit as \(\alpha \to 0\), we arrive at 
\[
\rho_F^{\Omega}(x) \le \inf_{\omega \in \Omega} \rho_F(x - \omega).
\]
Combining both inequalities justifies the claimed representation \eqref{eq:set-based-mikowski-function-definition2} when  \(\rho_F^{\Omega}(x) < \infty\).

It remains to consider the case where \(\rho_F^{\Omega}(x) = \infty\). Then we get \(\{t \ge 0 \mid x \in \Omega + tF\} = \emptyset\). This implies that \(\{t \ge 0 \mid x - \omega \in tF\} = \emptyset\) for every \(\omega \in \Omega\) and so \(\rho_F(x - \omega) = \infty\). Therefore,
\[
\inf_{\omega \in \Omega} \rho_F(x - \omega) = \infty = \rho_F^{\Omega}(x),
\]
which completes the proof of the proposition.
\end{proof}

The next proposition addresses the convexity of the set-based Minkowski gauge function.\vspace*{0.05in}

\begin{proposition}\label{prop:convexity_of_set_based_minkowski}
Let $F\subset \mathbb{R}^q$ be a nonempty, closed, and convex set. If the reference set $\Omega \subset \mathbb{R}^q$ is nonempty and convex, then $\rho_F^{\Omega}$ in {\rm(\ref{eq:set-based-minkowski-definition})} is a convex function.
\end{proposition}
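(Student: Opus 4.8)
The plan is to use the representation from Proposition~\ref{prop:relationship_between_set_based_and_Minkowski}, namely $\rho_F^{\Omega}(x)=\inf_{\omega\in\Omega}\rho_F(x-\omega)$, and reduce the statement to two standard facts: first, that the ordinary Minkowski gauge $\rho_F$ of a convex set $F$ is itself a convex (indeed sublinear) function on $\mathbb{R}^q$; and second, that the infimal projection (partial minimization) of a jointly convex function over a convex set yields a convex function. Concretely, define $g:\mathbb{R}^q\times\mathbb{R}^q\to\bar{\mathbb{R}}$ by $g(x,\omega):=\rho_F(x-\omega)$ if $\omega\in\Omega$ and $g(x,\omega):=+\infty$ otherwise; then $\rho_F^{\Omega}(x)=\inf_{\omega}g(x,\omega)$, and it suffices to show $g$ is convex on $\mathbb{R}^q\times\mathbb{R}^q$, since the infimal projection of a convex function is convex.

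First I would record the convexity of $\rho_F$: since $F$ is convex and contains a neighborhood of the origin in the cases we use (or at least is convex with $0$ handled by the $\inf$ over $t\ge 0$), $\rho_F$ is positively homogeneous and subadditive, hence convex; this can either be cited from \textcite{mordukhovich2023easy} or verified directly from the definition (\ref{eq:minkowski_definition}) by noting that if $x\in sF$ and $y\in tF$ then $x+y\in(s+t)F$ by convexity of $F$. Next, the map $(x,\omega)\mapsto x-\omega$ is linear, so $(x,\omega)\mapsto\rho_F(x-\omega)$ is convex on $\mathbb{R}^q\times\mathbb{R}^q$ as the composition of a convex function with a linear map. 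Adding the indicator $\delta_{\Omega}(\omega)$ (convex because $\Omega$ is convex) keeps convexity by Proposition~\ref{prop:convexity_of_sum_funct_and_max_funct}(2), so $g$ is convex. Finally, I would invoke (or prove in one line) the partial-minimization lemma: if $g(x,\omega)$ is convex in $(x,\omega)$ then $h(x):=\inf_{\omega}g(x,\omega)$ is convex, by taking $\omega_1,\omega_2$ nearly attaining the infima at $x_1,x_2$, using $\lambda\omega_1+(1-\lambda)\omega_2$ as a feasible test point at $\lambda x_1+(1-\lambda)x_2$, applying joint convexity of $g$, and letting the approximation error go to zero.

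Alternatively, and perhaps more self-containedly, one can argue directly at the level of the defining set $\{t\ge 0\mid x\in\Omega+tF\}$: show that $\mathrm{epi}\,\rho_F^{\Omega}=\{(x,t)\mid t\ge 0,\ x\in\Omega+tF\}$ (up to closure issues) is convex, using that if $x_1\in\Omega+t_1F$ and $x_2\in\Omega+t_2F$ then $\lambda x_1+(1-\lambda)x_2\in\lambda\Omega+(1-\lambda)\Omega+(\lambda t_1 F+(1-\lambda)t_2 F)\subset\Omega+(\lambda t_1+(1-\lambda)t_2)F$, where the first inclusion uses convexity of $\Omega$ and the second uses convexity of $F$ together with $\lambda t_1 F+(1-\lambda)t_2 F=(\lambda t_1+(1-\lambda)t_2)F$. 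This shows the strict epigraph (or the sublevel sets $\Omega_r$ of (\ref{eq:r_enlarge_set_based_minkowski})) are convex, from which convexity of $\rho_F^{\Omega}$ follows.

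The main obstacle is not conceptual but bookkeeping: handling the infimum carefully when it is not attained (working with $\epsilon$-minimizers, exactly as in the proof of Proposition~\ref{prop:relationship_between_set_based_and_Minkowski}) and handling the value $+\infty$ (if $\rho_F^{\Omega}(x_1)=\infty$ or $\rho_F^{\Omega}(x_2)=\infty$ the convexity inequality is trivial). One should also note where $0\in\operatorname{int}F$ or closedness of $F$ is actually needed — for mere convexity of $\rho_F^{\Omega}$ it is not essential, but it is harmless to assume it as in the statement. I expect the clean write-up to mirror the $\epsilon$-argument already used above for Proposition~\ref{prop:relationship_between_set_based_and_Minkowski}.
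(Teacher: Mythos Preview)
Your proposal is correct. Your second alternative---the direct $\epsilon$-argument showing that $x_1\in\Omega+t_1F$ and $x_2\in\Omega+t_2F$ imply $\lambda x_1+(1-\lambda)x_2\in\Omega+(\lambda t_1+(1-\lambda)t_2)F$ via convexity of $\Omega$ and the identity $\lambda t_1 F+(1-\lambda)t_2 F=(\lambda t_1+(1-\lambda)t_2)F$---is exactly the route the paper takes, down to the $\epsilon$-bookkeeping you anticipated.

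Your primary approach through Proposition~\ref{prop:relationship_between_set_based_and_Minkowski} and partial minimization is a genuinely different, more structural packaging: it front-loads the work into the sublinearity of $\rho_F$ and a general infimal-projection lemma, rather than manipulating the defining set $\Omega+tF$ directly. The payoff is modularity (the same lemma handles many such constructions), at the cost of invoking or proving the partial-minimization fact and relying on Proposition~\ref{prop:relationship_between_set_based_and_Minkowski}, which in the paper's logical order comes just before this result. The paper's direct argument is more self-contained here and avoids that dependency, but both are equally valid and of comparable length once written out.
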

\begin{proof}
Fix an arbitrary number $\epsilon>0$ and select $t_1$ and $t_2$ such that $\rho_F^{\Omega}(x_i)\le t_i<\rho_F^{\Omega}(x_i)+\epsilon$ and $x_i \in \Omega+t_if_i$ where $f_i \in F$ as $i=1,2$. By $x_i-t_if_i\in \Omega$ and the convexity of $\Omega$, we have $\lambda(x_1-t_1f_1)+(1-\lambda)(x_2-t_2f_2)\in \Omega$ for any  $\lambda \in (0,1)$. Hence
\[\lambda x_1+(1-\lambda) x_2 \in \Omega +\lambda t_1 f_1+(1-\lambda) t_2 f_2 \subset \Omega + \lambda t_1 F +(1-\lambda) t_2 F.\]
The convexity of $F$ tells us that $\lambda t_1 F +(1-\lambda) t_2 F=(\lambda t_1+ (1-\lambda)t_2)F$,  and so we have
\[\lambda x_1+(1-\lambda)x_2 \in \Omega+(\lambda t_1+(1-\lambda)t_2)F\  \Longrightarrow\  \rho_F^{\Omega}(\lambda x_1+(1-\lambda)x_2) \le \lambda t_1+(1-\lambda) t_2.\]
Since $\epsilon>0$ was chosen arbitrarily, this yields the implication
\begin{equation*}
\begin{array}{ll}
&\lambda t_1+(1-\lambda)t_2 < \lambda \rho_F^{\Omega}(x_1)+(1-\lambda)\rho_F^{\Omega}(x_2)+\epsilon\\
&\Longrightarrow \rho_F^{\Omega}(\lambda x_1+(1-\lambda)x_2) \le \lambda \rho_F^{\Omega}(x_1)+(1-\lambda)\rho_F^{\Omega}(x_2),
\end{array}
\end{equation*}
which readily justifies the claimed convexity of the set-based Minkowski gauge $\rho_F^{\Omega}$.
\end{proof}

The following assertion establishes the global Lipschitz continuity of the function $\rho_F^{\Omega}$ on the entire space $\mathbb{R}^q$, which is a key property for our subsequent analysis.\vspace*{0.05in}

\begin{proposition}
\label{prop:proof_of_Lipshitz}
Let $F \subset \mathbb{R}^q$ be a closed and convex set with $0 \in{\rm int}\,F$. Then for any nonempty set $\Omega \subset \mathbb{R}^q$, we have that $\rho_F^{\Omega}(\cdot)$ is Lipschitz continuous on $\mathbb{R}^q$.
\end{proposition}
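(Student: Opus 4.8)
The plan is to exploit the representation $\rho_F^{\Omega}(x)=\inf_{\omega\in\Omega}\rho_F(x-\omega)$ established in Proposition~\ref{prop:relationship_between_set_based_and_Minkowski}, and to reduce Lipschitz continuity of $\rho_F^{\Omega}$ to the well-known Lipschitz property of the ordinary Minkowski gauge $\rho_F$ when $0\in\operatorname{int}F$. First I would record the standard fact that since $0\in\operatorname{int}F$, there is $r>0$ with $rB\subset F$ (here $B$ the closed unit ball), which yields the two-sided bound $\rho_F(u)\le \tfrac{1}{r}\|u\|$ for all $u$, and hence, combined with convexity and positive homogeneity of $\rho_F$, the global Lipschitz estimate $|\rho_F(u)-\rho_F(u')|\le \tfrac{1}{r}\|u-u'\|$; this is the classical gauge lemma and I would either cite it from \parencite{mordukhovich2023easy} or give the one-line triangle-inequality argument $\rho_F(u)\le\rho_F(u')+\rho_F(u-u')\le\rho_F(u')+\tfrac1r\|u-u'\|$.

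Next, fix $x,y\in\mathbb{R}^q$ and work with the infimum defining $\rho_F^{\Omega}$. For any $\varepsilon>0$ choose $\omega\in\Omega$ with $\rho_F(y-\omega)\le \rho_F^{\Omega}(y)+\varepsilon$; then using the same $\omega$ as a feasible point for $x$ we get
\[
\rho_F^{\Omega}(x)\le \rho_F(x-\omega)\le \rho_F(y-\omega)+\tfrac1r\|x-y\|\le \rho_F^{\Omega}(y)+\tfrac1r\|x-y\|+\varepsilon.
\]
Letting $\varepsilon\downarrow 0$ gives $\rho_F^{\Omega}(x)-\rho_F^{\Omega}(y)\le \tfrac1r\|x-y\|$, and swapping $x$ and $y$ yields $|\rho_F^{\Omega}(x)-\rho_F^{\Omega}(y)|\le \tfrac1r\|x-y\|$, i.e. $\rho_F^{\Omega}$ is Lipschitz with constant $1/r$. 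One small point I would address first is finiteness: since $\Omega\neq\emptyset$ and $\rho_F$ is finite-valued (again because $0\in\operatorname{int}F$, so every direction is "reachable"), $\rho_F^{\Omega}(x)\le\rho_F(x-\omega_0)<\infty$ for any fixed $\omega_0\in\Omega$, so the subtractions above are legitimate and no $\infty-\infty$ arises.

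The main obstacle, such as it is, is purely bookkeeping with the infimum: one cannot assume the infimum in $\rho_F^{\Omega}(y)=\inf_{\omega}\rho_F(y-\omega)$ is attained (we have not assumed $\Omega$ compact), so the $\varepsilon$-argument above is essential rather than cosmetic, and the estimate must be set up so that the \emph{same} $\omega$ is used as a competitor for both $x$ and $y$. An alternative, slicker route is to observe that $\rho_F^{\Omega}=\rho_F\,\square\,\delta_{\Omega}$ is an infimal convolution of the Lipschitz convex function $\rho_F$ with the indicator $\delta_{\Omega}$, and that infimal convolution with any function preserves the Lipschitz modulus of $\rho_F$; I would mention this as a remark but carry out the elementary direct proof for self-containedness.
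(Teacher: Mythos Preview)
Your proof is correct and follows essentially the same approach as the paper: both reduce to the subadditivity of $\rho_F$ combined with the linear bound $\rho_F(u)\le l\|u\|$ that follows from $0\in\operatorname{int}F$. The only cosmetic difference is that the paper takes the infimum over $\omega$ directly on both sides of $\rho_F(x-\omega)\le\rho_F(x-y)+\rho_F(y-\omega)$ rather than running an $\varepsilon$-argument, so your remark that the $\varepsilon$ is ``essential rather than cosmetic'' is a bit overstated---the direct infimum step is valid without attainment---but nothing is wrong.
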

\begin{proof}
It is well known that the Minkowski gauge $\rho_F$ is subadditive and positively homogeneous; see, e.g., \parencite[Theorem~6.4]{mordukhovich2023easy}. Therefore, for any $x, y \in \mathbb{R}^q$, we get by using Proposition~\ref{prop:relationship_between_set_based_and_Minkowski} that
\begin{align*}
& \rho_F(x-\omega) \le \rho_F(x-y) + \rho_F(y-\omega)  \quad \text{for all } \omega \in \Omega, \\
& \Longrightarrow \inf_{\omega \in \Omega} \rho_F(x-\omega) \le \rho_F(x-y) + \inf_{\omega \in \Omega}\rho_F(y-\omega)\\
& \Longrightarrow  \inf_{\omega \in \Omega} \rho_F(x-\omega) - \inf_{\omega \in \Omega}\rho_F(y-\omega) \le \rho_F(x-y) \\
& \Longrightarrow  \rho_F^{\Omega}(x) -\rho_F^{\Omega}(y) \le \rho_F(x-y).
\end{align*}
The Lipschitz continuity of \( \rho_F \) by \parencite[Proposition~6.18]{mordukhovich2023easy} gives us a constant \( l \geq 0 \) such that \( \rho_F(x) \leq l \|x\| \) whenever $x \in \mathbb{R}^q$. Therefore,
\[
\rho_F^{\Omega}(x) -\rho_F^{\Omega}(y) \le \rho_F(x-y)\le l \|x-y\|,\]
which verifies the global Lipschitz continuity of the set-based Minkowski gauge. 
\end{proof}

Now we proceed with calculating the subdifferential of the function $\rho_F^{\Omega}$ at any point $\bar x\in\mathbb{R}^q$ starting with the {\em in-set} case $\bar x\in\Omega$.\vspace*{0.05in}

\begin{theorem}\label{prop:subdifferential_1}
Let $F \subset \mathbb{R}^q$ be compact and convex with $0 \in \textnormal{int}F$, and let $\Omega$ be nonempty and convex subset of $\mathbb{R}^q$. Then the subdifferential of \( \rho_F^{\Omega}(\cdot) \) at \( \bar{x} \in \Omega \) is calculated by
\begin{equation}\label{in-set}
\partial \rho_F^{\Omega}(\bar{x})=N(\bar{x};\Omega)\cap \mathcal{V},
\end{equation}
where \( \mathcal{V} \) is defined via the support function \eqref{eq:support_function} as
$$
\mathcal{V} := \big\{ v \in \mathbb{R}^q\;\big|\;\sigma_F(v) \leq 1\big\}.
$$
\end{theorem}
\begin{proof}
Fix any $v \in \partial \rho_F^\Omega (\bar{x})$. Then for $x \in \Omega$ with $\rho_F^\Omega(x)=0$, we get the conditions
\[
\langle v,x-\bar{x} \rangle \le \rho_F^\Omega(x)-\rho_F^\Omega(\bar{x})=0,
\]
which yield $v\in N(\bar{x};\Omega)$ by the normal cone definition \eqref{eq:normal-cone}. Fix $x \not \in \Omega$ and find $t>0$. $f \in F$ with $\bar{x}+tf=x$, which means that $\rho_F^\Omega(x)\le t$. Then we have the inequalities
\[
\langle v,\bar{x}+tf-\bar{x} \rangle\le\rho_F^\Omega(x)-0 \le t
\]
and thus arrive at $\sigma_F(v)\le 1$ and therefore justify the inclusion ``$\subset$" in \eqref{in-set}.

To verify the reverse inclusion in \eqref{in-set}, fix any $v \in N(\bar{x};\Omega)\cap \mathcal{V}$ and pick an arbitrary number $\epsilon>0$. Given $x \in \mathbb{R}^q$, select $t>0$ such that $\rho_F^\Omega(x)\le t<\rho_F^\Omega(x)+\epsilon$. Then find $\omega \in \Omega$ and $f \in F$ such that $\omega+tf=x$. Therefore, we get
\[
\langle v,\omega+tf-\bar{x} \rangle = \langle v,\omega - \bar{x} \rangle + t \langle v,f \rangle \le t < \rho_F^\Omega(x)+\epsilon=  \rho_F^\Omega(x)-\rho_F^\Omega(\bar{x})+\epsilon.\]
Passing to the limit therein as $\epsilon\downarrow 0$ gives us 
$\langle v,x-\bar{x} \rangle \le \rho_F^{\Omega}(x)-\rho_F^{\Omega}(\bar{x})$, which justifies the inclusion ``$\supset$" in \eqref{in-set} and thus completes the proof of the theorem.
\end{proof}

The next theorem provides the subdifferential calculation for $\rho_F^{\Omega}$ at {\em out-of-set} points $\bar x\notin\Omega$. \vspace*{0.05in}

\begin{theorem}\label{prop:subdifferential2} Let $F \subset 
\mathbb{R}^q$ be a compact and convex set with $0 \in \textnormal{int}F$, let $\Omega \subset \mathbb{R}^q$ be nonempty and convex, and let $\bar x\notin\Omega$. Denote $r:=\rho_F^{\Omega}(\bar{x})$ and consider the $r$-enlargement \eqref{eq:r_enlarge_set_based_minkowski}. Then the subdifferential of \( \rho_F^{\Omega} \) at \( \bar{x} \) is calculated by
\begin{equation}\label{out}
\partial \rho_F^\Omega(\bar{x})=N(\bar{x};\Omega_r)\cap \Upsilon\ \ \textnormal{with} \ \ \Upsilon:=\{v \in \mathbb{R}^q \ | \ \sigma_F(v)=1\}
\end{equation}
\end{theorem}
\begin{proof}
Fix \( v \in \partial \rho_F^{\Omega}(\bar{x}) \) and consider the following three cases.

{\bf(a)} Suppose that \( x \notin \Omega_r \). Then  there exist \( t > 0 \) and \( f \in F \) such that \( \bar{x} + t f = x \), which yields \( \rho_F^{\Omega}(x) \leq t + r \) by Lemma~\ref{lemma:minkowski_estimate} in the Appendix. This brings us to the conditions
\[
\langle v, \bar{x} + t f - \bar{x} \rangle = \langle v, t f \rangle \leq \rho_F^{\Omega}(x) - \rho_F^{\Omega}(\bar{x}) \leq t + r - r = t,
\]
which imply in turn that \( \langle v, f \rangle \leq 1 \), and thus \( \sigma_F(v) \leq 1 \).

{\bf(b)} Supposing that \( x \in \Omega_r \), we get
\[
\langle v, x - \bar{x} \rangle \leq \rho_F^{\Omega}(x) - \rho_F^{\Omega}(\bar{x}) \leq 0,
\]
which tells us that \( v \in N(\bar{x}, \Omega_r) \).

{\bf(c)} Consider finally the case where \( x \in \Omega \). Pick any \( \epsilon > 0 \) and find  \( x \in \Omega \) and \( f \in F \) such that \( x + d f = \bar{x} \), where \( r \leq d < r + \epsilon \). Then we have the implications
\begin{equation*}
\begin{array}{ll}
& \langle v,x-\bar{x} \rangle=\langle v,\bar{x}-df-\bar{x} \rangle \le \rho_F^{\Omega}(x)-\rho_F^{\Omega}(\bar{x}) \\
& \Longrightarrow r \le  \langle v, df \rangle < (r+\epsilon) \langle v,f \rangle\\
& \Longrightarrow\displaystyle\frac{r}{r+\epsilon} < \langle v,f \rangle \le \sigma_F(v).
\end{array}
\end{equation*}
Passing there to the limit as\( \epsilon\downarrow 0 \) gives us \( \sigma_F(v) \geq 1 \), and hence we conclude that
\[
v \in N(\bar{x}; \Omega_r) \cap \left\{ v \in \mathbb{R}^q \mid \sigma_F(v) \leq 1 \right\} \cap \left\{ v \in \mathbb{R}^q \mid \sigma_F(v) \geq 1 \right\} = N(\bar{x}; \Omega_r) \cap \Upsilon,
\]
which justifies the inclusion ``$\subset$" in \eqref{out}.

To verify the opposite inclusion in \eqref{out}, pick any vector $v \in N(\bar{x};\Omega_r)\cap \Upsilon$. We aim to check the fulfillment of the inequality 
\begin{equation}\label{eq:eq_for_theorem_upsilon}
\langle v,x-\bar{x} \rangle \le \rho_F^{\Omega}(x)-\rho_F^{\Omega}(\bar{x}) \ \ \textnormal{for all} \ \ x\in \mathbb{R}^q.
\end{equation}
Applying Theorem~\ref{prop:subdifferential_1} tells us that $v \in \partial \rho_F^{\Omega_r}(\bar{x})$, which yields
\[
\langle v,x-\bar{x}\rangle \le \rho_F^{\Omega_r}(x)-\rho_F^{\Omega_r}(\bar{x}).
\]
If $x\notin \Omega_r$, we use Lemma~\ref{lemma:equality_mikowski_to_omega_r} from the Appendix and get
\[
\langle v,x-\bar{x}\rangle \le \rho_F^{\Omega_r}(x)=\rho_F^{\Omega}(x)-r=\rho_F^{\Omega}(x)-\rho_F^{\Omega}(\bar{x}),
\]
which readily ensures that $v\in \partial \rho_F^{\Omega}(\bar{x})$. In the other case, suppose that $x\in \Omega_r$, i.e., $\rho_F^{\Omega}(x)=p \le r$. Fix any $\epsilon>0$ and deduce from $\sigma_F(v)=1$ the existence of $f\in F$ such that $\langle v,f \rangle \ge 1-\epsilon$. Using Lemma~\ref{lemma:minkowski_estimate} from the Appendix, we obtain \( y \in \Omega_r \) with \( y = x + (r - p)f \). Since \( v \in N(\bar{x}; \Omega_r) \), it follows that \( \langle v, y - \bar{x} \rangle \leq 0 \). Therefore,
\begin{align*}
& \langle v,x-(p-r)f-\bar{x} \rangle=\langle v,x-\bar{x} \rangle-(p-r)\langle v,f \rangle \le 0 \\
& \Longrightarrow \langle v, x-\bar{x} \rangle \le (p-r)  \langle v,f \rangle \le (1-\epsilon)(p-r)=(1-\epsilon)(\rho_F^{\Omega}(x)-\rho_F^{\Omega}(\bar{x})).
\end{align*}
Passing to the limit as $\epsilon\downarrow 0$ gives us $v\in \partial\rho_F^{\Omega}(\bar{x})$. This shows that $N(\bar{x};\Omega_r)\cap \Upsilon \subset\partial\rho_F^{\Omega}(\bar{x})$ and thus completes the proof of the theorem.   \end{proof}

In what follows, we need yet another evaluation of the subdifferential of the set-based Minkowski gauge function at out-of-set points. Given a compact and convex set $F\subset\mathbb{R}^q$ with $0 \in \textnormal{int}\,F$, and given a nonempty, closed, and  convex set $\Omega\subset\mathbb{R}^q$, the \emph{generalized projection} from a point $x\in\mathbb{R}^q$ to the set $\Omega$ relative to $F$, denoted by $\Pi_F(x;\Omega)$, is defined by
\begin{equation}\label{eq:generalized_projection}
\Pi_F(x;\Omega):=\big\{\omega \in \Omega\ \big|\ \rho_F^\Omega (x)=\rho_F(x-w)\big\}.
\end{equation}
The nonemptiness of \( \Pi_F(x; \Omega) \) is established in Proposition~\ref{prop:generalized_projection_nonempty} from the Appendix.\vspace*{0.05in}

\begin{proposition}\label{prop:subdifferential_estimation} Let $F$ and $\Omega$ be the sets from definition \eqref{eq:generalized_projection} of the generalized projection under the assumptions imposed therein, and let $\bar x\in{\rm dom}\,\rho^\Omega_F$. Then there is $\bar{\omega} \in \Pi_F(\bar{x};\Omega)$ with 
\begin{equation}\label{dist-proj}
\partial \rho_F^\Omega(\bar{x}) \subset \partial \rho_F(\bar{x} - \bar{\omega}).
\end{equation}
If in addition we have $\bar{x}\in \textnormal{int(dom}\,\rho_F^{\Omega})$ and the function $\rho_F$ is differentiable at $\bar{x}-\bar{\omega}$, then the inclusion in \eqref{dist-proj} becomes the equality
\begin{equation}\label{dist-prof1}
\partial \rho_F^\Omega(\bar{x}) = \partial \rho_F(\bar{x} - \bar{\omega})=\{\nabla \rho_F(\bar{x}-\bar{\omega})\}.
\end{equation}
\end{proposition}
\begin{proof}
Picking \( v \in \partial \rho_F^\Omega(\bar{x}) \), we deduce from Proposition~\ref{prop:relationship_between_set_based_and_Minkowski} that
\[
\langle v, x-\bar{x} \rangle \le \rho_F^\Omega(x)-\rho_F^\Omega(\bar{x}) = \inf_{\omega \in \Omega} \rho_F(x-\omega)-\inf_{\omega \in \Omega} \rho_F(\bar{x}-\omega)\le \rho_F(x-\bar{\omega})-\rho_F(\bar{x}-\bar{\omega})
\]
which gives us $v \in \partial \rho_F(\bar{x}-\bar{\omega})$ and thus verifies the inclusion in \eqref{dist-proj}.

To prove the second part of the proposition, we deduce from the convexity  of  \( \rho_F^{\Omega} \) by Proposition~\ref{prop:convexity_of_set_based_minkowski} and basic convex analysis that $\partial\rho_F^{\Omega}(\bar{x})\ne\emptyset$ if \( \bar{x} \in \operatorname{int}(\operatorname{dom}\,\rho_F^{\Omega})\). Furthermore,  the subdifferential \(\partial \rho_F^\Omega(\bar{x} - \bar{\omega})\) reduces to a singleton if \(\rho_F\) is differentiable at \(\bar{x} - \bar{\omega}\). This readily leads us to \eqref{dist-prof1} and thus completes the proof of all the claimed statements.
\end{proof}

Alternative calculations of subgradients of the set-based Minkowski gauge functions at out-of-state points, which involve the generalized projection \eqref{eq:generalized_projection} and the normal cone to the $r$-enlargement \eqref{eq:r_enlarge_set_based_minkowski}, are given in Theorem~\ref{sub-proj}.\vspace*{0.03in}

The next proposition establishes the  precise relationship between \( \rho_F^{\Omega} \) and its multiplication by positive scalars while demonstrating how positive coefficients and weights can be incorporated into the dynamic set.\vspace*{0.05in}

\begin{proposition}\label{prop:positive_scaled_set_based_minkowski}
Let $F \subset \mathbb{R}^q$ be a nonempty, closed, and convex set, and let $\Omega \subset \mathbb{R}^q$ be a nonempty set. Then we have the relationship
\begin{equation}\label{scalar}
\lambda \rho^{\Omega}_F(x)=\rho^{\Omega}_{\frac{F}{\lambda}}(x)\;\mbox{ for any }\;\lambda>0.
\end{equation}
\end{proposition}
\begin{proof}
We begin with the case where $\rho_F^{\Omega}(x)<\infty$. Given $\epsilon > 0$, choose $t$ such that $\rho_F^{\Omega}(x) \le t < \rho_F^{\Omega}(x) + \epsilon$. Then there exist vectors $\omega \in \Omega$ and $f \in F$ with $x = \omega + t f$. Rewrite the latter in the form $x = \omega + \lambda t \cdot \frac{f}{\lambda}$ and get $x \in \Omega +(\lambda t) \frac{F}{\lambda}$, which implies in turn that $\rho_{\frac{F}{\lambda}}^{\Omega}(x) \le \lambda t < \lambda(\rho_F^{\Omega}(x) + \epsilon)$. 
Letting $\epsilon\downarrow 0$ gives us $\rho_{\frac{F}{\lambda}}^{\Omega}(x) \le \lambda \rho_F^{\Omega}(x)$.

To verify the reverse inequality in\eqref{scalar}, pick $\alpha > 0$ and choose a number $\beta$ such that 
$$
\rho_{\frac{F}{\lambda}}^{\Omega}(x) \le\beta < \rho_{\frac{F}{\lambda}}^{\Omega}(x) + \alpha.
$$
Then there are $\omega \in \Omega$ and $\lambda^{-1}f\in\lambda^{-1}F$ with $x = \omega + \lambda^{-1}\beta f$. This yields $x \in \Omega + \lambda^{-1}\beta F$, which leads us to the inequalities
$$
\rho_F^{\Omega}(x) \le\lambda^{-1}\beta <\lambda^{-1}\Big(\rho_{\frac{F}{\lambda}}^{\Omega}(x) + \alpha\Big),
$$
which yield $\lambda \rho_F^{\Omega}(x) < \rho_{\frac{F}{\lambda}}^{\Omega}(x) + \alpha$. Letting $\alpha\downarrow 0$ implies that $\lambda \rho_F^{\Omega}(x) \le \rho_{\frac{F}{\lambda}}^{\Omega}(x)$ and thus justifies the claimed relationship in \eqref{scalar}.

It remains to consider the case where $\rho_F^{\Omega}(x)=\infty$ and therefore $\{t\ge0 \ |\ x\in\Omega+tF\}=\emptyset$. The latter readily implies that $\{t\ge 0 \ | \ x\in \Omega+t\frac{F}{\lambda}\}=\emptyset$ and thus
\[\lambda\rho_F^{\Omega}(x)=\infty=\rho_{\frac{F}{\lambda}}^{\Omega}(x),\]
which completes the proof of the proposition.
\end{proof}

To proceed further, we introduce a new function important in the subsequent applications. Let $F \subset \mathbb{R} ^n$ be a nonempty, closed, bounded, and convex set, and let $\Omega \subset \mathbb{R}^q$ be nonempty and bounded. The \emph{maximal set-based Minkowski gauge} (MSMG) is defined by
\begin{equation}\label{eq:MSMG_function}
\bar{\rho}_F^{\Omega}(x):=\inf\big\{t\ge0 \  \big| \ x-\Omega \subset tF\big\},\quad x \in \mathbb{R}^q.
\end{equation}
In simple terms, the set \( x - \Omega \) includes all vectors that start from a point in \( \Omega \) and end in \( x \). If we view \( F \) as a dynamic set, then \( \bar{\rho}_F^{\Omega}(x)\) represents the {\em minimal time} required to reach \( x \) from the {\em farthest point} in \( \Omega \), \textcolor{black}{see Fig. \ref{fig:mdf-msmg}(a) as an example}. Lemma~\ref{lemma:relationship_MSMG_Maximal_Time_Function} from the Appendix provides the connection between \( \bar{\rho}_F^{\Omega}(x)\) and the {\em maximal time function} \(C_{\Omega}^F(x)\) defined in (\ref{eq:Maximal_time_function_definition}), The application of the MSMG function will be presented in Section~\ref{sec:application}. Now we derive a relationship between  \eqref{eq:MSMG_function} and the classical Minkowski gauge function $\rho_F$.\vspace*{0.05in}

\begin{proposition}\label{prop:relationship_MSMG_Minkowski} Let $F \subset \mathbb{R}^q$ be a compact and convex set with $0\in \textnormal{int }F$, and let $\Omega \subset \mathbb{R}^q$ be nonempty and bounded. Then we have the representation
\begin{equation}\label{eq:relationship_MSMG_Minkowski}
\bar{\rho}_F^{\Omega}(x)=\sup_{\omega \in \Omega} \rho_F(x-\omega).
\end{equation} 
\end{proposition}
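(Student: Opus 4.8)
The plan is to mirror the proof of Proposition~\ref{prop:relationship_between_set_based_and_Minkowski}, replacing the infimum over $\omega$ by a supremum and the inclusion $\Omega + tF$ by $x-\Omega \subseteq tF$. Write $\bar{\rho} := \bar{\rho}_F^{\Omega}(x)$ and $s := \sup_{\omega \in \Omega}\rho_F(x-\omega)$. First I would record two background facts used throughout: (i) since $0\in\operatorname{int}F$ and $F$ is bounded, the set $x-\Omega$ (bounded because $\Omega$ is bounded) is contained in $tF$ for all sufficiently large $t$, so the infimum defining $\bar\rho$ is over a nonempty set and hence $\bar\rho<\infty$; and (ii) by convexity of $F$ together with $0\in F$, one has $uF\subseteq tF$ whenever $0\le u\le t$.

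For the inequality $s\le\bar\rho$, fix $\epsilon>0$ and choose $t\ge 0$ with $\bar\rho\le t<\bar\rho+\epsilon$ and $x-\Omega\subseteq tF$. Then for every $\omega\in\Omega$ we have $x-\omega\in tF$, which yields $\rho_F(x-\omega)\le t$ (this is trivial if $t=0$, since then $x-\omega=0$). Taking the supremum over $\omega\in\Omega$ gives $s\le t<\bar\rho+\epsilon$, and letting $\epsilon\to 0$ we obtain $s\le\bar\rho$; in particular $s<\infty$, which legitimizes the $\epsilon$-perturbations below.

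For the reverse inequality $\bar\rho\le s$, fix $\epsilon>0$. For each $\omega\in\Omega$, since $\rho_F(x-\omega)\le s$, the definition of $\rho_F$ as an infimum provides some $t_\omega$ with $\rho_F(x-\omega)\le t_\omega<s+\epsilon$ and $x-\omega\in t_\omega F$; by fact (ii) this gives $x-\omega\in (s+\epsilon)F$. As $\omega\in\Omega$ was arbitrary, $x-\Omega\subseteq(s+\epsilon)F$, hence $\bar\rho\le s+\epsilon$, and letting $\epsilon\to 0$ yields $\bar\rho\le s$. Combining the two inequalities gives $\bar{\rho}_F^{\Omega}(x)=\sup_{\omega\in\Omega}\rho_F(x-\omega)$.

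The argument is essentially routine; the only points needing care are ensuring both quantities are finite (this is exactly where boundedness of $\Omega$ and $0\in\operatorname{int}F$ enter) and the passage from the pointwise bounds $\rho_F(x-\omega)\le s$ to the single inclusion $x-\Omega\subseteq(s+\epsilon)F$, which relies on the nesting $uF\subseteq tF$ for $u\le t$. I do not anticipate a genuine obstacle; alternatively one could invoke closedness of $F$ to conclude $x-\omega\in\rho_F(x-\omega)F\subseteq sF$ directly and drop the $\epsilon$ in the second half, but the $\epsilon$-version keeps the proof parallel to that of Proposition~\ref{prop:relationship_between_set_based_and_Minkowski}.
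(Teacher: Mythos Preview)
Your proposal is correct and follows essentially the same two-inequality $\epsilon$-argument as the paper's own proof. The only cosmetic difference is in the direction $\bar\rho\le s$: the paper approaches from below (setting $t=\bar\rho-\epsilon$ and exhibiting a single $\omega$ with $x-\omega\notin tF$, which necessitates a separate treatment of the case $\bar\rho=0$), whereas you approach from above via the nesting $uF\subseteq tF$, which handles all cases uniformly and is arguably a bit cleaner.
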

\begin{proof}
In the case where \(\bar{\rho}_F^{\Omega}(x) = 0\), we clearly have \(\Omega = \{x\}\). Thus \(\sup_{\omega \in \Omega} \rho_F(x - \omega) = 0\), which justifies the equality \eqref{eq:relationship_MSMG_Minkowski} in this case. If \(\bar{\rho}_F^{\Omega}(x) > 0\), pick an arbitrary number \(\epsilon > 0\) and put \(t:= \bar{\rho}_F^{\Omega}(x) - \epsilon>0\). It follows from  the definition of \(\bar{\rho}_F^{\Omega}(x)\) that there exists \(\omega \in \Omega\) such that \(x - \omega \notin tF\), which implies that 
$$
t < \rho_F(x - \omega) \le \sup_{\omega \in \Omega} \rho_F(x - \omega).
$$
Since \(t \to \bar{\rho}_F^{\Omega}(x)\) as \(\epsilon\downarrow 0\), we arrive at \(\bar{\rho}_F^{\Omega}(x) \le \sup_{\omega \in \Omega} \rho_F(x - \omega)\). 

To verify the reverse inequality in \eqref{eq:relationship_MSMG_Minkowski}, let \(\beta\) be such that \(\bar{\rho}_F^{\Omega}(x) \le\beta< \bar{\rho}_F^{\Omega}(x) + \alpha\), where \(\alpha > 0\) is arbitrary. The definition of \(\bar{\rho}_F^{\Omega}(x)\) tells us that \(x - \omega \in \beta F\) for all \(\omega \in \Omega\), and hence $\rho_F(x-\omega)\le\beta$ on $\Omega$. Therefore, we get
$$
\sup_{\omega \in \Omega} \rho_F(x - \omega) \le\beta<\bar{\rho}_F^{\Omega}(x)+\alpha,
$$
which yields the claimed reverse estimate in \eqref{eq:relationship_MSMG_Minkowski} as $\alpha\downarrow 0$ and thus completes the proof.
\end{proof}

The final theorem of this section reveals the key properties of the MSMG function \eqref{eq:MSMG_function} that are fundamental for our practical applications given below.\vspace*{0.05in}

\begin{theorem}\label{theorem:MSMG_properties}
Let $F\subset \mathbb{R}^q$ be a compact and convex set with $0\in \textnormal{int}\,F$, and let $\Omega \subset \mathbb{R}^q$ be nonempty, compact, and convex. Then we have the following assertions:

{\bf(i)} The MSMG function is convex.

{\bf(ii)} The MSMG function is Lipschitz continuous on $\mathbb{R}^q$.

{\bf(iii)} The subdifferential of the MSMG function  at $\bar{x}\in \textnormal{dom}\bar{\rho}_F^{\Omega}$ satisfies the inclusion
\[
\partial \rho_F (\bar{x}-\omega^\prime) \subset \partial \bar{\rho}_F^{\Omega}(\bar{x}),
\]
where $\omega^\prime \in \Bar{\Pi}_F(\bar{x};\Omega):=\{\omega \in \mathbb{R} \ |\  \bar{\rho}_F^{\Omega}(\bar{x})=\rho_F(\bar{x}-\omega) \}$.

{\bf(iv)} The subdifferential of the MSMG function at $\bar{x}\in \textnormal{dom}\, \bar{\rho}_{F}^{\Omega}$ is represented by
\begin{equation}\label{mar-rep}
\partial \bar{\rho}_F^{\Omega}(\bar{x})=\textnormal{co} \left \{ \bigcup_{\omega^\prime \in \Bar{\Pi}_{F}(\bar{x};\Omega)} N(\omega^\prime;\bar{x}-rF)\cap \Upsilon^\prime \right \},
\end{equation}
where $\Upsilon^\prime:=\{v \in \mathbb{R}^q \ | \ \sigma_{-F}(v)=1\}$, $\omega^\prime \in \Bar{\Pi}_F(\bar{x};\Omega)$, and  $r:=\rho_{-F}^{\{\bar{x}\}}(\omega^\prime)$.
\end{theorem}
\begin{proof}
It follows from Proposition~\ref{prop:convexity_of_set_based_minkowski} that $\rho_F$ is convex. Then we deduce from Propositions~\ref{prop:relationship_MSMG_Minkowski} and \ref{prop:convexity_of_sum_funct_and_max_funct} that $\bar{\rho}_F^{\Omega}(\cdot)$ is convex as well, which is claimed in (i). To verify (ii), we have by \parencite[Theorem~6.14]{mordukhovich2023easy} that $\rho_F$ is subadditive. Combining this with Proposition~\ref{prop:relationship_MSMG_Minkowski} ensures the implications
\begin{align*}
& \rho_F(x-\omega) \le \rho_F(x-y) + \rho_F(y-\omega)  \quad \text{for all } \omega \in \Omega \\
& \Longrightarrow \sup_{\omega \in \Omega} \rho_F(x-\omega) \le \rho_F(x-y) + \sup_{\omega \in \Omega}\rho_F(y-\omega)\\
& \Longrightarrow  \sup_{\omega \in \Omega} \rho_F(x-\omega) - \sup_{\omega \in \Omega}\rho_F(y-\omega) \le \rho_F(x-y) \\
& \Longrightarrow  \bar{\rho}_F^{\Omega}(x) -\bar{\rho}_F^{\Omega}(y) \le \rho_F(x-y).
\end{align*}
It follows from  \parencite[Proposition~6.18]{mordukhovich2023easy} that there exists a constant $i\ge 0$ such that \( \rho_F(x) \leq l \|x\| \). Therefore, we have
\[
\bar{\rho}_F^{\Omega}(x) -\bar{\rho}_F^{\Omega}(y) \le \rho_F(x-y)\le l(x-y),
\]
which justifies the Lipschitz continuity of $\bar{\rho}_F^{\Omega}(\cdot)$  on $\mathbb{R}^q$ due to the arbitrary choice of $x$ and $y$. 

To verify assertion (iii), observe that $\Bar{\Pi}_F(\bar{x};\Omega)\ne\emptyset$ as it follows from the classical Weierstrass existence theorem due to the continuity of $\rho_F$ and the imposed compactness of $\Omega$. Picking any subgradient $v \in \partial \rho_F(\bar{x}-\omega^\prime)$, we have 
\[
\langle v, x-\bar{x} \rangle \le \rho_F(x-\omega^\prime)-\rho_F(\bar{x}-\omega^\prime) \le \sup_{\omega \in \Omega} \rho_F(x-\omega)-\sup_{\omega \in \Omega} \rho_F(\bar{x}-\omega)=\bar{\rho}_F^{\Omega}(x)-\bar{\rho}_F^{\Omega}(\bar{x}).
\]
Therefore, $v \in \partial \bar{\rho}_F^{\Omega}(\bar{x})$, which implies that $ \partial \rho_F(\bar{x}-\omega^\prime) \subset \partial \bar{\rho}_F^\Omega(\bar{x})$ as claimed in (iii). 

It remains to prove assertion (iv). Let us first show that 
\begin{equation}\label{minF}
\sup_{\omega \in \Omega} \rho_{-F}^{\{\bar{x}\}}(\omega) = \rho_{-F}^{\{\bar{x}\}}(\omega'),
\end{equation}
where \( \omega'\in\Bar{\Pi}_{F}(\bar{x}; \Omega) \). To this end, apply Proposition~\ref{prop:relationship_between_set_based_and_Minkowski} to $\rho_{-F}^{\{\bar{x}\}}(\omega)$, which gives us 
$\sup_{\omega \in \Omega} \rho_{-F}^{\{\bar{x}\}}(\omega) = \sup_{\omega \in \Omega} \rho_{-F}(\omega - \bar{x}).$
Using Lemma~\ref{lemma:Minkowski_of_F_and_negative_F} from the Appendix, we get
$\rho_{-F}(\omega - \bar{x}) = \rho_F(\bar{x} - \omega)$. It follows from the definition of \( \Bar{\Pi}_F(\bar{x}; \Omega)\) and Proposition~\ref{prop:relationship_MSMG_Minkowski} that any fixed generalized projection vector \( \omega'\in\bar{\Pi}_F(\bar{x}; \Omega) \) satisfies the equality
\[
\rho_F(\bar{x} - \omega') = \sup_{\omega \in \Omega} \rho_F(\bar{x} - \omega),
\]
which can be equivalently written as
\[
\rho_{-F}(\omega' - \bar{x}) = \sup_{\omega \in \Omega} \rho_{-F}(\omega - \bar{x})
\]
and  tells us therefore that
\begin{equation}\label{eq:equation_in_proof_MSMG_subdifferential}
\sup_{\omega \in \Omega} \rho_{-F}^{\{\bar{x}\}}(\omega) = \rho_{-F}^{\{\bar{x}\}}(\omega').
\end{equation}
Employing Proposition~\ref{prop:MSMG_with_negative_F} from the Appendix together with  (\ref{eq:equation_in_proof_MSMG_subdifferential}) leads us to the equalities
\[
\bar{\rho}_F^{\Omega}(\bar{x})=\sup_{\omega \in \Omega} \rho_{-F}^{\{\bar{x}\}}(\omega)=\rho_{-F}^{\{\bar{x}\}}(\omega^\prime).
\]     
Finally, it follows from Theorem~\ref{prop:subdifferential2} that
\[
\partial \rho_{-F}^{\{\bar{x}\}}(\omega^\prime)=N(\omega^\prime;\bar{x}-rF)\cap \Upsilon^\prime.
\]
This allows us to deduce from the subdifferential maximum rule in Theorem~\ref{theorem:max_rule} the fulfillment of representation \eqref{mar-rep} and thus complete the proof of this theorem.
\end{proof}

\section{Well-Posedness of the Generalized SFT Model}\label{sec:existence-uniqueness}

In this section, we verify the {\em well-posedness} of our main generalized SFT model and hence of its specification, meaning by well-posedness the {\em existence} and {\em uniqueness} of optimal solutions. Given two points  \( x, y \in \mathbb{R}^q \), denote the straight line passing through them by
\begin{equation} \label{eq:connecting-line}
L(x, y) := \{ \lambda x + (1 - \lambda)y \mid \lambda \in \mathbb{R} \}.
\end{equation}

The first theorem establishes the {\em existence} of optimal solutions to the generalized SFT problem \eqref{eq:SFT} under a simple assumption.\vspace*{0.05in}

\begin{theorem}\label{theorem:SFT-existance}
In the setting of the generalized SFT problem~\eqref{eq:SFT}, an optimal solution exists if at least one of the sets $\Omega_0$ and $\Omega_i$ as $i \in I_k$, $k = 1, \dots, n$, is bounded.
\end{theorem}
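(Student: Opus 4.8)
The plan is to reduce \eqref{eq:SFT} to the minimization of a continuous function over a nonempty compact set and then invoke the classical Weierstrass theorem. First I would record that the objective $S$ in \eqref{eq:SFT-definition} is real-valued and (Lipschitz) continuous on all of $\mathbb{R}^q$: since $0\in\operatorname{int}F_i$, Proposition~\ref{prop:proof_of_Lipshitz} shows that each $\rho_{F_i}^{\Omega_i}$ is finite-valued and Lipschitz continuous, and a finite sum—followed by a finite maximum—of Lipschitz functions is again Lipschitz, so $S$ is finite and continuous on $\mathbb{R}^q$ (it is moreover convex by Propositions~\ref{prop:convexity_of_sum_funct_and_max_funct} and~\ref{prop:convexity_of_set_based_minkowski}, though convexity plays no role in the existence argument). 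Recall also that $\Omega_0$ is nonempty and closed.

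\emph{Case 1: $\Omega_0$ is bounded.} Then $\Omega_0$ is compact, and the continuous function $S$ attains its minimum over it; this minimizer is an optimal solution of \eqref{eq:SFT}.

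\emph{Case 2: $\Omega_{i_0}$ is bounded for some $i_0\in I_{k_0}$.} Here $\Omega_0$ need not be compact, so I would instead establish coercivity of $S$ on $\mathbb{R}^q$ and then restrict to a sublevel set. Since every summand $\rho_{F_i}^{\Omega_i}$ is nonnegative and $k_0$ is among the indices defining the maximum,
\[
S(x)\ \ge\ \sum_{i\in I_{k_0}}\rho_{F_i}^{\Omega_i}(x)\ \ge\ \rho_{F_{i_0}}^{\Omega_{i_0}}(x)\ =\ \inf_{\omega\in\Omega_{i_0}}\rho_{F_{i_0}}(x-\omega),
\]
the last equality by Proposition~\ref{prop:relationship_between_set_based_and_Minkowski}. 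As $F_{i_0}$ is compact it lies in some ball $\bar B(0,R)$, which yields the elementary lower bound $\rho_{F_{i_0}}(y)\ge\|y\|/R$ for all $y$; combining this with the reverse triangle inequality and with $M:=\sup_{\omega\in\Omega_{i_0}}\|\omega\|<\infty$ (finite because $\Omega_{i_0}$ is bounded) gives $S(x)\ge(\|x\|-M)/R\to\infty$ as $\|x\|\to\infty$. Now fix any $x_0\in\Omega_0$; the set $C:=\{x\in\Omega_0\mid S(x)\le S(x_0)\}$ is nonempty, closed (the intersection of the closed set $\Omega_0$ with a closed sublevel set of the continuous $S$), and bounded (by coercivity), hence compact. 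Minimizing the continuous $S$ over $C$ produces a point $\bar x\in C$, and since $S(x)>S(x_0)\ge S(\bar x)$ for every $x\in\Omega_0\setminus C$, $\bar x$ is a global minimizer of $S$ over $\Omega_0$.

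The only step that needs a little care is the coercivity estimate in Case~2—specifically the bound $\rho_{F}(y)\ge\|y\|/R$ coming from compactness of $F$, together with the passage from $\|x-\omega\|$ to $\|x\|-M$—but both are routine, and the remainder is a direct application of the Weierstrass existence theorem.
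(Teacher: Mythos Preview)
Your proof is correct and follows essentially the same strategy as the paper: show that the sublevel sets of $S$ over $\Omega_0$ are bounded (hence compact, since $\Omega_0$ is closed and $S$ is continuous) and then invoke a Weierstrass-type existence result. The paper carries this out more uniformly, observing that $\mathcal{L}_\lambda(S)\subseteq\Omega_0\cap\bigcap_{k}\bigcap_{i\in I_k}(\Omega_i+\lambda F_i)$ and noting that this intersection is bounded whenever any $\Omega_0$ or $\Omega_i$ is bounded, then appealing to a black-box corollary. Your route is slightly more explicit and elementary: you split off the trivial case $\Omega_0$ compact, and in the other case you derive a concrete coercivity estimate $S(x)\ge(\|x\|-M)/R$ from $F_{i_0}\subset\bar B(0,R)$ and the boundedness of $\Omega_{i_0}$, then apply Weierstrass directly on a compact sublevel set. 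The advantage of your argument is that it makes clear convexity is nowhere needed; the paper's version is shorter because it packages the coercivity step into a single set-theoretic containment.
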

\begin{proof}
For any $\lambda\ge 0$, define the {\em sublevel set} associated with the function $S(x)$ from \eqref{eq:SFT-definition} relative to the constraint set $\Omega_0$ in \eqref{eq:SFT} by
\begin{equation*}
\mathcal{L}_{\lambda}(S) :=\big\{x \in \Omega_0\;\big|\;S(x) \le \lambda\big\}.
\end{equation*}
This implies that \( \sum_{i \in I_k} \rho_{F_i}^{\Omega_i}(x) \le \lambda \) as $k = 1, \dots, n$ whenever $x \in \Omega_0$. Therefore, we have $\mathcal{L}_\lambda(S) \subset A$ with the set $A$ given by
\[
A := \big\{x \in \Omega_0\;\big|\;\rho_{F_i}^{\Omega_i}(x) \le \lambda \ \text{for all} \ i \in I_k, \ k = 1, \dots, n \big\},
\]
Since the sets  $\Omega_i$ are closed, it follows from $\rho_{F_i}^{\Omega_i}(x) \le \lambda$ that $x \in \Omega_i + \lambda F_i$. Thus
\[
A = \Omega_0 \cap \left( \bigcap_{k=1}^{n} \left( \bigcap_{i \in I_k} (\Omega_i + \lambda F_i) \right) \right).
\]
This tells us that the boundedness of one of the sets $\Omega_i$ for some $i \in I_k$ with $k = 1,\dots, n$ yields the boundedness of the set $A$, and hence this property of the sublevel set $\mathcal{L}_\lambda(S)$. The existence of optimal solutions to \eqref{eq:SFT} follows now from 
\parencite[Corollary 7.10]{mordukhovich2023easy} due to the classical Weierstrass existence theorem.
\end{proof}

Before deriving the uniqueness result for \eqref{eq:SFT}, we present the following lemma of its independent interest, which is used in the proof of the uniqueness theorem below.\vspace*{0.05in{}}

\begin{lemma}\label{prop:strictly_convex_fermat_Torricelli} In addition to the standing assumptions in the formulation of the generalized SFT problem \eqref{eq:SFT}, suppose that the sets $F_i$ and $\Omega_i$, $i=1,\dots, m$, are strictly convex. If furthermore for any line $L(x, y)$ from \eqref{eq:connecting-line} with $x, y \in \Omega_0$ and $x \neq y$, there exists $j \in \{1, \dots, m\}$ such that $L(x, y) \cap \Omega_j = \emptyset$, then the function $T(x)$ defined in \eqref{eq:Generalized-Fermat-equation} is strictly convex on the set $\Omega_0$.
\end{lemma}
\begin{proof}
It follows from Propositions~\ref{prop:convexity_of_sum_funct_and_max_funct}, \ref{prop:convexity_of_set_based_minkowski} that \( T(x) \) is convex.
Suppose on the contrary that \( T(x) \) is not strictly convex. Then there exist \( x, y \in \Omega_0 \) with \( x \neq y \) and \( \lambda \in (0,1) \) such that
\[
T(\lambda x + (1 - \lambda)y) = \lambda T(x) + (1 - \lambda) T(y).
\]
Since each \( \rho_{F_i}^{\Omega_i} \) is convex, we get by definition that
\[
\rho_{F_i}^{\Omega_i}(\lambda x + (1 - \lambda)y)\le\lambda \rho_{F_i}^{\Omega_i}(x) + (1 - \lambda) \rho_{F_i}^{\Omega_i}(y)\;\text{ whenever } \  i = 1,\dots, n.
\]
As assumed, there exists \( j \) with \( L(x, y) \cap \Omega_j = \emptyset \).  Pick \( \omega \in \Pi_{F_j}(x; \Omega_j) \), \( v \in \Pi_{F_j}(y; \Omega_j) \)  and then write \( \rho_{F_j}^{\Omega_j}(x) = \rho_{F_j}(x - \omega) \) and \( \rho_{F_j}^{\Omega_j}(y) = \rho_{F_j}(y - v) \).  This gives us the relationships
\begin{equation*}
\begin{split}
\lambda \rho_{F_j}(x - \omega) + (1 - \lambda) \rho_{F_j}(y - v) 
&= \lambda \rho_{F_j}^{\Omega_j}(x) + (1 - \lambda) \rho_{F_j}^{\Omega_j}(y) \\
&= \rho_{F_j}^{\Omega_j}(\lambda x + (1 - \lambda) y) \\
&\le \rho_{F_j}(\lambda x + (1 - \lambda) y - (\lambda \omega + (1 - \lambda) v)) \\
&\le \lambda \rho_{F_j}(x - \omega) + (1 - \lambda) \rho_{F_j}(y - v),
\end{split}
\end{equation*}
which imply therefore that $ \lambda \omega+(1-\lambda)v\in \Pi_{F_j}(\lambda x+(1-\lambda)y;\Omega_j)$. We know by Proposition~\ref{prop:projection&border} that $\omega \in \textnormal{bd}(\Omega_j)$, $v\in \textnormal{bd}(\Omega_j)$, and $\lambda \omega+(1-\lambda)v\in \textnormal{bd}(\Omega_j)$. Since the set $\Omega_j$ is strictly convex, it follows that $\omega=v$. Furthermore, the function \( \rho_F \) is positively homogeneous, and hence
\[
\rho_{F_j}( \lambda x+(1-\lambda)y-\omega)= \lambda \rho_{F_j}(x-\omega)+(1-\lambda)\rho_{F_j}(y-\omega)=\rho_{F_j}(\lambda(x-\omega))+\rho_{F_j}((1-\lambda)(y-\omega)).
\]
By $x,y \notin \Omega_j$ we get $x-\omega, y-\omega \ne 0$.  Moreover,
Proposition~\ref{prop:equality_minkowski_strictly_convex} tells us that $\alpha>0$ with $\lambda(x-\omega)=\alpha(1-\lambda)(y-\omega)$. This brings us to the representation
\[
x-\omega=\beta(y-\omega)\ \ \textnormal{for} \ \ \beta=\frac{\alpha (1-\lambda)}{\lambda}\ne 1,
\]
which indicates  that $\omega=\frac{1}{1-\beta}x-\frac{\beta}{1-\beta}y \in L(x,y)$. Since $\omega \in \Omega_j$, we get  $L(x,y) \ \cap \ \Omega_j \neq \emptyset$, a  contradiction that  comoletes the proof of the theorem. 
\end{proof}

Finally, we are ready to establish the {\em uniqueness} of optimal solutions to the generalized SFT problem \eqref{eq:SFT} under appropriate assumptions. 
\vspace*{0.05in}

\begin{theorem}\label{theorem:SFT-uniqness} In addition to the standing assumptions on the data of \eqref{eq:SFT}, suppose that:

{\bf(i)} The sets $F_i$ and $\Omega_i$ are strictly convex for $i \in I_k$ with $k=1,\dots,n$.

{\bf(ii)} For each index set \( I_k \) with \( k \in \{1,\dots, n\} \), there exists a set \( \Omega_j \) among \( \Omega_i \) with \( i \in I_k \) such that \( L(x, y) \cap \Omega_j = \emptyset \) whenever \( x, y \in \Omega_0 \) with $x\ne y$. 

{\bf(iii)} At least one of the sets $\Omega_0$ and  \( \Omega_i \), $i\in I_k$, \( k = 1, \dots, n \), is bounded.\\[0.5ex]
Then problem \eqref{eq:SFT} admits a unique optimal solution.

\end{theorem}
\begin{proof}
The existence of solutions to \eqref{eq:SFT} is proved in Theorem~\ref{theorem:SFT-existance}. To verify the uniqueness, we get from Lemma~\ref{prop:strictly_convex_fermat_Torricelli} that the sum $\sum_{i \in I_k} \rho_{F_i}^{\Omega_i}(x)$ is strictly convex for each $k=1,\dots,n$. Furthermore,  Proposition~\ref{lemma:max_strictly_convex} tells us that the  function  \( S(x) \) is strictly convex as well, which therefore ensures the claimed solution uniqueness.
\end{proof}

\section{Weighted and  Extended Formulations of the Generalized SFT Problem}\label{weight}

Motivated by prior works that studied the weighted generalized Fermat–Torricelli and Sylvester problems \parencite{mordukhovich2012applications,nickel2003approach, plastria2009asymmetric,PLASTRIA201698,PELEGRIN1985327}, we conclude this section by presenting and discussing the \emph{weighted formulation} of the generalized SFT problem given, in the setting of (\ref{eq:SFT}), by
\begin{equation*}
\min\ \tilde{S}(x) \quad \text{subject to} \quad x \in \Omega_0,
\end{equation*}
where the objective function is defined, with the weights \( \omega_i > 0 \) for \( i \in I_k \) and \( k = 1, \dots, n \), as
\begin{equation}
\tilde{S}(x) := \max \Big\{ \sum_{i \in I_k} \omega_i \rho_{F_i}^{\Omega_i}(x) \ \Big| \ k = 1, \dots, n\Big\}.
\end{equation}
It is worth noting that the original formulation of the generalized SFT problem in (\ref{eq:SFT}) inherently includes its weighted counterpart. Indeed, it follows from Proposition~\ref{prop:positive_scaled_set_based_minkowski} that
\[
\omega_i \rho_{F_i}^{\Omega_i}(x) = \rho_{\frac{F_i}{\omega_i}}^{\Omega_i}(x),
\]
which allows us to express the weighted version in the same form as the original problem \eqref{eq:SFT}
where the (modified) cost function is given by
\begin{equation*}
S(x) := \max\Big\{ \sum_{i \in I_k} \rho_{\tilde{F}_i}^{\Omega_i}(x) \ \Big| \ k = 1, \dots, n \Big\}\;\mbox{ with }\;\tilde{F}_i := F_i / \omega_i.
\end{equation*}
{\color{black} This facilitates the investigation of well-posedness for the weighted generalized SFT problem using Section~\ref{sec:existence-uniqueness}.}

Furthermore, we can explore the following {\em extended version} of the generalized SFT problem that incorporates both the set-based Minkowski gauge and the MSMG functions. Let $\bar{m} \in \mathbb{N}$ and the index sets \( J_k \neq \emptyset \) for \( k = 1, \dots, n \) form a partition of \( J := \{1,\dots, \bar{m}\} \) such that 
\begin{equation*}
\bigcup_{k=1}^{n} J_{k} = J\;\mbox{ and }\;J_{k} \cap J_{l} = \emptyset \;\mbox{ whenever }\; k \neq l\;\mbox{ with }\, k, l \in \{1,\dots, n\}.
\end{equation*}
For $j \in J_k$ with $k=1,\dots n$, let \( \bar{F}_j \subset \mathbb{R}^q \) be a compact and convex set with \( 0 \in \operatorname{int}\,\bar{F}_j \), and let \( \Theta_j \) be nonempty, compact, and convex subsets of \( \mathbb{R}^q \). The extension of \eqref{eq:SFT} is formulated by
\begin{equation}\label{eq:SFT-extension}
\min\;\widehat{S}(x)\;\mbox{ subjet to  }\; x\in \Omega_0 , 
\end{equation}
where the  extended cost function is defined by
\begin{equation}\label{eq:SFT-extension-definition}
\widehat{S}(x):=\max\Big\{\sum_{i \in I_k} \rho_{F_i}^{\Omega_i}(x) +\sum_{j \in J_{k}}\bar{\rho}_{\bar{F}_j}^{\Theta_j}(x)\  \Big| \ k=1,...,n\Big\}.
\end{equation}
Applications of the generalized SFT problem in the extended form (\ref{eq:SFT-extension}) are demonstrated in a real-world scenario in Section~\ref{sec:application}.

In the case where $k=1$, $|I_1|=m$, and $|J_1|=\bar{m}$ in problem (\ref{eq:SFT-extension}), the extended version of the generalized Fermat-Torricelli problem is represented by
\begin{equation}\label{eq:eq:Extended-Generalized-Fermat-Model}
\min \Hat{T}(x)\;\mbox{ subject to }\; x\in \Omega_0
\end{equation}
with the cost function given in the summation form
\begin{equation}\label{eq:Extended-Generalized-Fermat-equation}
\Hat{T}(x):= \sum_{i=1}^{m} \rho_{F_i}^{\Omega_i}(x)+\sum_{j=1}^{\bar{m}} \Bar{\rho}_{\Bar{F}_j}^{\Theta_j}(x).
\end{equation}
Applications of \eqref{eq:eq:Extended-Generalized-Fermat-Model}, \eqref{eq:Extended-Generalized-Fermat-equation} to disaster relief operations using UAVs are discussed in Section~\ref{sec:Multiple_Example_MSMG}.

Finally in this section, we formulate an extended version of the generalized Sylvester problem  given by  
\begin{equation}\label{eq:extention-sylvester}
\min \Hat{Y}(x)\;\mbox{ subject to }\;x \in \Omega_0,
\end{equation}
where  both components of the cost functions are represented in the maximun function form
\begin{equation}\label{eq:extension-sylvester-defenition}
\Hat{Y}(x) =\big\{ \mathcal{Y}_1(x), \mathcal{Y}_2(x)\big\}\;\mbox{ with }\;\mathcal{Y}_1(x): =\max_{i=1,\dots,m} \rho_{F_i}^{\Omega_i}(x), 
\quad 
\mathcal{Y}_2(x): = \max_{j=1,\dots,\bar{m}} \Bar{\rho}_{F_j}^{\Theta_j}(x).
\end{equation}
Both versions of the generalized Sylvester problem  proposed in (\ref{eq:generalized-Syvester-Model}) and (\ref{eq:extention-sylvester}) constitute novel models whose applications to UAV-based disaster relief operations are discussed in 
Sections~\ref{sec:Multiple_UAV_example} and \ref{sec:Multiple_Example_MSMG}. Observe that the convexity of the functions $\Hat{S}(x)$, $\Hat{T}(x)$, and $\Hat{Y}(x)$, defined in (\ref{eq:SFT-extension-definition}), (\ref{eq:Extended-Generalized-Fermat-equation}), and (\ref{eq:extension-sylvester-defenition}), respectively, follows directly from Propositions~\ref{prop:convexity_of_set_based_minkowski}, \ref{prop:convexity_of_sum_funct_and_max_funct} and Theorem~\ref{theorem:MSMG_properties}. Furthermore, their subgradient evaluations can be obtained by using Theorems~\ref{prop:subdifferential_1}, \ref{prop:subdifferential2} and Proposition~\ref{prop:subdifferential_estimation} combined with Theorems~\ref{theorem:MSMG_properties}, \ref{theorem:max_rule}, and \ref{theorem:sum_rule_subdifferential}.

\section{Applications of the New Location Science Models to Disaster Relief Operations} \label{sec:application}

After an earthquake, some people may be trapped under debris but still alive. If they have access to their cell phones, one of the most effective ways they can help themselves is by sending their location to family members or posting it on social media {(see, e.g., \parencite{toraman2023tweets} for a real case). This can greatly increase their chances of being found and rescued. However, earthquakes often damage or destroy communication systems such as cell towers and internet cables. As a result, even if someone has a working phone, it may not be possible to send messages or connect to the Internet.

In such cases, UAVs may play a crucial role. By flying over the damaged area, UAVs can provide wireless connections or DTNs (delay-tolerant networks) while allowing trapped individuals to send their location information (see Fig.~\ref{fig:earth-quake-senario}). This support is able make rescue operations faster and more effective, and thus help saving more lives. Despite their advantages, UAVs also present several challenges; in particular, they are limited by short battery life and are vulnerable to environmental conditions, especially high winds. Moreover, rescue teams often deploy a heterogeneous fleet consisting of UAVs with varying speeds, ranges, and operational capabilities adding complexity to coordination and mission planning.

In this section, we examine several scenarios in which a truck serves as a mobile station for deploying and recharging UAVs. The primary objective is to determine the optimal location of the truck (see Fig. \ref{fig:earth-quake-senario}) to minimize the \emph{transition time}, i.e., the period when the UAV is traveling between the service area and the mobile station, resulting in a temporary loss of network service. Such models can be formulated via single-facility location problems, which have been extensively studied in the literature by using the $\ell_1$ and $\ell_2$ norms. However, to model a more realistic scenario, we incorporate the effect of wind that prevents us from employing these common norms directly. Using the $\ell_2$ norm, for example, would ignore the wind's influence, leading us to a simplification that can result in significant time loss. 

To minimize transition time, we use the {\em set-based Minkowski gauge function} (\ref{eq:set-based-minkowski-definition}) that captures the effects of wind, which are inherent in real-world conditions. In the first scenario, a rescue team utilizes a single UAV, and we show how the generalized Fermat-Torricelli problem can be applied to minimize transition time. In the second scenario, the rescue team operates a heterogeneous fleet of UAVs, and we illustrate how the generalized Sylvester problem can effectively address the complexities that arise. Finally, we consider the generalized SFT problem in a scenario where the heterogeneous fleet of UAVs is first deployed to the affected area and then returns to the truck for recharging and data transfer. This setup highlights the operational challenges and the importance of strategic truck placement in dynamic rescue missions.

\begin{figure}
\centering
\includegraphics[width=0.99\linewidth]{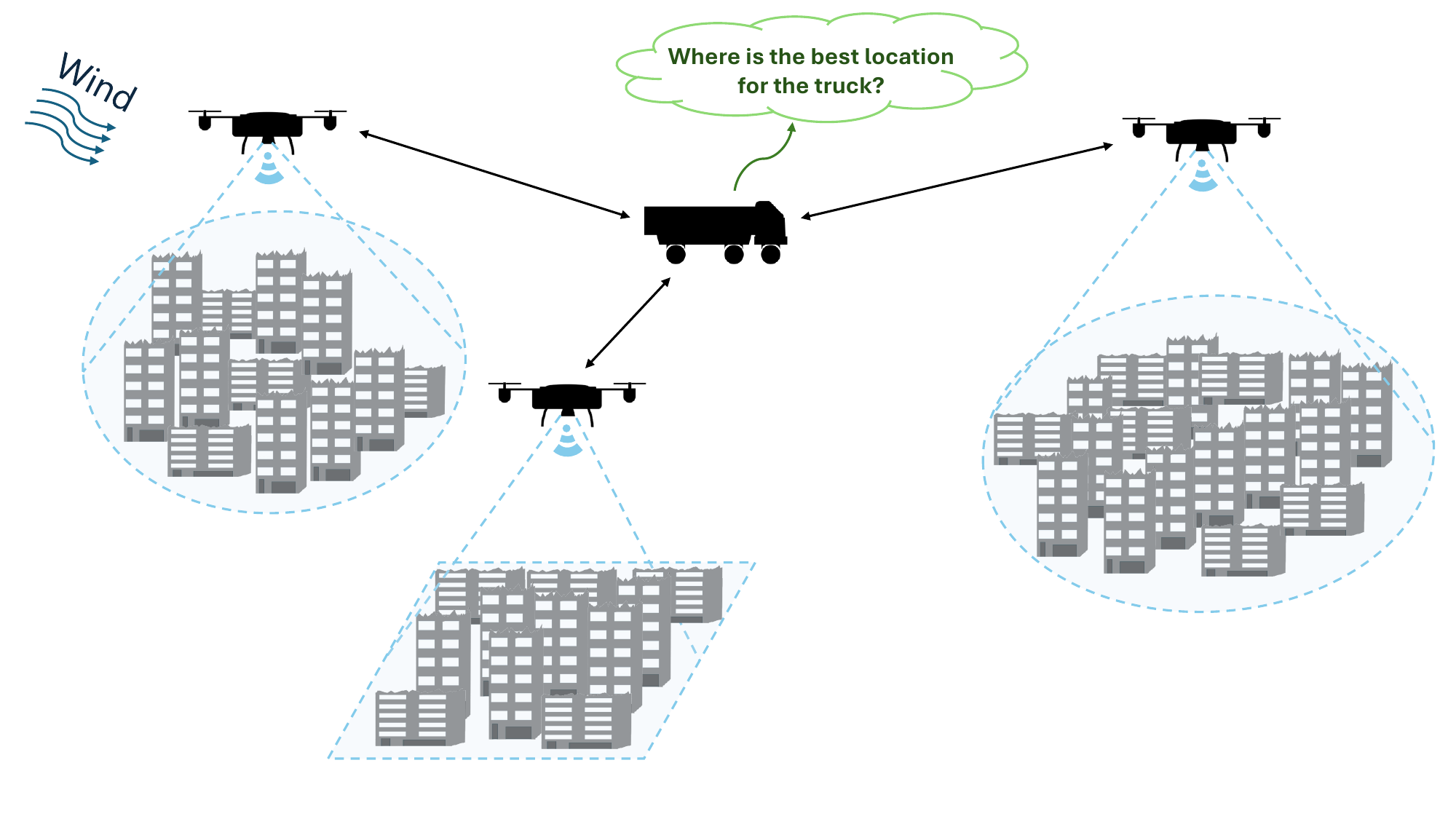}
\caption{Providing a wireless network or DTNs after an earthquake is one of the important applications of UAVs in disaster relief}
\label{fig:earth-quake-senario}
\end{figure}

\subsection{Applications of the Generalized Fermat-Torricelli Problem}\label{sec:single_UAV_example}

Owing to constraints such as limited budget, it is often the case that a rescue team operates with only a single UAV. In this section, we examine an operation in which the rescue team employs a UAV to provide DTN support for three predetermined areas. To begin with, the UAV is dispatched from the mobile station (e.g., a truck) and covers the first area. After collecting data, the UAV returns to the station for recharging and transferring the collected data to the rescue team. This pattern is then repeated for the remaining two areas. To determine the optimal location of the truck in order to minimize transition time under windy conditions, we apply the above generalized Fermat-Torricelli model. We present a step-by-step solution to illustrate how our model is implemented in this context.

To formulate the aforementioned challenge (corresponding to Case~1 in Table \ref{tab:Fermat-Toriceili-results}) as a generalized Fermat-Torricelli problem (\ref{eq:eq:Generalized-Fermat-Model}), we begin by specifying the reference and dynamic sets. The reference sets are given as follows:
\begin{align*}
\Omega_0 &= \mathbb{R}^2, \\
\Omega_1 &= \left\{(x, y) \in \mathbb{R}^2 \mid \max\{|x - 30|, |y - 350|\} \le 15 \right\}, \\
\Omega_2 &= \left\{(x, y) \in \mathbb{R}^2 \mid \max\{|x - 210|, |y - 10|\} \le 15 \right\}, \\
\Omega_3 &= \left\{(x, y) \in \mathbb{R}^2 \mid \max \{|x - 550| , |y - 200| \} \le 15 \right\}.
\end{align*}
The dynamic sets in the absence of wind, corresponding to the UAV’s nominal speed \( r \) (m/s), are represented by the ball of the plane:
\begin{equation*}
F_0 = \left\{(f_1, f_2) \in \mathbb{R}^2 \mid f_1^2 + f_2^2 \le r^2 \right\}\ \  \textnormal{for}\  \ i=1,2,3.
\end{equation*} 
Taking wind into account, with wind vector $ \mathcal{S}=(s_1,s_2)\in \mathbb{R}^2$, the effective dynamic set--representing the UAV's velocity under wind conditions--is given by
\begin{equation}\label{eq:dynamic_under_wind_single_UAv_example}
F_i=F = \left\{(f_1, f_2) \in \mathbb{R}^2 \mid (f_1 - s_1)^2 + (f_2 - s_2)^2 \le r^2 \right\} \ \ \textnormal{for}\ \ i=1,2,3.
\end{equation}
\begin{figure}
\centering
\subfigure(a){\includegraphics[width=0.424\textwidth]
{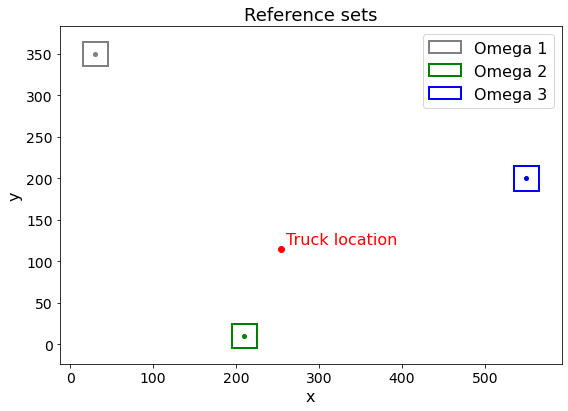}} 
\hspace{0.5 cm}
\subfigure(b){\includegraphics[width=0.4\textwidth]{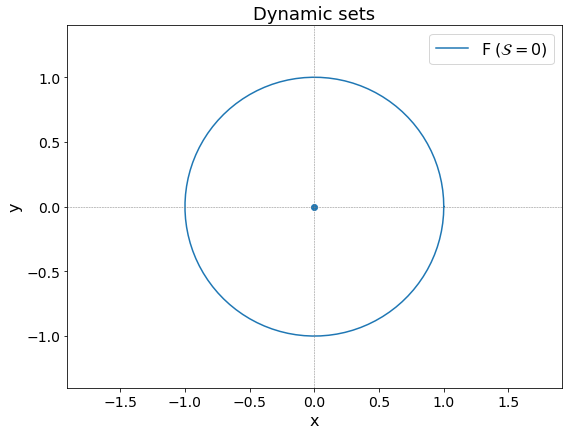}} 
\subfigure(c){\includegraphics[width=0.424\textwidth]
{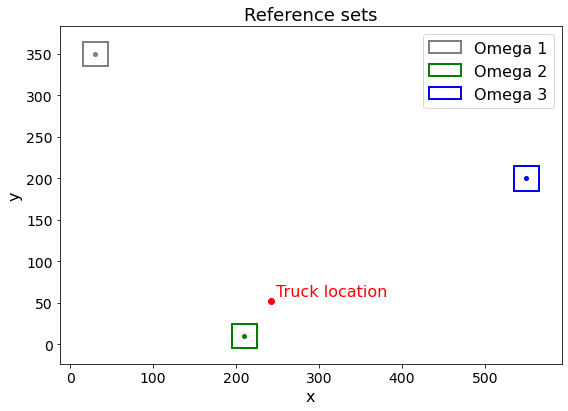}} 
\hspace{0.5 cm}
\subfigure(d){\includegraphics[width=0.4\textwidth]{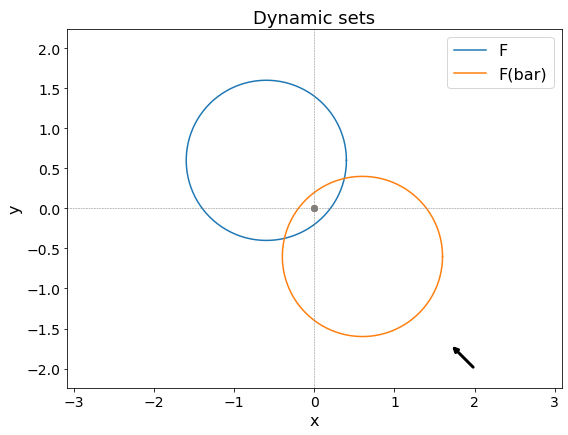}} 
\caption{Case 1 in Table~\ref{tab:Fermat-Toriceili-results}:  
(a) Approximated optimal location of the truck when the wind effect is neglected.
(b) Dynamic set with $\|\mathcal{S}\|=0$.  
(c) Approximated optimal location of the truck under windy conditions.  
(d) Dynamic sets of the UAV under wind conditions with $\mathcal{S}=(-0.6,0.6)$.  
The black arrow indicates the wind direction}
\label{fig:example_fermat}
\end{figure}
Dispatching the UAV from the mobile station involves movement in the opposite direction, so the corresponding dynamic set is given by  $-F_i$ for $i=1,2,3$, as stated in Proposition~\ref{prop:SMG_with_negative_F} from the Appendix. We assume that the wind vector satisfies the condition $\left\| \mathcal{S}\right\|_2 < r$, ensuring that the UAV can still operate effectively under the given wind condition. Therefore, the generalized Fermat-Torricelli problem is represented as (assuming for simplicity that \( \bar{F} = -F \)):
\begin{equation}\label{eq:generalized-fermat-torocelli-case-1}
\min_{x \in \Omega_0} T(x) \ \ \textnormal{with}\ \ T(x)=\sum_{i=1}^{3}\rho_{F_i}^{\Omega_i}(x)+\sum_{i=1}^{3}\rho_{-F_i}^{\Omega_i}(x)=\sum_{i=1}^{3}\rho_{F}^{\Omega_i}(x)+\sum_{i=1}^{3}\rho_{\bar{F}}^{\Omega_i}(x),
\end{equation}
where $x=(x_1,x_2) \in \mathbb{R}^2$.
To solve \eqref{eq:generalized-fermat-torocelli-case-1}, we split the solution procedure into several steps.\\[1ex]
\textbf{Step~1: Computing the Minkowski gauge function $\rho_F(x).$}\\
The classical  Minkowski gauge $\rho_F(x)$ in (\ref{eq:minkowski_definition}) is the optimal value of the following problem:
\begin{equation} \label{eq:example_1_rho}
\begin{split}
&\min \ t\;\mbox{ subject to}\\
& (f_1 - s_1)^2 + (f_2 - s_2)^2 = r^2,\\
&x_1 = t f_1,\;x_2 = t f_2,\;t \geq 0.
\end{split}
\end{equation}
According to Proposition~\ref{prop:f_on_border}, the first constraint in (\ref{eq:example_1_rho}) enforces that the velocity vector \( (f_1, f_2) \) lies on the boundary of the dynamic set \( F \). To derive a closed-form expression for \( \rho_F(x) \), we substitute \( f_1 = x_1 / t \) and \( f_2 = x_2 / t \) into the first constraint in (\ref{eq:example_1_rho}) and get
\begin{equation}\label{eq:quadratic_equation_to_find_minkowski}
\begin{split}
&\ \displaystyle\Big(\frac{x_1}{t}-s_1\Big)^2+\Big(\frac{x_2}{t}-s_2\Big)^2=r^2\\
\Longrightarrow &\ (x_1-ts_1)^2+(x_2-ts_2)^2=r^2t^2\\
\Longrightarrow & \ t^2(s_1^2+s_2^2-r^2)-2(s_1x_1+s_2x_2)t+(x_1^2+x_2^2)=0.
\end{split}
\end{equation}
Solving the quadratic equation from (\ref{eq:quadratic_equation_to_find_minkowski}) gives us
\begin{equation}\label{eq:obtained_t_to_minkowski}
\rho_F(x)=t=\displaystyle\frac{(s_1x_1+s_2x_2)-\sqrt{x_1^2(r^2-s_2^2)+x_2^2(r^2-s_1^2)+2s_1s_2x_1x_2}}{s_1^2+s_2^2-r^2}.
\end{equation}
The value of \( t \) obtained from (\ref{eq:obtained_t_to_minkowski}) is the optimal solution to the problem in (\ref{eq:example_1_rho}), and it is obviously nonnegative.\\[1ex]
\textbf{Step~2: Computing subgradients of the functions $\rho_F(x)$ and $\rho_F^{\Omega}(x)$.}\\
Here we compute subgradients of the functions 
\( \rho_F(x) \) and \( \rho_F^{\Omega}(x) \), which are used below in the subgradient algorithm  (Algorithm~\ref{alg:subgradient}) to solve problem  (\ref{eq:generalized-fermat-torocelli-case-1}). Define $\alpha:=\frac{1}{s_1^2+s_2^2-r^2}$ and
\[\beta(x_1,x_2):=x_1^2(r^2-s_2^2)+x_2^2(r^2-s_1^2)+2s_1s_2x_1x_2.\] Then for any $x \ne 0$, the gradient of the function \( \rho_F(x) \) in (\ref{eq:obtained_t_to_minkowski}) is calculated by
\begin{equation}\label{eq:example_1_rho_gradient}
\nabla \rho _F(x)=\alpha(s_1-\frac{x_1(r^2-s_2^2)+s_1s_2x_2}{\sqrt{\beta(x_1,x_2)}},\;s_2-\frac{x_2(r^2-s_1^2)+s_1s_2x_1}{\sqrt{\beta(x_1,x_2)}}).
\end{equation}
Therefore, the subdifferential of the function \( \rho_F^{\Omega_i}\) are calculated by Theorems~\ref{prop:subdifferential_1}  and Proposition~\ref{prop:subdifferential_estimation} as follows:
\begin{equation} \label{eq:example1_subdifferential}
\partial \rho_F^{\Omega_i}(x) = 
\begin{cases}
N(x; \Omega_i) \cap \mathcal{V} & \text{if } x \in \Omega_i, \\
\{\nabla \rho_F(x - \omega_i)\} & \text{if } x \notin \Omega_i,
\end{cases}
\quad i = 1, 2, 3,
\end{equation}
where \(\omega_i \in \Pi_F(x; \Omega_i)\) and $x-\omega_i \ne 0$ if  \( x \notin \Omega_i \).\\[1ex]
\begin{table}
\caption{Experimental Results for Generalized Fermat-Torricelli Cases}
\vspace{0.25cm}
\begin{tabular}{ccc|cc|cc|cc}
\hline
\multicolumn{1}{c}{Case} &
\multicolumn{1}{c}{Parameters} &
\multicolumn{1}{c|}{Wind Vector}&
\multicolumn{2}{c|}{\cellcolor{gray!20} Wind Neglected } &
\multicolumn{2}{c|}{ \cellcolor{gray!20} Wind Included} &
\multicolumn{1}{c}{Absolute  }&
\multicolumn{1}{c}{Relative}\\ 
\multicolumn{1}{c}{No.} &
\multicolumn{1}{c}{(Table \ref{tab:experiments-settings})} &
\multicolumn{1}{c|}{$(\mathcal{S}=(s_1,s_2))$}&
\multicolumn{1}{c}{$x^\ast$} &
\multicolumn{1}{c|}{\(Z_N(s)\)} &
\multicolumn{1}{c}{$x^\ast$} &
\multicolumn{1}{c|}{\(Z_I(s) \)} &
\multicolumn{1}{c}{Imp(s)} &
\multicolumn{1}{c}{Imp($\%$)}
\\ \hline
1&Info 1&$(-0.6,0.6)$&(252,117)&4039 &(242,52)&3959&80&1.98\\
2&Info 1&$(0.6,0.6)$&(252,117)&3626 &(225,171)&3575&53&1.46\\
3&Info 1&$(0.7,0.3)$&(252,117)&2746 &(228,108)&2727&19&0.00\\
4&Info 1&$(-0.1,0.9)$&(252,117)&5214 &(350,162)&5037&177&3.39\\
5&Info 1&$(0.4,-0.1)$&$(252,117)$&1605 &$(247,102)$&1604&1&0.00\\
6&Info 2&$(0.8,-0.3)$&(604,265)& 8305 &(581,101)&8148&157&1.89\\
7&Info 2&$(0,-0.2)$&(604,265)&2728 &(605,270)&2728&0&0.00\\
8&Info 3&$(0.4,-0.8)$&$(166,48)$&3103&(175,35)&3061&42&1.35\\
9&Info 3&(0.3,0)&(166,48)&896 &(164,44)&896&0&0.00\\
10&Info 3&$(0.4,0.8)$&(166,48)&2742 &(176,84)&2597&145&5.28\\
\hline
\end{tabular}
\vspace{0.1cm}
\label{tab:Fermat-Toriceili-results}
\vspace{0.3cm}
{\scriptsize $Z_N$ and $Z_I$ denote the objective function values (transition time in seconds) for the cases ignoring and considering wind, respectively. Absolute Imp ($Z_N-Z_I$) and Relative Imp ($(Z_N-Z_I)/Z_N$) denote the absolute and relative improvements.}
\end{table}
\textbf{Step~3: Computing the optimal solution by subgradient algorithm.}\\
To determine the optimal location of the mobile station, we apply the subgradient algorithm (see Algorithm~\ref{alg:subgradient}  below) allowing us to find an optimal solution to the generalized Fermat--Torricelli problem under consideration in (\ref{eq:generalized-fermat-torocelli-case-1}). 

\begin{algorithm}
\caption{Subgradient Method}
\label{alg:subgradient}
\begin{algorithmic}[1]
\State \textbf{Input:} Initial point $x^1 \in \Omega_0$, step size sequence $\{\alpha^k > 0\}$, maximum iterations $K\in \mathbb{N}$.
\State \textbf{Initialize:} Set $k = 1$ ($k\in \mathbb{N}$).
\While{$k < K$}
\State Choose a subgradient $v^k \in \partial f(x^k)$.
\State Update: $x^{k+1} = \Pi(x^k - \alpha^k v^k;\Omega_0)$ 
\Comment{$\Pi(.;\Omega_0)$ denotes Euclidean projection onto $\Omega_0$}
\State $k \gets k + 1$
\EndWhile
\State \textbf{Output:} $x^K$
\end{algorithmic}
\vspace{0.1em}
{\footnotesize\textit{Note.}  $\{\alpha^k\}_{k=1}^{\infty}$ should satisfies (1) $\alpha^k \to 0$ and (2) $\sum_{k=1}^{\infty} \alpha^k=\infty$}
\end{algorithm}
According to the structure of the subgradient algorithm, we need to compute a subgradient of \( T(x) \). To do so, we calculate the subdifferential of \( T(x) \) by applying the subdifferential sum rule from Theorem~\ref{theorem:sum_rule_subdifferential}, which gives us the representation
\begin{equation}\label{eq:example_1_subdifferential_T}
\partial T(x) = \sum_{i=1}^{3} \partial \rho_F^{\Omega_i}(x)+\sum_{i=1}^{3}\partial\rho_{\bar{F}}^{\Omega_i}(x).
\end{equation}
Therefore, at each iteration \(k\) of the subgradient algorithm, a subgradient \(v^k \in \partial T(x^k)\) can be calculated as follows:
\begin{equation}\label{eq:example_fermat-torriceili_total_subgradient_calculations}
v^k = \sum_{i=1}^3 v_i^k+\sum_{i=1}^{3}\bar{v}_i^{k},
\end{equation}
where \(v_i^k \in \partial \rho_F^{\Omega_i}(x^k)\) and \(\bar{v}_i^k \in \partial \rho_{\bar{F}}^{\Omega_i}(x^k)\) are taken from (\ref{eq:example1_subdifferential}) as 
\begin{equation}\label{eq:example_fermat_subgradient_two_cases}
v_i^k = 
\begin{cases}
0 & \text{if } x^k \in \Omega_i, \\
\nabla \rho_F(x^k - \omega_i^k) & \text{if } x^k \notin \Omega_i,
\end{cases}
\quad i = 1, 2, 3,
\end{equation}
with \(\omega_i^k \in \Pi_F(x^k; \Omega_i)\). Similarly we get
\begin{equation}
\bar{v}_i^k = 
\begin{cases}
0 & \text{if } x^k \in \Omega_i, \\
\nabla \rho_{\Bar{F}}(x^k - \bar{\omega}_i^k) & \text{if } x^k \notin \Omega_i,
\end{cases}
\quad i = 1, 2, 3,
\end{equation}
where $\bar{\omega}_i^k\in \Pi_{\Bar{F}}(x^k;\Omega_i)$. Note that by solving the problem in (\ref{eq:example_1_rho}) with \( F \) replaced by \( \Bar{F} \), we can compute both \( \rho_{\Bar{F}}(x) \) and its gradient \( \nabla \rho_{\Bar{F}}(x) \).

To illustrate the computation procedure, we focus for definiteness on Case~1 in Table~\ref{tab:Fermat-Toriceili-results} and evaluate 
$v^k \in \partial T(x^k)$ for $k=1$, where the dynamic sets are specified by Info~1 in 
Table~\ref{tab:experiments-settings} and the wind vector is given by $\mathcal{S}=(-0.6,0.6)$.
Therefore, the dynamic sets for this UAV, corresponding to back-and-forth movements, are given by
\begin{equation}\label{eq:example_fermat_back_and_forth_dynamic_sets}
\begin{split}
F=\big\{(f_1,f_2)\in \mathbb{R}^2\big|\;(f_1+0.6)^2+(f_2-0.6)^2 \le 1\big\},\\
\bar{F}=\big\{(f_1,f_2)\in \mathbb{R}^2\;\big|\;(f_1-0.6)^2+(f_2+0.6)^2 \le 1\big\}.
\end{split}
\end{equation}
Starting with $x^1=(100,100)\notin\O_1$, compute $v_1^1$ by (\ref{eq:example_fermat_subgradient_two_cases})  With $\omega_1^1=(45,335)$. This gives us
\[
x^1 - \omega_1^1 = (100-45,\,100-335) = (55,-235).
\]
Then it follows from (\ref{eq:example_1_rho_gradient}) that
\[
\alpha = -3.57, \quad \beta(55,-235) = 46586, \quad \text{and} \quad \nabla \rho_F(55,-235) =
\]
\[
(-3.57) \Bigg(
-0.6 - \frac{55(0.64)+(-235)(-0.36)}{\sqrt{46586}}, \; 
0.6 - \frac{(-235)(0.64)+55(-0.36)}{\sqrt{46586}} \Bigg)
= (4.12, -4.95).
\]
Using the same procedure, we get
\[
\begin{aligned}
v_2^1 &= (-0.46, 0.29), & v_3^1 &= (-0.66, -0.98),\\
\bar{v}_1^1 &= (-0.01, -0.62), & \bar{v}_2^1 &= (-4.74, 4.58), & \bar{v}_3^1 &= (-4.91, 3.11).
\end{aligned}
\]
Finally, $v^1$ can be computed by using (\ref{eq:example_fermat-torriceili_total_subgradient_calculations}), which brings us to
\[
v^1 = (-6.66, 1.43).
\]
In the same way, we compute $v^k$ at each iteration. Then Algorithm~\ref{alg:subgradient} with the standard choice of $\alpha^k=\tfrac{1}{k}$  allows us to solve the generalized Fermat-Torricelli problem (\ref{eq:generalized-fermat-torocelli-case-1}).

To proceed with Case~1 in Table~\ref{tab:Fermat-Toriceili-results}, we first simplify the problem by neglecting the wind effect and considering the dynamic sets for the back-and-forth movement as $F = \bar{F} = F_0$ as illustrated in Fig.~\ref{fig:example_fermat}(b). In this scenario, the approximate solution is $(252,117)$ as shown in Fig.~\ref{fig:example_fermat}(a). If this solution is applied in a real-world scenario with wind, the resulting transition time is 4039 seconds. When the wind effect is incorporated, the dynamic sets are defined as in (\ref{eq:example_fermat_back_and_forth_dynamic_sets}); see Fig.~\ref{fig:example_fermat}(d). The optimal solution then shifts to $(242,52)$, depicted in Fig.~\ref{fig:example_fermat}(c), reducing the objective value to 3960 seconds. Note that, since the sets $\Omega_i$ for $i = 1, 2, 3$ are not strictly convex, the uniqueness of solutions to this problem cannot be generally guaranteed.

Table~\ref{tab:Fermat-Toriceili-results} presents ten cases with varying settings and parameters to evaluate the efficiency of the generalized Fermat--Torricelli problem in accounting for wind effects. For simplicity, all the cases are conducted on a small scale, with the UAV's nominal speed set to $1 \text{(m/s)}$; these parameters can be appropriately scaled to reflect the real scenarios. The results indicate that, for single-UAV operations, the generalized Fermat--Torricelli problem reduces the transition time by approximately $3.3\%$.
\begin{figure}
\centering
\subfigure(a){\includegraphics[width=0.4\textwidth]
{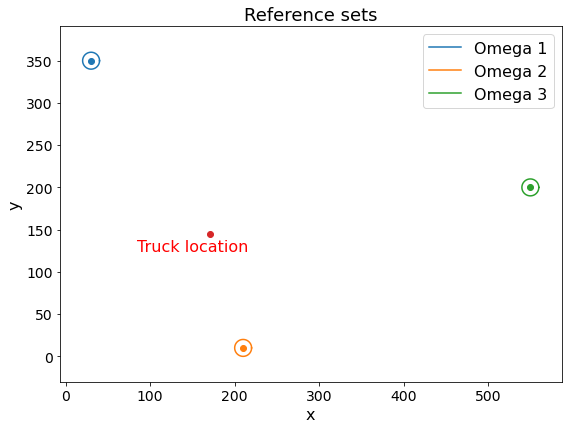}} 
\hspace{0.5 cm}
\subfigure(b){\includegraphics[width=0.4\textwidth]{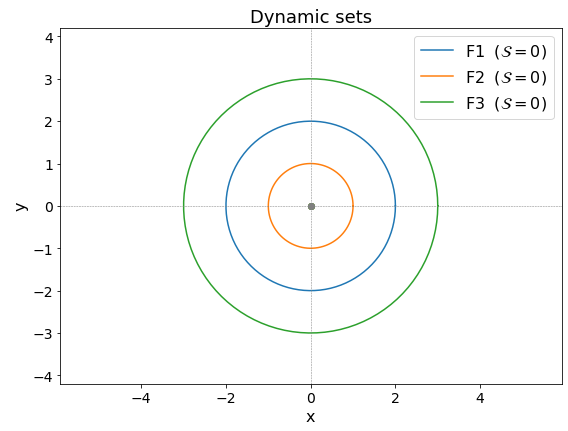}} 
\subfigure(c){\includegraphics[width=0.4\textwidth]
{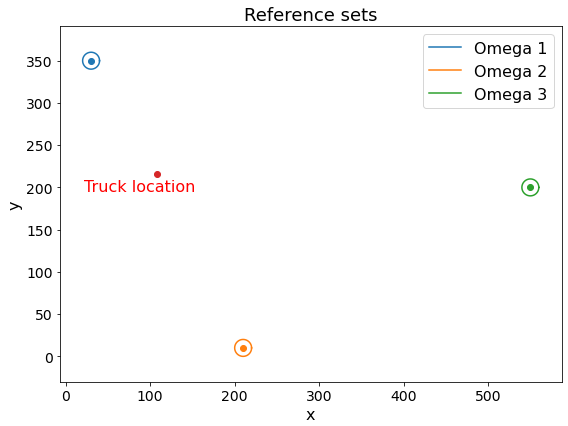}} 
\hspace{0.5 cm}
\subfigure(d){\includegraphics[width=0.4\textwidth]{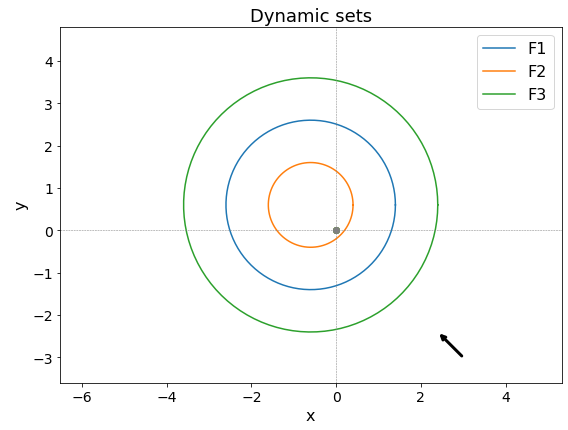}} 
\caption{Case 2 in Table~\ref{tab:Sylvester_results}:  
(a) Approximated optimal location of the truck when the wind effect is neglected.
(b) Dynamic sets with $\|\mathcal{S}\|=0$.  
(c) Approximated optimal location of the truck under windy conditions.  
(d) Dynamic sets of the UAVs under windy conditions with $\mathcal{S}=(-0.6,0.6)$.  
The black arrow indicates the wind direction}
\label{fig:example_Sylvester}
\end{figure}

\subsection{Applications of the Generalized Sylvester Problem}\label{sec:Multiple_UAV_example}
In the context of Section~\ref{sec:single_UAV_example}, we consider a scenario in which the rescue team deploys a heterogeneous fleet of three UAVs. These UAVs are launched at the earliest possible time after a disaster, and the goal is to determine an optimal location for a mobile station to collect them as quickly as possible. Timely retrieval of the UAVs enables rapid access to critical data, which is essential for effective response and decision-making in emergency situations. We consider this problem in the following setting corresponding to Case~2 from Table~\ref{tab:Sylvester_results}:
\begin{align*}
\Omega_0 &= \mathbb{R}^2, \\
\Omega_1 &= \Big\{(x,y) \in \mathbb{R}^2 \ \Big|\ \frac{(x-30)^2}{10^2}+\frac{(y-350)^2}{10^2}\le1\Big\}, \\
\Omega_2 &=\Big\{(x,y) \in \mathbb{R}^2 \ \Big|\ \frac{(x-210)^2}{10^2}+\frac{(y-10)^2}{10^2}\le1\Big\}, \\
\Omega_3 &=\Big\{(x,y)\in \mathbb{R}^2 \ \Big|\ \frac{(x-550)^2}{10^2}+\frac{(y-200)^2}{10^2}\le1\Bigg\}.
\end{align*}
For the dynamic sets, we have the following:
\begin{align*}
\ F_1 &=\big\{(f_1,f_2) \in \mathbb{R}^2 \ \big|\ (f_1+0.6)^2+(f_2-0.6)^2\le 2^2\big\}, \\
\ F_2 &=\big\{(f_1,f_2) \in \mathbb{R}^2 \ \big|\ (f_1+0.6)^2+(f_2-0.6)^2\le 1^2\big\}, \\
\ F_3 &=\big\{(f_1,f_2) \in \mathbb{R}^2 \ \big|\ (f_1+0.6)^2+(f_2-0.6)^2\le 3^2\big\}.
\end{align*}
Let us set 1(m/s) as the nominal speed of the slowest UAV, which is represented by $F_2$. Consequently, the rescue team is equipped with two additional UAVs that are twice and three times faster, respectively. In addition, consider the wind vector given by $\mathcal{S}=(-0.6,0.6)$.

To find an optimal location of the UAV station, we formulate this model as the following generalized Sylvester problem, which is a variant of the generalized SFT problem \eqref{eq:SFT}:
\begin{equation} \label{eq:example2_sylvester}
\min_{x\in \Omega_0} Y(x) \ \ \ \textnormal{with}\ \ \ Y(x):=\max\big\{\rho_{F_i}^{\Omega_i}(x) \ \big|\ \ i=1,2,3\big\}.
\end{equation}
Then Algorithm~\ref{alg:subgradient} allows us to determine an approximate minimizer of  (\ref{eq:example2_sylvester}). To proceed, we need to compute a subgradient of $Y(x)$ at each iteration $k$. Following the procedure described in Section~\ref{sec:single_UAV_example} gives us the calculation of $v_i^k \in \partial \rho_{F_i}^{\Omega_i}(x^k)$. Subsequently, $v^k \in \partial Y(x^k)$ is obtained  by using the formula derived from Theorem~\ref{theorem:max_rule}:
\begin{equation*}\label{eq:example_sylvester_subdifferential_formula}
\partial Y(x^k) = \operatorname{co}\left\{\bigcup_{i \in I(x^k)} \partial \rho_{F_i}^{\Omega_i}(x^k)\right\}.
\end{equation*}

\begin{table}[htbp]
\caption{Experimental Results for Generalized Sylvester Cases}
\vspace{0.25cm}
\begin{tabular}{ccc|cc|cc|cc}
\hline
\multicolumn{1}{c}{Case} &
\multicolumn{1}{c}{Parameters} &
\multicolumn{1}{c|}{Wind Vector}&
\multicolumn{2}{c|}{\cellcolor{gray!20} Wind Neglected} &
\multicolumn{2}{c|}{ \cellcolor{gray!20} Wind Included} &
\multicolumn{1}{c}{Absolute  }&
\multicolumn{1}{c}{Relative }\\ 
\multicolumn{1}{c}{No.} &
\multicolumn{1}{c}{(Table \ref{tab:experiments-settings})} &
\multicolumn{1}{c|}{$(\mathcal{S}=(s_1,s_2))$}&
\multicolumn{1}{c}{$x^\ast$} &
\multicolumn{1}{c|}{\(Z_N(s) \)} &
\multicolumn{1}{c}{$x^\ast$} &
\multicolumn{1}{c|}{\(Z_I(s) \)} &
\multicolumn{1}{c}{Imp(s)} &
\multicolumn{1}{c}{Imp($\%$)}
\\ \hline
1&Info 4&$(-0.7,0.7)$&(175,138)&242 &(95,228)&124&100&41.3\\
2&Info 4&$(-0.6,0.6)$&(175,138)&212 &(108,216)&124&88&41.5\\
3&Info 4&$(-0.5,0.5)$&(175,138)&190 &(116,202)&124&66&34.7\\
4&Info 4&$(-0.4,0.4)$&(175,138)&171 &(122,188)&124&47&27.4\\
5&Info 4&$(-0.3,0.3)$&(175,138)&156 &(146,179)&124&32&20.5\\
6&Info 4&$(-0.2,0.2)$&(175,138)&143 &(151,163)&123&20&13.9\\
7&Info 4&$(-0.1,0.1)$&(175,138)&133 &(175,139)&123&10&7.51\\
8&Info 4&(0.6,0.6)&(175,138)&163&(261,217)&125&38&23.3\\
9&Info 4&$(0.6,-0.6)$&(175,138)&721 &(265,64)&126&595&82.5\\
10&Info 4&$(-0.6,-0.6)$&(175,138)&481 &(118,67)&128&353&73.3\\
11&Info 5&$(0.6,-0.6)$&(239,163)&1401 &(371,40)&212&1189&84.8\\
12&Info 6&$(0.6,-0.6)$&(331,227)&525 &(386,140)&315&210&40.0\\
13&Info 6&$(-0.8,0)$&(331,227)&1156 &(161,124)&309&847&73.2\\
14&Info 6&$(0.8,0)$&(331,227)&432 &(474,311)&306&126&29.1\\
\hline
\end{tabular}
\vspace{0.1cm}
\label{tab:Sylvester_results}
\vspace{0.3cm}
{\scriptsize $Z_N$ and $Z_I$ denote the objective function values (transition time in seconds) for the cases ignoring and considering wind, respectively. Absolute Imp ($Z_N-Z_I$) and Relative Imp ($(Z_N-Z_I)/Z_N$) denote the absolute and relative improvements.}
\end{table}

Initially, we consider the case where the wind effect is ignored ($\mathcal{S}=0$) and the dynamic sets are illustrated in Fig.~\ref{fig:example_Sylvester}(b). The approximated solution in this case is $(175,138)$, as shown in Fig.~\ref{fig:example_Sylvester}(a), which corresponds to the transition time of $212$ seconds in the real scenario under windy conditions. When the wind effect is taken into account with $\mathcal{S}=(-0.6,0.6)$, the dynamic sets are modified as depicted in Fig.~\ref{fig:example_Sylvester}(d), which gives us the approximated optimal location $(108,216)$ as illustrated in Fig.~\ref{fig:example_Sylvester}(c). The transition time for $(108,216)$ is $124$ seconds, representing a $41.5\%$ reduction in transition time. This clearly demonstrates the effectiveness of the generalized Sylvester problem in addressing this challenge.

To evaluate the performance of the generalized Sylvester problem in mitigating wind effects, 14 cases with varying settings and parameters are considered as summarized in Table~\ref{tab:Sylvester_results}. For simplicity, all cases are performed on a small scale, assigning $1$(m/s) to the slowest UAV nominal speed; these values can be scaled to represent real-world scenarios. The results demonstrate that the generalized Sylvester problem significantly reduces transition time, achieving an approximate $84\%$ improvement. In Cases~1--7 of Table~\ref{tab:Sylvester_results}, all parameters are identical except for the length of the wind vector. The results of these cases are illustrated in Fig. \ref{fig:Wind_effect}(b), which highlights the impact of wind speed. As expected, applying the generalized Sylvester model at higher wind speeds resulted in greater efficiency and a larger reduction in transition time.

\subsection{Applications of the Generalized SFT Problem}\label{sec:Multiple_Example_MSMG}

In the setting described in Section~\ref{sec:Multiple_UAV_example}, the rescue team first determines the optimal location for the mobile station and then deploys a fleet of three UAVs. After completing their missions, the UAVs return to the mobile station for data transfer and recharging. To address this challenge while accounting for wind conditions and fleet heterogeneity (corresponding to Case~1 in Table~\ref{tab:SFT_results}), we employ the following generalized SFT model:
\begin{equation}\label{eq:Example3_SFT}
\min_{x\in\Omega_0} S(x),\;\mbox{ where }\;S(x):=\max\big\{\rho_{F_i}^{\Omega_i}(x)+\rho_{\Bar{F}_i}^{\Omega_i}(x)\ \big|\ i=1,2,3\big\}.
\end{equation}
Assuming that $\Bar{F}_i=-F_i$, compute \(\partial S(x)\) by applying Theorems~\ref{theorem:max_rule} and~\ref{theorem:sum_rule_subdifferential}. This gives us
\begin{equation}
\partial S(x) = \operatorname{co} \left\{ \bigcup_{i \in I(x)} \left( \partial \rho_{F_i}^{\Omega_i}(x) + \partial \rho_{\bar{F}_i}^{\Omega_i}(x) \right) \,\middle|\, i = 1, 2, 3 \right\},
\end{equation}
\begin{figure}
\centering
\subfigure(a){\includegraphics[width=0.4\textwidth]
{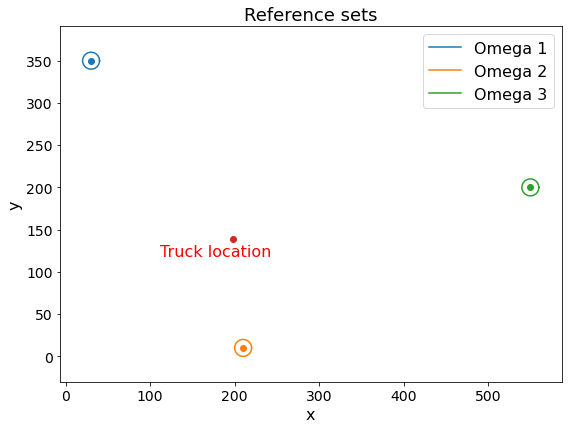}} 
\hspace{0.5 cm}
\subfigure(b){\includegraphics[width=0.4\textwidth]{No_Wind_Dynamic.png}} 
\subfigure(c){\includegraphics[width=0.4\textwidth]
{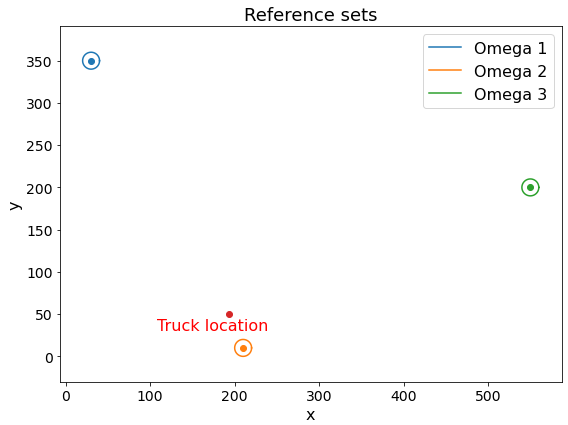}} 
\hspace{0.5 cm}
\subfigure(d){\includegraphics[width=0.4\textwidth]{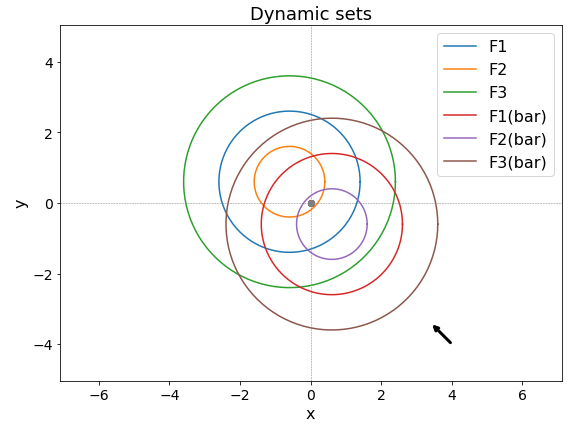}} 
\caption{Case 1 in Table~\ref{tab:SFT_results}:  
(a) Approximated optimal location of the truck when the wind effect is neglected.
(b) Dynamic sets with $\|\mathcal{S}\|=0$.  
(c) Approximated optimal location of the truck under windy conditions.  
(d) Dynamic sets of the UAVs under windy conditions with $\mathcal{S}=(-0.6,0.6)$.  
The black arrow indicates the wind direction.}
\label{fig:example_SFT}
\end{figure}
where $\partial \rho_{F_i}^{\Omega_i}(x)$ and $\partial \rho_{\Bar{F}_i}^{\Omega_i}(x)$ are calculated similarly to 
Section~\ref{sec:single_UAV_example}. The subgradient algorithm (Algorithm~\ref{alg:subgradient}) can be utilized to compute an optimal solution to problem~\eqref{eq:Example3_SFT}. In continuation, we consider a more challenging case of this scenario, which has a real-world application. In this case, UAVs are deployed from a point \(x\) (the location of the mobile station), first travel to the nearest point in \(\Omega_i\), and then return from the farthest point in \(\Omega_i\) back to \(x\). Cases of this type commonly arise when UAVs are used to cover regions \( \Omega_i \) for tasks 
such as data collection or aerial photography of affected areas, employing coverage path planning algorithms (see, for example, \cite{kazemdehbashi2025algorithm, Coombes2018, Choset2000, Latombe1991}). Then, in the worst-case situation, UAVs complete their coverage at the farthest point and then must return to the mobile station for recharging and data transfer. This can be formulated via the extended version of the generalized SFT problem (\ref{eq:SFT-extension}) as follows:
\begin{equation}\label{eq:extended_SFT_Multiple_UAVs}
\min_{x\in \Omega_0} \Hat{S}(x),\;\mbox{ where }\;
\Hat{S}(x):=\max\big\{\rho_{\Bar{F}_i}^{\Omega_i}(x)+\Bar{\rho}_{F_i}^{\Omega_i}(x) \ \big|\ i=1,2,3\big\}.
\end{equation}
Assuming again that $\Bar{F}_i=-F_i$ gives us by Theorems~\ref{theorem:max_rule} and~\ref{theorem:sum_rule_subdifferential} that
\begin{equation}
\label{eq:S_hat_subdifferential}
\partial \Hat{S}(x) = \operatorname{co} \left\{ \bigcup_{i \in I(x)} \left( \partial \rho_{\Bar{F}_i}^{\Omega_i}(x) + \partial \Bar{\rho}_{F_i}^{\Omega_i}(x) \right) \,\middle|\, i = 1, 2, 3\right\}.
\end{equation}
Now we use Algorithm~\ref{alg:subgradient} to find an approximate minimizer of this problem. At each step of the algorithm, a subgradient from $\partial \Hat{S}(x)$ is required. To this end, we compute \( \partial \rho_{\Bar{F}_i}^{\Omega_i}(x) \) similarly to Section~\ref{sec:single_UAV_example}. To simplify the procedure, the lower estimate of $\partial\Bar{\rho}_{F_i}^{\Omega_i}(x)$ from Theorem~\ref{theorem:MSMG_properties}  can be used instead of the full calculation of the subdifferential. In this way, we get
\[
\nabla \rho_{F_i}(x - \omega^\prime_i) \subset \partial \Bar{\rho}_{F_i}^{\Omega_i}(x)\;\mbox{ with }\;\omega_i^\prime \in \Bar{\Pi}_{F_i}(x;\Omega_i)
\]
Then  determining a subgradient of $\partial \Hat{S}(x)$ becomes straightforward from (\ref{eq:S_hat_subdifferential}).

\begin{figure}
\centering
\subfigure(a)
{\includegraphics[width=0.37\textwidth]{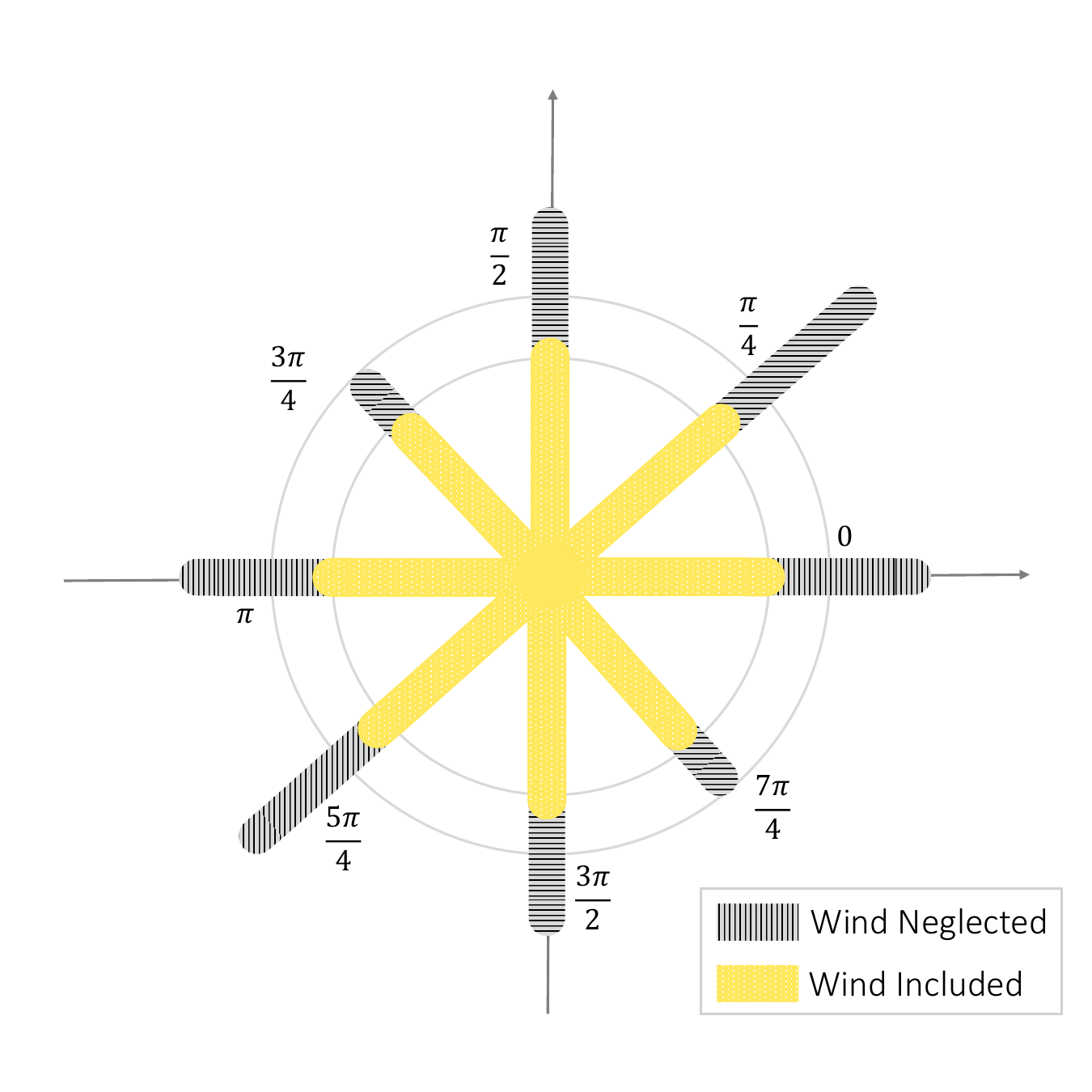}} 
\hspace{0.05 cm}
\subfigure(b)
{\includegraphics[width=0.53\textwidth]{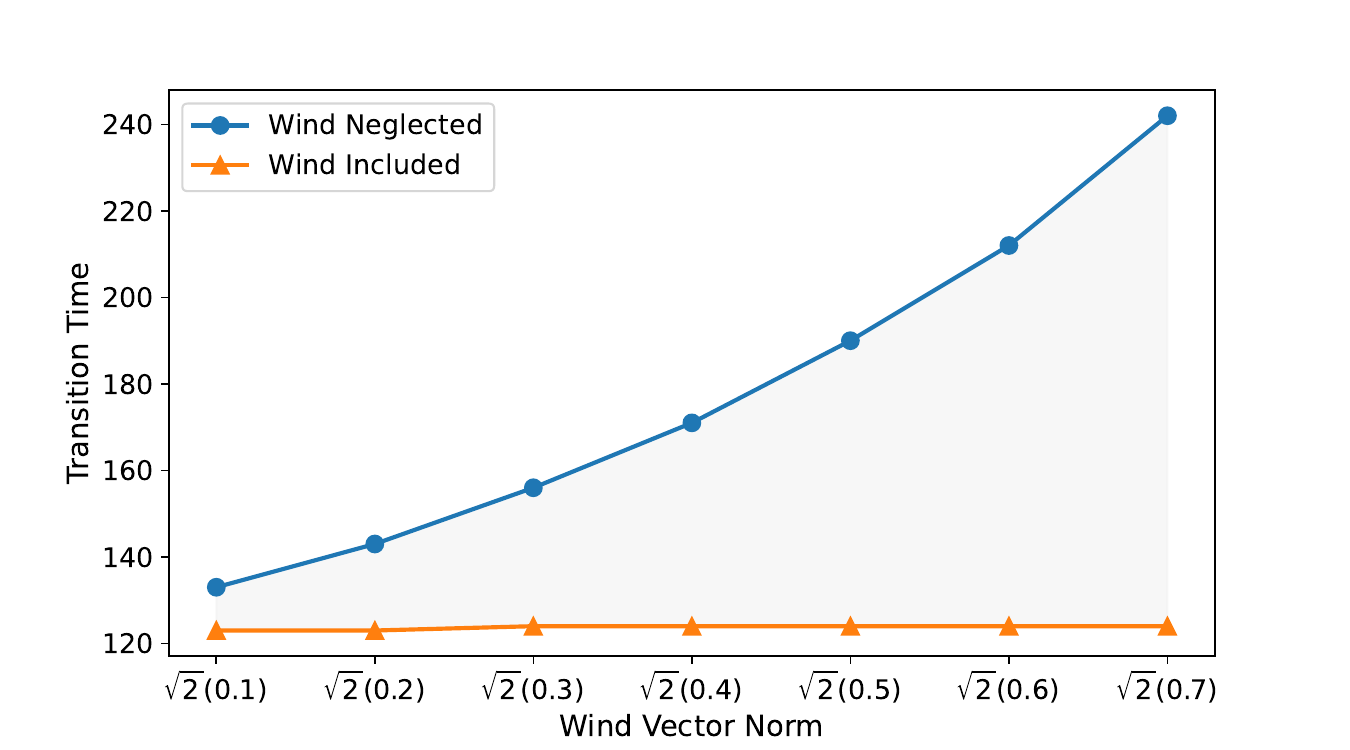}} 
\caption{(a) Effect of wind direction on transition time at approximately similar wind speeds. The gray line shows transition time without considering wind, while the yellow line reflects results from the generalized SFT model accounting for wind. Results correspond to Cases 14–21 in Table \ref{tab:SFT_results}. (b) Impact of wind vector norm (speed) on transition time. Results correspond to Cases 1–7 in Table \ref{tab:Sylvester_results}.}
\label{fig:Wind_effect}
\end{figure}\vspace*{0.03in}

If in the case modeled by problem  (\ref{eq:extended_SFT_Multiple_UAVs}), UAVs are deployed at the earliest time and the objective is to determine an optimal location of the mobile station for collecting the data as described in 
Section~\ref{sec:Multiple_UAV_example}, the extended generalized Sylvester problem (\ref{eq:extention-sylvester}) can be used to optimize the location of the mobile station. 
Furthermore, if the rescue team employs a single UAV instead of three (as in the scenario presented in Section~\ref{sec:single_UAV_example}), the UAV travels from the mobile station to the closest point of $\Omega_i$ and returns from the farthest point while repeating this pattern for the subsequent reference sets. The extended generalized Fermat-Torricelli problem (\ref{eq:eq:Extended-Generalized-Fermat-Model}) can then be used to determine the optimal location of the mobile station. 

Table~\ref{tab:SFT_results} summarizes 24 cases with different parameter settings to evaluate the efficiency of the generalized SFT problem (extended variant) in accounting for wind effects. For simplicity, all cases are conducted on a small scale, with the slowest UAV assigned a nominal speed of $1$(m/s); these values can be appropriately scaled to represent real-world scenarios. The results show that the generalized SFT problem (extended variant) reduces the transition time by approximately $66.8\%$. Furthermore, Cases~14--21 share identical settings except for the direction of the wind vector. Fig.~\ref{fig:Wind_effect}(a) illustrates these results that confirm the ability of the generalized SFT formulation to address wind effects across varying directions. 

\begin{table}
\caption{Experimental Results for Generalized SFT Cases (Extended Version)}
\vspace{0.25cm}
\begin{tabular}{ccc|cc|cc|cc}
\hline
\multicolumn{1}{c}{Case} &
\multicolumn{1}{c}{Parameters} &
\multicolumn{1}{c|}{Wind Vector}&
\multicolumn{2}{c|}{\cellcolor{gray!20} Wind Neglected} &
\multicolumn{2}{c|}{ \cellcolor{gray!20} Wind Included} &
\multicolumn{1}{c}{Absolute  }&
\multicolumn{1}{c}{Relative }\\ 
\multicolumn{1}{c}{No.} &
\multicolumn{1}{c}{(Table \ref{tab:experiments-settings})} &
\multicolumn{1}{c|}{$(\mathcal{S}=(s_1,s_2))$}&
\multicolumn{1}{c}{$x^\ast$} &
\multicolumn{1}{c|}{\(Z_N(s) \)} &
\multicolumn{1}{c}{$x^\ast$} &
\multicolumn{1}{c|}{\(Z_I(s) \)} &
\multicolumn{1}{c}{Imp(s)} &
\multicolumn{1}{c}{Imp($\%$)}
\\ \hline
1&Info 4&$(-0.6,0.6)$&(171,134)&905 &(189,56)&401&504&55.6\\
2&Info 4&$(0.6,0.6)$&(171,134)&637 &(175,78)&342&295&46.3\\
3&Info 4&$(-0.6,-0.6)$&(171,134)&581 &$(181,85)$&340&241&41.4\\
4&Info 4&$(0.6,-0.6)$&(171,134)&798 &(214,88)&394&404&50.6\\
5&Info 4&$(0.85,0.0)$&(171,134)&528 &(183,91)&344&184&34.8\\
6&Info 4&$(0.0,0.85)$&(171,134)&961 &(160,50)&389&572&59.5\\
7&Info 4&$(-0.85,0.0)$&(171,134)&565 &(188,88)&348&217&38.4\\
8&Info 4&$(0.0,-0.85)$&(171,134)&844 &(156,62)&384&460&54.5\\
9&Info 5&(0.4,0.8)&(351,234)&1842 &(300,90)&610&1232&66.8\\
10&Info 5&$(0.1,0.1)$&(351,234)&450 &$(333,223)$&441&9&2.0\\
11&Info 5&$(-0.1,-0.1)$&(351,234)&447 &(347,231)&444&3&0.0\\
12&Info 5&$(0.1,-0.1)$&(351,234)&448 &$(336,225)$&443&5&1.11\\
13&Info 5&$(-0.1,0.1)$&(351,234)&453 &(340,225)&444&9&1.98\\
14&Info 6&$(0.8,0)$&(333,220)&1442 &(234,138)&829&613&42.5\\
15&Info 6&$(0.56,0.56)$&(333,220)&1648 &(225,124)&877&771&46.7\\
16&Info 6&$(0,0.8)$&(333,220)&1364 &(246,136)&819&545&39.9\\
17&Info 6&$(-0.56,0.56)$&(333,220)&981 &(269,178)&734&247&25.1\\
18&Info 6&$(-0.8,0)$&(333,220)&1375 &(245,141)&822&553&40.2\\
19&Info 6&$(-0.56,-0.56)$&(333,220)&1564 &(235,135)&869&695&44.4\\
20&Info 6&$(0,-0.8)$&(333,220)&1307&(235,153)&819&488&37.3\\
21&Info 6&$(0.56,-0.56)$&(333,220)&987 &(267,178)&734&253&25.6\\
22&Info 7&$(0.69,0.4)$&(2265,1459)&4148&(2805,1335)&2028&2120&51.1\\
23&Info 7&$(0.56,0.56)$&(2265,1459)&3840&(2878,1295)&2064&1776&46.2\\
24&Info 7&$(0.8,0.0)$&(2265,1459)&4117&(2673,1456)&1955&2162&52.5\\
\hline
\end{tabular}
\vspace{0.1cm}
\label{tab:SFT_results}
\vspace{0.3cm}
{\scriptsize $Z_N$ and $Z_I$ denote the objective function values (transition time in seconds) for the cases ignoring and considering wind, respectively. Absolute Imp ($Z_N-Z_I$) and Relative Imp ($(Z_N-Z_I)/Z_N$) denote the absolute and relative improvements.}
\end{table}

For problem  (\ref{eq:extended_SFT_Multiple_UAVs}) associated with Case~1 in 
Table~\ref{tab:SFT_results}, the first step is to identify an approximate optimal location of the UAV station under the condition $\|\mathcal{S}\|=0$ with the corresponding dynamic sets shown in Fig.~\ref{fig:example_SFT}(b). The obtained solution is $(171,134)$ as shown in Fig.~\ref{fig:example_SFT}(a) with the corresponding transition time of $905\,\text{s}$ under real conditions but without taking wind into account. When incorporating the wind vector $\mathcal{S}=(-0.6,0.6)$, the dynamic sets change accordingly as illustrated in Fig.~\ref{fig:example_SFT}(d). In this case, the approximate optimal location of the UAV station shifts to $(189,56)$ as shown in Fig.~\ref{fig:example_SFT}(c), with a reduced transition time of $401\,\text{s}$. This represents the $55.6\%$ improvement and thus demonstrates the effectiveness of the generalized SFT problem (extended variant) in addressing weather uncertainty. The obtained results highlight that neglecting wind effects can result in a significant time loss of nearly $500\,\text{s}$, which is an inefficiency that may be critical in time-sensitive rescue operations.  \vspace*{0.05in}

Finally in this section, we provide information about the reference sets and dynamic sets used in the experiments as summarized in Table \ref{tab:experiments-settings}.

\begin{table}[h]
\caption{Experimental Information and Parameters}
	\vspace{0.25cm}
	\begin{tabular}{c|c|c|}
		\hline
		\multicolumn{1}{c|}{No.} &
		\multicolumn{1}{c|}{Refrence sets} &
		\multicolumn{1}{c|}{Dynamic sets} \\ 
		\multicolumn{1}{c|}{} &
		\multicolumn{1}{c|}{} &
		\multicolumn{1}{c|}{}\\ \hline
		
		Info 1& $\Omega_0 = \mathbb{R}^2$,
		
		& 	\\	
		& $\Omega_1 = \left\{(x, y) \in \mathbb{R}^2 \mid \max\{|x - 30|, |y - 350|\} \le 15 \right\}$,
		
		& $F_1 = \left\{(f_1, f_2) \in \mathbb{R}^2 \mid (f_1 - s_1)^2 + (f_2 - s_2)^2 \le 1 \right\}$,\\
		
		& $\Omega_2 = \left\{(x, y) \in \mathbb{R}^2 \mid \max\{|x - 210|, |y - 10|\} \le 15 \right\}$,
		
		& $F_2 = \left\{(f_1, f_2) \in \mathbb{R}^2 \mid (f_1 - s_1)^2 + (f_2 - s_2)^2 \le 1 \right\}$,\\
		
		& $\Omega_3 = \left\{(x, y) \in \mathbb{R}^2 \mid \max\{|x - 550|, |y - 200|\} \le 15 \right\}$
		
		& $F_3 = \left\{(f_1, f_2) \in \mathbb{R}^2 \mid (f_1 - s_1)^2 + (f_2 - s_2)^2 \le 1 \right\}$\\
		
		\hline

		Info 2& $\Omega_0 = \mathbb{R}^2$,
		
		& \\
		
		& $\Omega_1 = \left\{(x, y) \in \mathbb{R}^2 \mid \max\{|x - 30|, |y - 750|\} \le 20 \right\}$,
		
		& $F_1 = \left\{(f_1, f_2) \in \mathbb{R}^2 \mid (f_1 - s_1)^2 + (f_2 - s_2)^2 \le 1 \right\}$,\\
		
		& $\Omega_2 = \left\{(x, y) \in \mathbb{R}^2 \mid \max\{|x - 550|, |y - 10|\} \le 10 \right\}$,
		
		& $F_2 = \left\{(f_1, f_2) \in \mathbb{R}^2 \mid (f_1 - s_1)^2 + (f_2 - s_2)^2 \le 1 \right\}$,\\
		
		& $\Omega_3 = \left\{(x, y) \in \mathbb{R}^2 \mid \max\{|x - 950|, |y - 400|\} \le 15 \right\}$
		
		& $F_3 = \left\{(f_1, f_2) \in \mathbb{R}^2 \mid (f_1 - s_1)^2 + (f_2 - s_2)^2 \le 1 \right\}$\\
		
		\hline
		
		Info 3& $\Omega_0 = \mathbb{R}^2$,
		
		& \\
		
		& $\Omega_1 = \left\{(x, y) \in \mathbb{R}^2 \mid \max\{|x - 30|, |y - 200|\} \le 5 \right\}$,
		
		& $F_1 = \left\{(f_1, f_2) \in \mathbb{R}^2 \mid (f_1 - s_1)^2 + (f_2 - s_2)^2 \le 1 \right\}$,\\
		
		& $\Omega_2 = \left\{(x, y) \in \mathbb{R}^2 \mid \max\{|x - 150|, |y - 10|\} \le 5 \right\}$,
		
		& $F_2 = \left\{(f_1, f_2) \in \mathbb{R}^2 \mid (f_1 - s_1)^2 + (f_2 - s_2)^2 \le 1 \right\}$,\\
		
		& $\Omega_3 = \left\{(x, y) \in \mathbb{R}^2 \mid \max\{|x - 350|, |y - 90|\} \le 5 \right\}$
		
		& $F_3 = \left\{(f_1, f_2) \in \mathbb{R}^2 \mid (f_1 - s_1)^2 + (f_2 - s_2)^2 \le 1 \right\}$\\
		\hline
		
		Info 4& $\Omega_0 = \mathbb{R}^2$,
		
		& \\
		
		& $\Omega_1 = \{(x,y) \in \mathbb{R}^2 \ |\ \frac{(x-30)^2}{10^2}+\frac{(y-350)^2}{10^2}\le1\},$
		
		&  $F_1 = \{(f_1,f_2) \in \mathbb{R}^2 \ |\ (f_1-s_1)^2+(f_2-s_2)^2\le 2^2\},$\\
		
		& $\Omega_2 = \{(x,y) \in \mathbb{R}^2 \ |\ \frac{(x-210)^2}{10^2}+\frac{(y-10)^2}{10^2}\le1\},$
		
		&  $F_2 = \{(f_1,f_2) \in \mathbb{R}^2 \ |\ (f_1-s_1)^2+(f_2-s_2)^2\le 1^2\},$\\
		
		& $\Omega_3 = \{(x,y) \in \mathbb{R}^2 \ |\ \frac{(x-550)^2}{10^2}+\frac{(y-200)^2}{10^2}\le1\}$
		
		&  $F_3 = \{(f_1,f_2) \in \mathbb{R}^2 \ |\ (f_1-s_1)^2+(f_2-s_2)^2\le 3^2\}$\\
		
		\hline
		
		Info 5& $\Omega_0 = \mathbb{R}^2$,
		
		& \\
		
		& $\Omega_1 = \{(x,y) \in \mathbb{R}^2 \ |\ \frac{(x-40)^2}{10^2}+\frac{(y-550)^2}{10^2}\le1\},$
		
		&  $F_1 = \{(f_1,f_2) \in \mathbb{R}^2 \ |\ (f_1-s_1)^2+(f_2-s_2)^2\le 2^2\},$\\
		
		& $\Omega_2 = \{(x,y) \in \mathbb{R}^2 \ |\ \frac{(x-410)^2}{10^2}+\frac{(y-20)^2}{10^2}\le1\},$
		
		&  $F_2 = \{(f_1,f_2) \in \mathbb{R}^2 \ |\ (f_1-s_1)^2+(f_2-s_2)^2\le 1^2\},$\\
		
		& $\Omega_3 = \{(x,y) \in \mathbb{R}^2 \ |\ \frac{(x-750)^2}{10^2}+\frac{(y-350)^2}{10^2}\le1\}$
		
		&  $F_3 = \{(f_1,f_2) \in \mathbb{R}^2 \ |\ (f_1-s_1)^2+(f_2-s_2)^2\le 3^2\}$\\
		
		\hline
		
		Info 6& $\Omega_0 = \mathbb{R}^2$,
		
		& \\
		
		& $\Omega_1 = \{(x,y) \in \mathbb{R}^2 \ |\ \frac{(x-40)^2}{10^2}+\frac{(y-550)^2}{10^2}\le1\},$
		
		&  $F_1 = \{(f_1,f_2) \in \mathbb{R}^2 \ |\ (f_1-s_1)^2+(f_2-s_2)^2\le 2^2\},$\\
		
		& $\Omega_2 = \{(x,y) \in \mathbb{R}^2 \ |\ \frac{(x-110)^2}{10^2}+\frac{(y-20)^2}{10^2}\le1\},$
		
		&  $F_2 = \{(f_1,f_2) \in \mathbb{R}^2 \ |\ (f_1-s_1)^2+(f_2-s_2)^2\le 1^2\},$\\
		
		& $\Omega_3 = \{(x,y) \in \mathbb{R}^2 \ |\ \frac{(x-650)^2}{10^2}+\frac{(y-150)^2}{10^2}\le1\},$
		
		&  $F_3 = \{(f_1,f_2) \in \mathbb{R}^2 \ |\ (f_1-s_1)^2+(f_2-s_2)^2\le 3^2\},$\\
		
		& $\Omega_4 = \{(x,y) \in \mathbb{R}^2 \ |\ \frac{(x-750)^2}{10^2}+\frac{(y-650)^2}{10^2}\le1\}$
		
		&  $F_4 = \{(f_1,f_2) \in \mathbb{R}^2 \ |\ (f_1-s_1)^2+(f_2-s_2)^2\le 2^2\}$\\
		
		\hline
		
		Info 7& $\Omega_0 = \mathbb{R}^2$,
		
		& \\
		
		& $\Omega_1 = \{(x,y) \in \mathbb{R}^2 \ |\ \frac{(x-10)^2}{10^2}+\frac{(y-2000)^2}{10^2}\le1\},$
		
		&  $F_1 = \{(f_1,f_2) \in \mathbb{R}^2 \ |\ (f_1-s_1)^2+(f_2-s_2)^2\le 3^2\},$\\
		
		& $\Omega_2 = \{(x,y) \in \mathbb{R}^2 \ |\ \frac{(x-3000)^2}{10^2}+\frac{(y-1700)^2}{10^2}\le1\},$
		
		&  $F_2 = \{(f_1,f_2) \in \mathbb{R}^2 \ |\ (f_1-s_1)^2+(f_2-s_2)^2\le 1^2\},$\\
		
		& $\Omega_3 = \{(x,y) \in \mathbb{R}^2 \ |\ \frac{(x-1700)^2}{10^2}+\frac{(y-20)^2}{10^2}\le1\}$
		
		&  $F_3 = \{(f_1,f_2) \in \mathbb{R}^2 \ |\ (f_1-s_1)^2+(f_2-s_2)^2\le 2^2\}$\\
        \hline
	\end{tabular}
	\vspace{0.1cm}
	
	\label{tab:experiments-settings}
	\vspace{0.1cm}
	{\scriptsize Note that the dynamic sets in this table have two parameters, $s_1$ and $s_2$, which are obtained from the vector $\mathcal{S} = (s_1, s_2)$ under the Wind Vector column in Tables~\ref{tab:Fermat-Toriceili-results}, \ref{tab:Sylvester_results}, and \ref{tab:SFT_results}.
	}
\end{table}
\clearpage

\section{Conclusion and Future Research}\label{sec:conc}

After a disaster such as, e.g., an earthquake, communication networks play a crucial role in supporting rescue operations. However, these networks are often partially or completely destroyed. In such scenarios, UAVs can serve as valuable tools, which offer large-scale WiFi coverage or DTNs. To enable this, a fleet of UAVs can be deployed from a mobile station (e.g., a truck). Given the operational constraints of UAVs---such as battery limitation and the influence of external forces like wind---it becomes essential to determine the optimal location for the mobile station that minimizes both time and energy consumption. This challenge can be framed as a single facility location problem, which connects closely to two well-known models in mathematical literature: the generalized Sylvester and Fermat--Torricelli problems.

The classical Sylvester and Fermat--Torricelli problems have long served as fundamental models in geometry and optimization. In this paper, we extended these problems into a more general framework that can include two types of gauge functions with several dynamic sets in \( \mathbb{R}^q \), capturing the complexities of real-world facility location challenges. We introduced the generalized \textit{Sylvester--Fermat--Torricelli (SFT)} problem, a novel framework for modeling single-facility location tasks in the presence of heterogeneous vehicle speeds, external forces like wind, and multiple distance norms (e.g., \( \ell_1 \) and \( \ell_2 \)).

Several cases were studied involving multiple UAVs, and we demonstrated how the generalized SFT model effectively addresses fleet heterogeneity and environmental conditions. Our results showed that this approach can reduce wasted operational time by up to 84\%, highlighting its practical efficiency. Overall, our findings demonstrate the versatility of the generalized SFT problem in modeling practical optimization scenarios. Moreover, our proposed model can serve as a foundational framework for modern facility location problems—particularly in UAV applications—offering a basis for further development in future studies. 

{\em Future research} will explore further generalizations, including {\em stochastic} elements and {\em multiobjective} formulations, to broaden the scope and applicability of this foundational model. Furthermore, other scenarios—such as {\em multifacility location problems} and cases involving the $\ell_1$ norm—can be considered in future studies. In addition, integrating the generalized SFT problem with {\em path planning algorithms} can be investigated for last-mile delivery applications to minimize time, energy consumption, and operational costs.

\section*{Acknowledgments}
Research of the first and second authors was partly supported by the National Science Foundation via CMMI Grant 1944068. Research of the third author was partly supported by the US National Science Foundation under grant DMS-2204519 and by the Australian Research Council under Discovery Project DP250101112.

\section{Appendix: Auxiliary Statements and Proofs}\label{appendix}

In the appendix, we present some additional material of its own interest, which is 
used to verify the main results above. We begin with providing some properties of  the generalized projection defined in (\ref{eq:generalized_projection}).\vspace*{0.05in}

\begin{proposition}\label{prop:generalized_projection_nonempty}
Let $F\subset \mathbb{R}^q$ be compact and convex set with $0\in \textnormal{int}\,F$, and let $\Omega \subset \mathbb{R}^q$ be nonempty, closed, and convex. Then we have $\Pi_F(x;\Omega)\neq\emptyset$ for all $x\in\mathbb{R}^q$. 
\end{proposition}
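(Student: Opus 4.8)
The plan is to reduce the claim to the classical Weierstrass existence theorem by combining the continuity of $\rho_F$ with a coercivity estimate that comes from the boundedness of $F$. First I would invoke Proposition~\ref{prop:relationship_between_set_based_and_Minkowski} to rewrite
\[
\rho_F^{\Omega}(x)=\inf_{\omega\in\Omega}\rho_F(x-\omega)=:\gamma,
\]
and observe that $\gamma<\infty$: since $0\in\operatorname{int}F$, the gauge $\rho_F$ is finite-valued on all of $\mathbb{R}^q$, so picking any $\omega_0\in\Omega$ gives $\gamma\le\rho_F(x-\omega_0)<\infty$. Thus it suffices to show that this infimum is attained at some $\bar\omega\in\Omega$, which is exactly the assertion that $\Pi_F(x;\Omega)\neq\emptyset$.

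Next I would establish coercivity of the map $\omega\mapsto\rho_F(x-\omega)$. Because $F$ is compact, there exists $M>0$ with $F\subseteq\mathbb{B}(0;M)$; hence $y\in tF$ forces $\|y\|\le tM$, which yields the lower bound $\rho_F(y)\ge\|y\|/M$ for every $y\in\mathbb{R}^q$. Consequently, along any minimizing sequence $\{\omega_k\}\subset\Omega$ with $\rho_F(x-\omega_k)\to\gamma$, we have $\|x-\omega_k\|\le M(\gamma+1)$ for all large $k$, so $\{\omega_k\}$ is bounded.

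Then, by the Bolzano--Weierstrass theorem, I would extract a subsequence $\omega_{k_j}\to\bar\omega$; since $\Omega$ is closed, $\bar\omega\in\Omega$. The gauge $\rho_F$ is continuous, being a finite convex function on $\mathbb{R}^q$ (alternatively this follows from the estimate $\rho_F(x-\bar\omega)-\rho_F(y-\bar\omega)\le \rho_F(x-y)\le l\|x-y\|$ used in the proof of Proposition~\ref{prop:proof_of_Lipshitz}), so $\rho_F(x-\bar\omega)=\lim_{j}\rho_F(x-\omega_{k_j})=\gamma=\rho_F^{\Omega}(x)$. Therefore $\bar\omega\in\Pi_F(x;\Omega)$, proving the claim.

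The only delicate point, and the step I would be most careful about, is the coercivity estimate: one must use compactness of $F$ (equivalently, boundedness --- closedness alone would not suffice) to guarantee that the sublevel sets of $\rho_F$ are bounded, so that minimizing sequences cannot escape to infinity. Once that is secured, the remainder is a textbook application of the Weierstrass theorem, and convexity of $\Omega$ and $F$ is not even needed for this particular statement.
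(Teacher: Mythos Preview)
Your proof is correct and follows essentially the same route as the paper: both reduce the claim to the Weierstrass existence theorem by combining continuity of $\rho_F$ with coercivity of $\omega\mapsto\rho_F(x-\omega)$ (the paper phrases this via boundedness of sublevel sets, you via a bounded minimizing sequence, which is equivalent). Your version is in fact more explicit about the coercivity estimate $\rho_F(y)\ge\|y\|/M$ coming from compactness of $F$, which the paper merely asserts without justification.
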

\begin{proof}
Consider the sublevel set of the function $\rho_F^\O$ given by
\[
\mathcal{L}_\lambda(\rho_F^\O) := \big\{\omega \in \Omega \;\big|\;\rho_F^\O(x - \omega) \le \lambda\big\}, \quad \lambda \in \mathbb{R}.
\]  
By Proposition~\ref{prop:proof_of_Lipshitz}, the function \( \rho_F^\O\) is continuous, and hence \( \mathcal{L}_\lambda(\rho_F^\O) \) is a closed set. Since \( \rho_F^\O(x - \omega) \to \infty \) as \( \omega \to \infty \), we conclude that \( \mathcal{L}_\lambda(\rho_F^\O) \) is also bounded, and hence $\mathcal{L}_\lambda(\rho_F^\O)$ is a compact subset of \( \mathbb{R}^q \). Therefore, the nonemptiness of 
generalized projection $\Pi_F(x;\Omega)$ follows from the classical Weierstrass existence theorem.
\end{proof}

The next proposition shows that a point belonging to the generalized projection of any $\ox\notin\O$ lies on the boundary of the reference set.\vspace*{0.05in}

\begin{proposition}\label{prop:projection&border} In addition to the assumptions of 
Proposition~{\rm\ref{prop:generalized_projection_nonempty}}, suppose that $\bar{x} \notin\Omega$. Then for any point $\bar{\omega} \in \Pi_F(\bar{x};\Omega)$, we get that $\bar{\omega} \in \textnormal{bd}\,\Omega$.
\end{proposition}
\begin{proof}
Arguing by contradiction, suppose that $\bar{\omega} \in \textnormal{int}\,\Omega$. Then there exists $\epsilon>0$ such that $\mathbb{B}(\bar{\omega};\epsilon)\subset \Omega$, where $\mathbb{B}(\bar{\omega};\epsilon)$ stands for the ball centered at $\bar\omega$ with radius $\ve>0$. Denoting
$y:=\bar{\omega}+\epsilon\frac{\bar{x}-\bar{\omega}}{\|\bar{x}-\bar{\omega}\|}$, we have the equalities
\[
\rho_F\Big(\bar{x}-y)=\rho_F(\bar{x}-\bar{\omega}-\epsilon \frac{\bar{x}-\bar{\omega}}{\|\bar{x}-\bar{\omega}\|}\Big)=\rho_F\Big((\bar{x}-\bar{\omega}\Big)(1-\frac{\epsilon}{\|\bar{x}-\bar{\omega}\|}\Big)\Big)=(1-\frac{\epsilon}{\|\bar{x}-\bar{\omega}\|})\rho_F(\bar{x}-\bar{\omega}).
\]
Since $\big(1-\frac{\epsilon}{\|\bar{x}-\bar{\omega}\|}\big)\rho_F(\bar{x}-\bar{\omega})<\rho_F(\bar{x}-\bar{\omega})$, this contradicts the definition of the generalized projection in \eqref{eq:generalized_projection}, and therefore $\bar{\omega}$ should be a boundary point of $\O$.
\end{proof}

Now we present sufficient conditions under which the inequality in the subadditivity property of the classical Minkowski function $\rho_F$  becomes an equality. \vspace*{0.05in}

\begin{proposition}\label{prop:equality_minkowski_strictly_convex}
Let $F$ be a compact and strictly convex subset of $\mathbb{R}^q$ with $0 \in \textnormal{int}\,F$. Then
\begin{equation}
\rho_F(x+y)=\rho_F(x)+\rho_F(y)\;\mbox{ whenever }\; x,y \ne0
\end{equation}
if and only if there exists $\lm>0$ such that $x=\lambda y$.
\end{proposition}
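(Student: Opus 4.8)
The plan is to prove the two implications of the equivalence separately. The ``if'' direction is immediate: if $x=\lambda y$ with $\lambda>0$, then positive homogeneity of $\rho_F$ (Theorem 6.14 in \parencite{mordukhovich2023easy}) gives
\[
\rho_F(x+y)=\rho_F\big((1+\lambda)y\big)=(1+\lambda)\rho_F(y)=\lambda\rho_F(y)+\rho_F(y)=\rho_F(\lambda y)+\rho_F(y)=\rho_F(x)+\rho_F(y).
\]

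For the ``only if'' direction, suppose $\rho_F(x+y)=\rho_F(x)+\rho_F(y)$ with $x,y\neq0$, and set $a:=\rho_F(x)$, $b:=\rho_F(y)$. Since $F$ is compact with $0\in\textnormal{int}F$, the gauge $\rho_F$ is finite-valued and positive away from the origin, so $a,b>0$. First I would record the standard boundary characterization of the Minkowski gauge of a closed set with the origin in its interior: $\rho_F(w)=1$ precisely when $w\in\textnormal{bd}F$, and $\rho_F(w)<1$ precisely when $w\in\textnormal{int}F$. Combined with positive homogeneity this gives $\rho_F(x/a)=\rho_F(y/b)=1$, so $x/a,\,y/b\in\textnormal{bd}F$, while the hypothesis $\rho_F(x+y)=a+b$ likewise yields $(x+y)/(a+b)\in\textnormal{bd}F$.

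The decisive step is then the observation that
\[
\frac{x+y}{a+b}=\frac{a}{a+b}\cdot\frac{x}{a}+\frac{b}{a+b}\cdot\frac{y}{b}
\]
expresses the boundary point $(x+y)/(a+b)$ as a convex combination of the two boundary points $x/a$ and $y/b$ with coefficients $a/(a+b),\,b/(a+b)\in(0,1)$. If $x/a\neq y/b$, strict convexity of $F$ — understood as the property that a proper convex combination of two distinct points of $F$ lies in $\textnormal{int}F$ — would place this combination in $\textnormal{int}F$, contradicting that it lies on $\textnormal{bd}F$. Hence $x/a=y/b$, i.e.\ $x=\lambda y$ with $\lambda:=a/b>0$, as claimed.

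I expect the argument to be short once two preliminary facts are in hand, and these constitute the only real obstacles: (i) the value-$1$ characterization of $\rho_F$ on $\textnormal{bd}F$, which rests on closedness and boundedness of $F$ together with $0\in\textnormal{int}F$; and (ii) the precise reading of ``$F$ strictly convex'' as the property invoked above. Both follow from the standing assumptions and the elementary properties of $\rho_F$ already used elsewhere in the paper (see \parencite{mordukhovich2023easy}), so no genuinely new machinery is needed.
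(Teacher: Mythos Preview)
Your argument is correct and is the standard proof of this fact. The paper itself does not actually give a proof of this proposition; it simply defers to Proposition 8.13 of \parencite{mordukhovich2023easy}, so there is no in-paper argument to compare against. Your write-up is therefore more self-contained than what the paper provides, and the two preliminary facts you flag---the boundary characterization $\rho_F(w)=1\Leftrightarrow w\in\textnormal{bd}\,F$ (which uses compactness of $F$ and $0\in\textnormal{int}F$) and the definition of strict convexity---are exactly the ingredients one expects the cited reference to invoke as well.
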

\begin{proof} It can be found in \parencite[Proposition~8.13]{mordukhovich2023easy}.
\end{proof}

The following lemma gives us sufficient conditions ensuring the equality in definition \eqref{eq:r_enlarge_set_based_minkowski} of the reference set enlargement.\vspace*{0.05in}

\begin{lemma}\label{lemma:r_enlargement_other_shape} Let F be a closed  and convex set with $0 \in \textnormal{int}\,F$, and let $\Omega \subset \mathbb{R}^q$ be an arbitrary nonempty set. Then we have the equality $\Omega+rF=\Omega_r$ for any $r > 0$.
\end{lemma}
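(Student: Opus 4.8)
The plan is to establish the two set inclusions in $\Omega + rF = \Omega_r$ separately. The inclusion $\Omega + rF \subseteq \Omega_r$ is immediate from the definitions: if $x = \omega + rf$ with $\omega \in \Omega$ and $f \in F$, then $r$ belongs to the set $\{t \ge 0 \mid x \in \Omega + tF\}$, so by the definition~(\ref{eq:set-based-minkowski-definition}) of $\rho_F^{\Omega}$ we get $\rho_F^{\Omega}(x) \le r$, i.e.\ $x \in \Omega_r$.

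For the reverse inclusion, fix $x \in \Omega_r$, so that $\rho_F^{\Omega}(x) \le r$. A natural first move is the elementary monotonicity $\Omega + tF \subseteq \Omega + rF$ for all $0 \le t \le r$, which follows from $0 \in F$ and convexity of $F$ by writing $\omega + tf = \omega + r\bigl(\tfrac{t}{r}f + (1-\tfrac{t}{r})\,0\bigr)$; this already settles the case $\rho_F^{\Omega}(x) < r$, since one can pick $t$ with $\rho_F^{\Omega}(x) \le t < r$ and $x \in \Omega + tF$. To handle every $x$ with $\rho_F^{\Omega}(x) \le r$ uniformly (in particular the borderline case $\rho_F^{\Omega}(x) = r$), I would pass to the generalized projection: by Proposition~\ref{prop:relationship_between_set_based_and_Minkowski}, $\rho_F^{\Omega}(x) = \inf_{\omega \in \Omega}\rho_F(x - \omega)$, and by Proposition~\ref{prop:generalized_projection_nonempty} there is $\bar{\omega} \in \Pi_F(x;\Omega)$ with $\rho_F(x - \bar{\omega}) = \rho_F^{\Omega}(x) \le r$. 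Positive homogeneity of $\rho_F$ (Theorem~6.14 in \parencite{mordukhovich2023easy}) then gives $\rho_F\bigl(\tfrac{x - \bar{\omega}}{r}\bigr) \le 1$, and since $F$ is closed, convex with $0 \in \operatorname{int}F$ one has $F = \{z \in \mathbb{R}^q \mid \rho_F(z) \le 1\}$ (the nontrivial inclusion $\supseteq$ follows from $0 \in F$, convexity, and closedness of $F$ by a short scaling/limiting argument). Hence $\tfrac{x - \bar{\omega}}{r} \in F$ and therefore $x = \bar{\omega} + r\cdot\tfrac{x - \bar{\omega}}{r} \in \Omega + rF$, which closes the proof.

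The step I expect to be the main obstacle is precisely the borderline case $\rho_F^{\Omega}(x) = r$: for a general reference set the infimum defining $\rho_F^{\Omega}(x)$ need not be attained, so the argument genuinely rests on the nonemptiness of $\Pi_F(x;\Omega)$, which is guaranteed only under the standing hypotheses of Proposition~\ref{prop:generalized_projection_nonempty} (closedness of $\Omega$ together with boundedness of $F$). One should therefore either carry those assumptions into the statement or be content with the weaker relation between $\Omega_r$ and $\Omega + rF$ that holds without them. The auxiliary identity $F = \{z \mid \rho_F(z) \le 1\}$ is a fact of the same flavour --- a closedness property of a single gauge set --- and may simply be quoted from \parencite{mordukhovich2023easy} if one prefers not to reprove it.
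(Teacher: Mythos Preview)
Your argument and the paper's share the same kernel: for $\Omega+rF\subseteq\Omega_r$ the definition suffices, and for the reverse inclusion the key step is the monotonicity $tF\subseteq rF$ whenever $0\le t\le r$, which follows from $0\in F$ and convexity of $F$ exactly as you write.

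The difference is that the paper's proof simply asserts ``$\rho_F^{\Omega}(x)=t$, so $x\in\Omega+tF$'' and then applies the monotonicity. You are right to be suspicious of this line: it presumes that the infimum in~(\ref{eq:set-based-minkowski-definition}) is attained, which is precisely the borderline issue you isolate. Your treatment via the generalized projection (Proposition~\ref{prop:generalized_projection_nonempty}) together with the gauge identity $F=\{z:\rho_F(z)\le 1\}$ is a clean way to close that gap, and your caveat about the needed hypotheses is well placed. Indeed, with $\Omega$ merely nonempty the stated equality can fail: take $q=1$, $\Omega=(0,1)$, $F=[-1,1]$, $r=1$; then $\rho_F^{\Omega}(-1)=d(-1,(0,1))=1$, so $-1\in\Omega_1$, yet $\Omega+rF=(-1,2)$ does not contain $-1$. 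So your reservation is not pedantry but an actual correction: the lemma, and the paper's proof of it, silently require the infimum to be attained (e.g.\ $\Omega$ closed and $F$ compact), while under the hypotheses as written only $\Omega+rF\subseteq\Omega_r$ holds in general.
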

\begin{proof} Whenever $x \in \Omega+rF$ and $r>0$, we have $\rho_F^\Omega(x) \le r$ and hence $x \in \Omega_r$. For the reverse inclusion, picking $x \in \Omega_r$ gives us $\rho_F^\Omega(x)\le r$. Taking $t \le r$ and $\rho_F^\Omega(x)=t$, we get $x \in \Omega+tF$. Since $F$ with is convex with  $0 \in \textnormal{int}\,F$, it follows that
\[
\frac{t}{r}F \subset F \Longrightarrow \Omega+tF \subset \Omega+rF,
\]
which tells us therefore that $x \in \Omega+rF$ and thus completes the proof.
\end{proof}

The next two lemmas establish the needed properties of the set-based Minkowski gauge $\rho_F^{\Omega}$.\vspace*{0.05in}

\begin{lemma}\label{lemma:equality_mikowski_to_omega_r} Let $F$ and $\O$ be a nonempty subsets of $\mathbb{R}^q$. Assume that $F$ is closed and convex and take $x \not \in \Omega_r$ with some $r>0$ and $\rho_F^\Omega(x)<\infty$. Then we have the representation
\begin{equation}\label{r-enlar}
\rho_F^\Omega(x)=\rho_F^{\Omega_r}(x)+r.
\end{equation}
\end{lemma}
\begin{proof}
Fix $\epsilon>0$ and take $t$ with \(\rho_F^{\Omega_r}(x) \leq t <\rho_F^{\Omega_r}(x) + \epsilon\). Then 
there exist \(u \in \Omega_r\) and \(f \in F\) with \(u + tf = x\). Since \(\rho_F^{\Omega}(u) \leq r\), we can find $\alpha>0$, \(\omega \in \Omega\), and \(f' \in F\) such that $\rho_F^{\Omega}(u)\le t^\prime<\rho_F^{\Omega}(u)+\alpha$ and \(\omega + t^\prime f' = u\). Substituting the latter into the expression for \(x\) gives us the equality
$
x = \omega + t^\prime f' + tf$. Employing further the convexity of \(F\), we have \(t^\prime f' + tf \in t^\prime F + tF = (t^\prime + t)F\), and hence $x \in \omega + (t^\prime + t)F$. This yields \(\rho_F^\Omega(x) \leq t^\prime + t < \rho_F^{\Omega}(u) + \rho_F^{\Omega_r}(x) +\alpha + \epsilon\). Passing to the limit as \(\epsilon\dn 0\) and $\alpha\dn 0$ brings us to
\[
\rho_F^\Omega(x) \leq \rho_F^{\Omega}(u)+\rho_F^{\Omega_r}(x) \le \rho_F^{\Omega_r}(x) + r,
\]
which therefore justifies the inclusion ``$\subset$" in \eqref{r-enlar}.

To verify the reverse inclusion in  \eqref{r-enlar}, suppose that \(\rho_F^\Omega(x) = t\) meaning that \(x \in \Omega + tF\). This gives us \(r < t\) due to \(x \notin \Omega_r\). Using the convexity of $F$ ensures that $tF=(r+t-r)F=rF+(t-r)F$, and we have $x \in \Omega + rF + (t - r)F$. It follows from Lemma~\ref{lemma:r_enlargement_other_shape} by the convexity of \(F\)  that 
$x \in \Omega_r + (t - r)F$ yielding the relationships
$$
\rho_F^{\Omega_r}(x) \leq t - r = \rho_F^\Omega(x) - r,
$$
which readily justify the inclusion ``$\supset$" in \eqref{r-enlar} and thus completes the proof.	
\end{proof}

\begin{lemma}\label{lemma:minkowski_estimate}  Under the assumptions of Lemma~{\rm\ref{lemma:equality_mikowski_to_omega_r}}, let $t\ge 0$, $f \in F$, and $x \in \textnormal{dom}\  \rho_F^\Omega$. Then 
\begin{equation}\label{rhoF}
\rho_F^\Omega(x+tf)\le \rho_F^\Omega(x)+t.
\end{equation}
\end{lemma}
\begin{proof}
Given $\epsilon > 0$, choose $s \ge 0$ such that $\rho_F^\Omega(x) \le s < \rho_F^\Omega(x) + \epsilon$ giving us $x \in \Omega + hF$. Since $tf \in tF$, it follows that
\[
x + tf \in \Omega + sF + tF = \Omega + (s + t)F,
\]
where the equality holds due to the convexity of $F$. Therefore,
\(\rho_F^\Omega(x + tf) \le s + t < \rho_F^\Omega(x) + \epsilon + t.\)
Taking the limit as $\epsilon\dn 0$, we arrive at \eqref{rhoF} and thus complete the proof.
\end{proof}

Yet another important property  of the set-based Minkowski gauge is revealed below.\vspace*{0.05in}

\begin{proposition} \label{prop:f_on_border} Let $F\subset \mathbb{R}^q$ be a convex compact in $\mathbb{R}^q$ with $0 \in \textnormal{int}\,F$, and let $\Omega \subset\mathbb{R}^q$ be a nonempty, closed, and convex set. If $\rho_F^{\Omega}(\bar{x})=t>0$, then we have the decomposition $\bar{x}=\bar{\omega}+tf$ with some 
$\bar{\omega}\in\textnormal{bd}\,\Omega$ and $f \in \textnormal{bd}\,F$.
\end{proposition}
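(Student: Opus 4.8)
The plan is to combine the equivalence $\rho_F^{\Omega}(\bar{x}) = \inf_{\omega\in\Omega}\rho_F(\bar{x}-\omega)$ from Proposition \ref{prop:relationship_between_set_based_and_Minkowski} with the nonemptiness of the generalized projection (Proposition \ref{prop:generalized_projection_nonempty}) and the boundary property of the projection (Proposition \ref{prop:projection&border}). Since $\rho_F^{\Omega}(\bar{x}) = t > 0$, we have $\bar{x}\notin\Omega$ (otherwise the value would be $0$, using that $0\in\textnormal{int}F$). By Proposition \ref{prop:generalized_projection_nonempty}, the set $\Pi_F(\bar{x};\Omega)$ is nonempty, so pick $\bar{\omega}\in\Pi_F(\bar{x};\Omega)$; by definition this means $\rho_F^{\Omega}(\bar{x}) = \rho_F(\bar{x}-\bar{\omega}) = t$. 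Then Proposition \ref{prop:projection&border} (applicable since $\bar{x}\notin\Omega$) immediately gives $\bar{\omega}\in\textnormal{bd}\,\Omega$.

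It remains to write $\bar{x} = \bar{\omega} + tf$ with $f\in\textnormal{bd}\,F$. Set $f := (\bar{x}-\bar{\omega})/t$, so that trivially $\bar{x} = \bar{\omega} + tf$. We need $\rho_F(f) = 1$, which forces $f\in\textnormal{bd}\,F$ because $F$ is compact convex with $0\in\textnormal{int}F$ (so $\{\rho_F = 1\}$ coincides with $\textnormal{bd}\,F$; this follows from positive homogeneity of $\rho_F$, Theorem 6.14 in \parencite{mordukhovich2023easy}). By positive homogeneity, $\rho_F(f) = \rho_F(\bar{x}-\bar{\omega})/t = t/t = 1$. This completes the argument, and I would also remark that $\bar{x}-\bar{\omega}\in tF$ (equivalently $\bar{x}\in\Omega + tF$) is consistent with the definition of $\rho_F^{\Omega}$ at the level $t$.

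The only genuinely delicate point is justifying that $\rho_F(f)=1$ implies $f\in\textnormal{bd}\,F$, i.e., that the unit level set of the Minkowski gauge is exactly the boundary of $F$ when $F$ is compact convex with the origin in its interior. This is standard — if $f\in\textnormal{int}F$ then $\rho_F(f)<1$ by the interior/scaling argument, and if $f\notin F$ then $\rho_F(f)>1$ since $F$ is closed — so I expect no real obstacle; the proof is essentially a short assembly of the cited auxiliary results. One should just be careful to invoke $\bar{x}\notin\Omega$ explicitly before applying Proposition \ref{prop:projection&border}, since that proposition has $\bar{x}\notin\Omega$ as a hypothesis.
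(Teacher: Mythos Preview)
Your proposal is correct and follows essentially the same approach as the paper: both select $\bar{\omega}\in\Pi_F(\bar{x};\Omega)$, invoke Proposition~\ref{prop:projection&border} to place $\bar{\omega}$ on $\textnormal{bd}\,\Omega$, and then argue that the corresponding $f=(\bar{x}-\bar{\omega})/t$ lies on $\textnormal{bd}\,F$. The only cosmetic difference is in this last step: the paper argues by contradiction (assuming $f\in\textnormal{int}\,F$, enlarging along the direction $(\bar{x}-\bar{\omega})/\|\bar{x}-\bar{\omega}\|$, and obtaining $\bar{x}-\bar{\omega}\in t'F$ with $t'<t$), whereas you compute $\rho_F(f)=1$ directly via positive homogeneity and then appeal to the standard identification $\{\rho_F=1\}=\textnormal{bd}\,F$; these are the same argument packaged differently.
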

\begin{proof}
It follows from definition (\ref{eq:generalized_projection}) of the generalized projection that 
\[
\rho_F^{\Omega}(\bar{x}) = \rho_F(\bar{x} - \bar{\omega}) = t\;\mbox {for any }\; \bar{x} \notin \Omega\;\mbox{ and }\;\bar{\omega} \in \Pi_F(\bar{x}; \Omega),
\]
Employing Proposition~\ref{prop:projection&border} tells us that \(\bar{\omega} \in \textnormal{bd}\,\Omega\). Therefore, \(\bar{x} - \bar{\omega} \in tF\), i.e., \(\bar{x} \in \bar{\omega} + tF\). Consequently, there exists \(f \in F\) such that \(\bar{x} = \bar{\omega} + tf\). 

Suppose now on the contrary that \(f \in \textnormal{int}\,F\). Then there exists a number \(\epsilon > 0\) such that \(\mathbb{B}(f; \epsilon) \subset F\). Consider the vector
\[
v: = \frac{\bar{x} - \bar{\omega}}{\|\bar{x} - \bar{\omega}\|}
\]
and deduce from \(f + \epsilon v \in F\) that \(t(f + \epsilon v) \in tF\). This leads us to the implication
\[
\big[tf + t\epsilon v \in tF\big]\Longrightarrow\Big[\bar{x} - \bar{\omega} + \frac{t\epsilon(\bar{x} - \bar{\omega})}{\|\bar{x} - \bar{\omega}\|} = (\bar{x} - \bar{\omega})\Big(1 + \frac{t\epsilon}{\|\bar{x} - \bar{\omega}\|}\Big) \in tF\Big].
\]
Therefore, we get the inclusion 
\[
\bar{x} - \bar{\omega} \in \frac{t}{1 + \frac{t\epsilon}{\|\bar{x} - \bar{\omega}\|}}F.
\]
Since \(t > 0\) and \(\epsilon > 0\), it follows that
\[
t^{\prime}: = \frac{t}{1 + \frac{t\epsilon}{\|\bar{x} - \bar{\omega}\|}} < t.
\] 
This is a contradiction, since we found a number \(t^{\prime}>0\) such that \(\bar{x} - \bar{\omega} \in t^{\prime}F\) and \(t^{\prime} < \rho_F^{\Omega}(\bar{x})\). Thus we arrive at \(f \in \textnormal{bd}\,F\) as claimed.
\end{proof}

The following proposition establishes a relationship for the minimal time to reach from a given point $\ox\in \mathbb{R}^q$ to the {\em closest point} in the reference set $\Omega \subset \mathbb{R}^q$, \textcolor{black}{see Fig.~\ref{fig:mdf-msmg}(c) for an illustrative example}.\vspace*{0.05in}

\begin{proposition}\label{prop:SMG_with_negative_F} Let $F \subset \mathbb{R}^q$ be a closed and convex set with $0 \in \textnormal{int}\,F$, and let $\Omega$ be an arbitrary nonempty subset of $\mathbb{R}^q$. Then we have the relationship
\begin{equation}\label{min-point}
\inf_{\omega\in \Omega} \rho_F^{\{\bar{x}\}}(\omega)= \rho_{-F}^{\Omega}(\bar{x}).
\end{equation}
\end{proposition}
\begin{proof}
Fix any $\epsilon>0$ and choose $t >0$ such that $\rho_{-F}^{\Omega}(\bar{x}) \le t < \rho_{-F}^{\Omega}(\bar{x})+\epsilon$. We can find $\omega \in \Omega$ and $f \in F$ with $\bar{x}=\omega+t(-f)$. Therefore, $\omega=\bar{x}+tf$, and it follows that
\[
\inf_{\omega\in \Omega} \rho_F^{\{\bar{x}\}}(\omega) \le  \rho_F^{\{\bar{x}\}}(\omega)\le t < \rho_{-F}^{\Omega}(\bar{x})+\epsilon.
\]
Passing to the limit as $\ve\dn 0$	leads us to the inequality ``$\le$" in \eqref{min-point}. To verify the reverse inequality in \eqref{min-point}, take $\alpha>0$, and choose a number $s$ with $\inf_{\omega\in \Omega} \rho_F^{\{\bar{x}\}}(\omega) \le s< \inf_{\omega\in \Omega} \rho_F^{\{\bar{x}\}}(\omega)+\alpha$. This gives us $\omega \in \Omega$ and $f\in F$ such that $\omega=\bar{x}+sf$. Therefore, $\bar{x}=\omega+s(-f)$, and we have
\[
\rho_{-F}^{\Omega}(\bar{x})\le <s<
\inf_{\omega\in \Omega} \rho_F^{\{\bar{x}\}}(\omega)+\alpha.
\]
Passing to the limit  $\alpha\dn 0$ verifies the inequality ``$\ge$" in \eqref{min-point} and thus completes the proof.
\end{proof}

\begin{figure}
    \centering
    \includegraphics[width=1\linewidth]{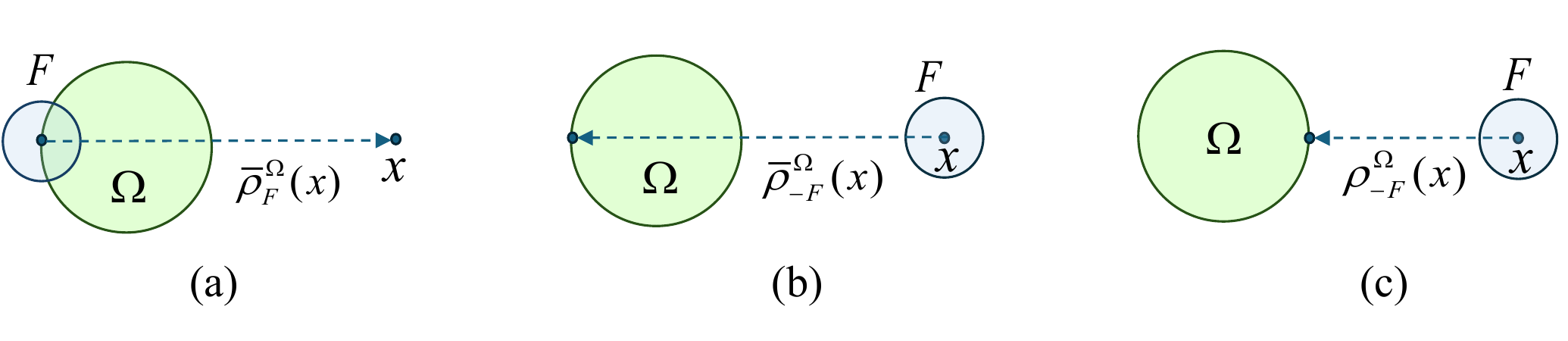}
    \caption{\textcolor{black}{This figure presents three simple illustrative examples of different gauge functions in the case where both the reference and dynamic sets are circles. (a) provides an example of the MSMG function \eqref{eq:MSMG_function}. (b) illustrates the MSMG function with $-F$, as investigated in Proposition~\ref{prop:MSMG_with_negative_F}. (c) shows an example of the set-based Minkowski gauge function \eqref{eq:set-based-minkowski-definition} with $-F$, as studied in Proposition~\ref{prop:SMG_with_negative_F}.}
}
    \label{fig:mdf-msmg}
\end{figure}

Next we derive an equality allowing us to calculate the minimal time of moving from a point $\bar{x}\in \mathbb{R}^q$ to the {\em farthest point} in the reference set $\Omega \subset \mathbb{R}^q$, \textcolor{black}{see Fig.~\ref{fig:mdf-msmg}(b) for an example.}\vspace*{0.05in}

\begin{proposition}\label{prop:MSMG_with_negative_F} Suppose in addition to the assumptions of Proposition~{\rm\ref{prop:SMG_with_negative_F}} that the set $\Omega$ is bounded. Then we have the equality
\begin{equation}\label{max-point}
\sup_{\omega \in \Omega}\rho_F^{\{\bar{x}\}}(\omega)=\Bar{\rho}_{-F}^{\Omega}(\bar{x})
\end{equation}
\end{proposition}
\begin{proof} Fix \(\epsilon > 0\) and choose \(t \ge 0\) such that  
\[
\Bar{\rho}_{-F}^{\Omega}(\bar{x}) \le t < \Bar{\rho}_{-F}^{\Omega}(\bar{x}) + \epsilon.
\]  
This tells us that \(\bar{x} - \Omega \subset tF\), or equivalently, we have \(\bar{x} - \omega \in t(-F)\) for all \(\omega \in \Omega\), i.e., \(\omega - \bar{x} \in tF\). Thus $\rho_F(\omega - \bar{x}) \le t$ whenever $\omega \in \Omega$. It follows from Proposition~\ref{prop:relationship_between_set_based_and_Minkowski} that \(\rho_F^{\{\bar{x}\}}(\omega) = \rho_F(\omega - \bar{x})\). This leads us to the relationships  
\[
\sup_{\omega \in \Omega} \rho_F(\omega - \bar{x}) = \sup_{\omega \in \Omega} \rho_F^{\{\bar{x}\}}(\omega) \le t < \Bar{\rho}_{-F}^{\Omega}(\bar{x}) + \epsilon.
\]  
Letting \(\epsilon\dn0\), we arrive at the inequality ``$\le$ in \eqref{max-point}.  
	
To verify the  reverse inequality,  denote 
$s:=\sup_{\omega \in \Omega} \rho_F(\omega - \bar{x}) = \sup_{\omega \in \Omega} \rho_F^{\{\bar{x}\}}(\omega)$. Then we have \(\rho_F(\omega - \bar{x}) \le s\), which implies in turn that \(\omega - \bar{x} \in sF\). Hence \(\Omega - \bar{x} \subset sF\), or equivalently, \(\bar{x} - \Omega \subset s(-F)\). This justifies the inequality ``$\ge$" in \eqref{max-point} and thus completes the proof.  
\end{proof}

The following result of its own interest provides new formulas, in comparison with Theorem~\ref{prop:subdifferential2}, to calculate the subdifferential of the set-based Minkowski gauge function at out-of-set points by using generalized projections.\vspace*{0.05in}

\begin{theorem}\label{sub-proj} Let $F$ be a convex and compact set in $\mathbb{R}^q$ with $0 \in \textnormal{int}\,F$, and let $\Omega\subset\mathbb{R}^q$ be nonempty, closed, and convex. Given 
$\bar{x} \notin \Omega$ and $\bar{\omega}\in \Pi_F(\bar{x};\Omega)$, denote $r:=\rho_F^{\Omega}(\bar{x})$. Then we have the subdifferential representations
\begin{equation}\label{sub-proj1}
\partial\rho_F^{\Omega}(\bar{x})=\partial \rho_F(\bar{x}-\bar{\omega}) \cap N(\bar{x};\Omega_r),
\end{equation}
\begin{equation}\label{eq:subdifferential_exact_approximation}
\partial \rho_F^\Omega(\bar{x})=\partial \rho_F(\bar{x}-\bar{\omega}) \cap N(\bar{\omega};\Omega).
\end{equation}
\end{theorem}
\begin{proof}
Proposition~\ref{prop:generalized_projection_nonempty} tells us that $\Pi_F(\bar{x}; \Omega)\ne\emp$. It follows from Proposition~\ref{prop:subdifferential_estimation} that $\partial \rho_F^\Omega(\bar{x}) \subset \partial \rho_F(\bar{x} - \bar{\omega})$ and from 
Proposition~\ref{prop:subdifferential2} that $\partial \rho_F^\Omega(\bar{x}) \subset N(\bar{x}; \Omega_r)$. Combining the above, we arrive at the inclusion ``$\subset$" in \eqref{sub-proj1}.

To verify the reverse inclusion in \eqref{sub-proj1}, it suffices to show that $\partial \rho_F(\bar{x}-\bar{\omega}) \subset \Upsilon$, where $\Upsilon$ is defined in 
Proposition~\ref{prop:subdifferential2}. Note that $\rho_F^{\{0\}}(x) = \rho_F(x)$, which implies  that $\partial \rho_F^{\{0\}}(x) = \partial \rho_F(x)$ for all $x \in \textnormal{dom}\,\rho_F$.  
Taking there the point $\bar{x} - \bar{\omega}$ with $\rho_F(\bar{x}-\bar{\omega})=\rho_F^{\{0\}}(\bar{x} - \bar{\omega}) = r$, we pick $v \in \partial \rho_F(\bar{x} - \bar{\omega})$ and conclude that $v \in \partial \rho_F^{\{0\}}(\bar{x} - \bar{\omega})$. Moreover, 
Proposition~\ref{prop:subdifferential2} tells us that
\[
v \in N\big(\bar{x} - \bar{\omega};\{0\} + rF\big) \cap \Upsilon
\]
yielding $v \in \Upsilon$ and $\partial \rho_F(\bar{x}-\bar{\omega}) \cap N(\bar{x};\Omega_r) \subset N(\bar{x};\Omega_r) \cap \Upsilon$. It follows from 
Proposition~\ref{prop:subdifferential2} that
$$
\partial \rho_F^{\Omega}(\bar{x})=N(\bar{x};\Omega_r) \cap \Upsilon,
$$
which justifies the fulfillment of the inclusion ``$\supset$" in \eqref{sub-proj1} and the equality therein.

To verify (\ref{eq:subdifferential_exact_approximation}), let us first show that $\partial \rho_F^\Omega(\bar{x}) \subset N(\bar{\omega};\Omega)$. Pick $v \in \partial \rho_F^{\Omega}(\bar{x})$ and fix $\bar{\omega} \in \Pi_F(\bar{x}; \Omega)$, which gives us $\bar{x} - \bar{\omega} \in rF$ and then $\bar{f}\in F$ with $\bar{x}-\bar{\omega}=r\bar{f}$. Select now any $y \in G(\bar{x})$, where the set $G(\bar{x})$ is defined by
\[
G(\bar{x}) := \big\{ \omega + (\bar{x} - \bar{\omega})\;\big|\;\omega \in \Omega,\;\bar{\omega} \in \Pi_F(\bar{x};\Omega)\big\}.
\]
Thus we get the representation $y = \omega + r\bar{f}$ with some $\omega \in \Omega$. Recalling that 
$\rho_F^{\Omega}(y) \le r$ and using the subgradient definition for $v \in \partial \rho_F^{\Omega}(\bar{x})$ lead us to
\[
\langle v, y - \bar{x} \rangle = \langle v, \omega - \bar{\omega} \rangle \le \rho_F^{\Omega}(y) - \rho_F^{\Omega}(\bar{x}) \le 0.
\]
which shows that  $v \in N(\bar{\omega}; \Omega)$ and hence justifies that $\partial\rho_F^{\Omega}(\bar{x})\subset N(\bar{\omega};\Omega)$. Taking into account that $\partial\rho_F^{\Omega}(\bar{x}) \subset \partial \rho_F(\bar{x}-\bar{\omega})$ by Proposition~\ref{prop:subdifferential_estimation}, we arrive at the inclusion ``$\subset$" in \eqref{eq:subdifferential_exact_approximation}.

It remains to verify the reverse inclusion ``$\supset$" in \eqref{eq:subdifferential_exact_approximation}, which would follow from 
\begin{equation}\label{sub-proj2}
\partial \rho_F(\bar{x} - \bar{\omega}) \cap N(\bar{\omega}; \Omega) \subset \partial \rho_F(\bar{x} - \bar{\omega}) \cap N(\bar{x}; \Omega_r) = \partial \rho_F^{\Omega}(\bar{x}).
\end{equation}
To get \eqref{sub-proj2}, it suffices to show that \(v \in \partial \rho_F(\bar{x} - \bar{\omega}) \cap N(\bar{\omega}; \Omega)\) yields \(v \in N(\bar{x}; \Omega_r)\). To proceed with the latter, pick any \(x \in \Omega_r\) and recall that  $\Pi_F(x;\Omega)\neq \emptyset$ by Proposition~\ref{prop:generalized_projection_nonempty}. Then find $f\in F$ and take $x=\omega+tf$ with $w\in \Pi_F(x;\Omega)$ and $t\le r$. Since \(v \in \partial \rho_F(\bar{x} - \bar{\omega}) = \partial \rho_F^{\{0\}}(\bar{x})\), we get \(v \in \Upsilon\) and \(\langle v, f \rangle \le \sigma_F(v) \le 1\). Therefore,
\[
\begin{split}
\langle v, x - \bar{x} \rangle &= \langle v, \omega + tf - \bar{x} \rangle \\
&= t \langle v, f \rangle + \langle v, \omega - \bar{\omega} \rangle + \langle v, \bar{\omega} - \bar{x} \rangle \\
&\le t + \langle v, \omega - \bar{\omega} \rangle + \langle v, \bar{\omega} - \bar{x} \rangle \\
&\le r + \langle v, \omega - \bar{\omega} \rangle + \langle v, \bar{\omega} - \bar{x} \rangle.
\end{split}
\]
Since \(v \in N(\bar{\omega}; \Omega)\), it follows that \(\langle v, \omega - \bar{\omega} \rangle \le 0\), and \(v \in \partial \rho_F(\bar{x} - \bar{\omega})\) implies that
\[
\langle v, \bar{\omega} - \bar{x} \rangle = \langle v, -(\bar{x} - \bar{\omega}) \rangle \le \rho_F(0) - \rho_F(\bar{x} - \bar{\omega}) = -\rho_F^{\Omega}(\bar{x})=-r.
\]
Thus for all \(x \in \Omega_r\), we get \(\langle v, x - \bar{x} \rangle \le 0\) meaning that \(v \in N(\bar{x}; \Omega_r)\). This brings us to \eqref{sub-proj2} and therefore completes the proof of the theorem.
\end{proof}

The following observation provides a property of \( \rho_F \) used in the proof of Theorem~\ref{theorem:MSMG_properties}.\vspace*{0.05in}

\begin{lemma}\label{lemma:Minkowski_of_F_and_negative_F}
Let \( F \subset \mathbb{R}^q \) be a closed and convex set with \( 0 \in \operatorname{int}\,F \). Then the Minkowski gauge function enjoys the property $\rho_F(x) = \rho_{-F}(-x)$ for all \( x \in \mathbb{R}^q \).
\end{lemma}
\begin{proof}
Denoting  \( t := \rho_F(x) \), we get from the definition of the Minkowski gauge~\eqref{eq:minkowski_definition} that \( x \in tF \). This yields \( -x \in t(-F) \), which ensures that \( \rho_{-F}(-x) \le t = \rho_F(x) \). Conversely, letting \(s := \rho_{-F}(-x) \)  gives us \( x \in dF \), which tells us that \( \rho_F(x) \le s = \rho_{-F}(-x) \). Combining both inequalities verifies the claimed result. 
\end{proof}

Now we are ready to establish a precise  relationship between the set-based Minkowski gauge function \( \rho_F^{\Omega} \) and the \emph{minimal time function} \( T^F_{\Omega} \), defined in \parencite{mordukhovich2023easy} by
\begin{equation}\label{eq:minimal_time_function}
T^F_{\Omega}(x) := \inf\big\{ t \ge 0\;\big|\;(x + tF) \cap \Omega \neq \emptyset\big\}, \quad x \in \mathbb{R}^q.
\end{equation}

\begin{lemma}\label{lemma:relationship_SMG_MinimalTime} Let $F\subset \mathbb{R}^q$ be a closed and convex set with $0 \in \textnormal{int}\,F$, and let $\Omega \subset \mathbb{R}^q$ be an arbitrary nonempty set. Then we have the equality $\rho_{-F}^{\Omega}(x)=T^F_{\Omega}(x)$ for $x \in \mathbb{R}^q$.
\end{lemma}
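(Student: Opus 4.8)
The plan is to show that the two index sets over which the infima are taken coincide, from which equality of the infima (including the convention that $\inf\emptyset = +\infty$) is immediate. Concretely, I would define
\[
A := \{ t \ge 0 \mid x \in \Omega + t(-F) \}, \qquad B := \{ t \ge 0 \mid (x + tF) \cap \Omega \neq \emptyset \},
\]
so that $\rho_{-F}^{\Omega}(x) = \inf A$ and $T^F_{\Omega}(x) = \inf B$, and then prove $A = B$.

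For the inclusion $A \subseteq B$: fix $t \in A$. Then $x \in \Omega + t(-F)$, so there exist $\omega \in \Omega$ and $f \in F$ with $x = \omega - tf$. Rearranging gives $\omega = x + tf \in x + tF$, and since $\omega \in \Omega$ as well, we have $\omega \in (x + tF)\cap\Omega$, so this intersection is nonempty and $t \in B$. For the reverse inclusion $B \subseteq A$: fix $t \in B$ and pick $\omega \in (x + tF)\cap\Omega$. Then $\omega = x + tf$ for some $f \in F$, hence $x = \omega - tf = \omega + t(-f) \in \Omega + t(-F)$, so $t \in A$. This establishes $A = B$.

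Finally, taking infima over the equal sets $A$ and $B$ yields $\rho_{-F}^{\Omega}(x) = T^F_{\Omega}(x)$; in the degenerate case where $A = B = \emptyset$, both quantities equal $+\infty$ by the standing convention, so the identity still holds. I do not anticipate any real obstacle here: the argument is a one-line change of variable $\omega \leftrightarrow x + tf$ between the ``reach $x$ from $\Omega$'' and ``reach $\Omega$ from $x$'' viewpoints, and no topological hypotheses on $F$ or $\Omega$ beyond nonemptiness are actually needed for this particular equivalence (the closedness, convexity, and $0 \in \operatorname{int} F$ assumptions are inherited from the ambient setting but are not invoked in the proof).
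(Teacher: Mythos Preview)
Your proof is correct and is in fact more direct than the paper's. The paper does not argue at the level of the defining index sets; instead it routes through the infimum representation. Concretely, the paper first invokes Proposition~\ref{prop:relationship_between_set_based_and_Minkowski} and Proposition~\ref{prop:SMG_with_negative_F} to obtain
\[
\rho_{-F}^{\Omega}(x)=\inf_{\omega \in \Omega}\rho_F^{\{x\}}(\omega)=\inf_{\omega \in \Omega}\rho_F(\omega-x),
\]
and then cites an external result (Theorem~6.19 in \parencite{mordukhovich2023easy}) for the identity $T_{\Omega}^{F}(x)=\inf_{\omega \in \Omega}\rho_F(\omega-x)$, concluding by matching the two expressions. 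Your argument bypasses all of this by showing directly that the sets $A=\{t\ge 0\mid x\in\Omega+t(-F)\}$ and $B=\{t\ge 0\mid (x+tF)\cap\Omega\neq\emptyset\}$ coincide via the substitution $\omega=x+tf$. This is entirely self-contained, avoids the two auxiliary propositions and the external citation, and, as you correctly note, does not actually use closedness, convexity, or $0\in\operatorname{int}F$. The paper's route has the minor advantage of situating the lemma within its broader calculus for $\rho_F$, but your approach is the cleaner proof of the stated identity.
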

\begin{proof}
It follows from Propositions~\ref{prop:relationship_between_set_based_and_Minkowski} and \ref{prop:SMG_with_negative_F} that
\[\rho_{-F}^{\Omega}(x)=\inf_{\omega \in \Omega} \rho_F^{\{x\}}(\omega)=\inf_{\omega \in \Omega} \rho_F(\omega-x).\]
By Theorem 6.19 in \parencite{mordukhovich2023easy} tells us  that $T_{\Omega}^{F}(x)=\inf_{\omega \in \Omega} \rho_F (\omega-x)$, which justifiers the claimed relationship.
\end{proof}

Finally in this section, we establish a precise relationship between the MSMG function \eqref{eq:MSMG_function} of our interest and the \emph{maximal time function} $C_{\Omega}^F$, which has been studied for some versions of the generalized Fermat--Torricelli and generalized Sylvester models in \parencite{nam2013generalized,nam2013nonsmooth_maximal_time_function} being defined by\vspace*{0.03in}
\begin{equation}\label{eq:Maximal_time_function_definition}
C_{\Omega}^F(x) := \inf\big\{t \ge 0 \;\big|\; \Omega \subset x + tF\big\}, \quad x \in \mathbb{R}^q.
\end{equation} 

\begin{lemma}\label{lemma:relationship_MSMG_Maximal_Time_Function} Let $F \subset \mathbb{R}^q$ be a closed, bounded, and convex set with $0 \in \operatorname{int}\,F$, and let $\Omega \subset \mathbb{R}^q$ be a nonempty bounded set. Then we have $\bar{\rho}_{-F}^{\Omega}(x) = C_{\Omega}^F(x)$ for all $x \in \mathbb{R}^q$.
\end{lemma}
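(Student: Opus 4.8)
The plan is to show that the two constraint sets appearing in the defining infima of $\bar{\rho}_{-F}^{\Omega}(x)$ and $C_{\Omega}^F(x)$ are literally the same, so that their infima agree. By definition~\eqref{eq:MSMG_function},
\[
\bar{\rho}_{-F}^{\Omega}(x)=\inf\{\,t\ge 0 \mid x-\Omega \subseteq t(-F)\,\},
\]
while by~\eqref{eq:Maximal_time_function_definition},
\[
C_{\Omega}^F(x)=\inf\{\,t\ge 0 \mid \Omega \subseteq x+tF\,\}.
\]
First I would fix $t\ge 0$ and record the chain of equivalences
\[
x-\Omega\subseteq t(-F)\iff x-\Omega\subseteq -(tF)\iff \Omega-x\subseteq tF\iff \Omega\subseteq x+tF,
\]
where the middle equivalence multiplies the inclusion by $-1$ and the last one adds $x$ to both sides. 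Consequently $\{t\ge 0\mid x-\Omega\subseteq t(-F)\}=\{t\ge 0\mid \Omega\subseteq x+tF\}$, and taking the infimum over identical subsets of $[0,\infty)$ yields $\bar{\rho}_{-F}^{\Omega}(x)=C_{\Omega}^F(x)$ for every $x\in\mathbb{R}^q$.

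An alternative, slightly less elementary route would invoke Proposition~\ref{prop:relationship_MSMG_Minkowski} to write $\bar{\rho}_{-F}^{\Omega}(x)=\sup_{\omega\in\Omega}\rho_{-F}(x-\omega)$, then use Lemma~\ref{lemma:Minkowski_of_F_and_negative_F} to replace $\rho_{-F}(x-\omega)$ by $\rho_F(\omega-x)$; one would then need the ``maximal'' analogue of Theorem 6.19 in \parencite{mordukhovich2023easy}, namely $C_{\Omega}^F(x)=\sup_{\omega\in\Omega}\rho_F(\omega-x)$. Since that identity is not recorded in the present excerpt, I would favor the direct set–inclusion argument above, which needs nothing beyond the two definitions.

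There is no real obstacle here; the only point deserving a line of care is the finiteness of the infima, and it is the same for both sides. Because $\Omega$ is bounded and $0\in\operatorname{int}F$, there exist $\epsilon>0$ with $\epsilon\mathbb{B}\subseteq F$ and $R>0$ with $\Omega-x\subseteq R\mathbb{B}$, so $\Omega-x\subseteq (R/\epsilon)F$; hence the constraint set $\{t\ge 0\mid \Omega\subseteq x+tF\}$ (equivalently $\{t\ge 0\mid x-\Omega\subseteq t(-F)\}$) is nonempty, both infima are finite reals, and the displayed equality of the constraint sets immediately delivers the claimed equality of their infima. Note that no convexity of $\Omega$ is used, consistent with the hypotheses of the lemma.
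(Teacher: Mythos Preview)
Your direct set-inclusion argument is correct and complete. The chain of equivalences is valid, and the remark on finiteness is a nice touch (though not strictly needed for the infima to agree, since the infimum over an empty set is $+\infty$ either way).

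The paper in fact takes precisely your ``alternative route'': it invokes Proposition~\ref{prop:relationship_MSMG_Minkowski} to write $\bar{\rho}_{-F}^{\Omega}(x)=\sup_{\omega\in\Omega}\rho_{-F}(x-\omega)$, then proves (via an $\epsilon$-argument, essentially redoing Lemma~\ref{lemma:Minkowski_of_F_and_negative_F}) that this equals $\sup_{\omega\in\Omega}\rho_F(\omega-x)$, and finally cites an external reference (Proposition~1 in \parencite{mordukhovich2013smallest}) for the identity $C_{\Omega}^F(x)=\sup_{\omega\in\Omega}\rho_F(\omega-x)$. Your approach is strictly more elementary and self-contained: it needs no prior propositions, no $\epsilon$-arguments, and no outside citation---just the two definitions and three lines of set algebra. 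The paper's route has the mild advantage of displaying the sup-representation explicitly along the way, but for the bare equality claimed in the lemma your argument is cleaner.
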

\begin{proof}
It follows from Proposition~\ref{prop:relationship_MSMG_Minkowski} that $\bar{\rho}_{-F}^{\Omega}(x)=\sup_{\omega \in \Omega} \rho_{-F}(x-\omega)$. We aim to show that 
\begin{equation}\label{max-time}
\sup_{\omega \in \Omega} \rho_{-F}(x-\omega)=\sup_{\omega \in  \Omega} \rho_{F}(\omega-x).
\end{equation}
To proceed, take $\epsilon>0$ and $t:=\sup_{\omega \in \Omega} \rho_{-F}(x-\omega)-\epsilon$ with $t>0$. There exists $\omega \in \Omega$ such that $x-\omega \notin t(-F)$, which gives us $\omega-x \notin tF$. Therefore, $t<\rho_{F}(\omega-x)\le \sup_{\omega \in \Omega} \rho_{F}(\omega-x)$ and
\begin{equation*}
\big[\sup_{\omega \in \Omega} \rho_{-F}(x-\omega)-\epsilon< \sup_{\omega \in \Omega} \rho_{F}(\omega-x)\big]\Longrightarrow\big[\sup_{\omega \in \Omega} \rho_{-F}(x-\omega) \le \sup_{\omega \in \Omega} \rho_{F}(\omega-x)\big]
\end{equation*}
as $\ve\dn 0$ while justifying the inequality ``$\le$ in \eqref{max-time}. Similarly to the above,  we can derive the reverse inequality  in \eqref{max-time}. Having this in hand and using \parencite[Proposition~1]{mordukhovich2013smallest} give us $C_{\Omega}^{F}(x)=\sup_{\omega \in \Omega} \rho_F(\omega-x)$ and leads us therefore to
\begin{equation*}
\bar{\rho}_{-F}^{\Omega}(x)=\sup_{\omega \in \Omega} \rho_{-F}(x-\omega) = \sup_{\omega \in \Omega} \rho_{F}(\omega-x)= C_{\Omega}^{F}(x),
\end{equation*}
which verifies \eqref{max-time} and thus completes the proof.
\end{proof}

\printbibliography

\end{document}